\documentclass{article}


\usepackage[final]{neurips_2025}




\usepackage[utf8]{inputenc} 
\usepackage[T1]{fontenc}    
\usepackage{hyperref}       
\usepackage{url}            
\usepackage{booktabs}       
\usepackage{amsfonts}       
\usepackage{nicefrac}       
\usepackage{microtype}      
\usepackage{xcolor}         

\usepackage{amsmath}
\usepackage{amssymb}
\usepackage{mathtools}
\usepackage{amsthm}

\usepackage{enumerate}

\usepackage{graphicx}
\usepackage{subcaption}
\usepackage{caption}

\usepackage{algorithm}    
\usepackage{algorithmic} 
\theoremstyle{plain}
\newtheorem{theorem}{Theorem}[section]
\newtheorem{proposition}[theorem]{Proposition}
\newtheorem{lemma}[theorem]{Lemma}

\theoremstyle{definition}
\newtheorem{definition}[theorem]{Definition}
\newtheorem{assumption}[theorem]{Assumption}
\theoremstyle{remark}
\newtheorem{remark}[theorem]{Remark}

\newcommand{\E}{\mathbb{E}}

\newcommand{\R}{\mathbb{R}}
\newcommand{\I}{\mathcal{I}}

\newcommand{\norm}[1]{\left\lVert#1\right\rVert}
\DeclareMathOperator{\Cov}{Cov}

\title{Unveiling m-Sharpness Through the Structure of Stochastic Gradient Noise
}

%

\author{
\textbf{Haocheng Luo} \quad
\textbf{Mehrtash Harandi} \quad
\textbf{Dinh Phung} \quad
\textbf{Trung Le} \\
Monash University, Australia \\
\texttt{\{haocheng.luo, mehrtash.harandi, dinh.phung, trunglm\}@monash.edu}
}

\usepackage{amssymb}

\makeatletter
\DeclareRobustCommand\onedot{\futurelet\@let@token\@onedot}
\def\@onedot{\ifx\@let@token.\else.\null\fi\xspace}

\makeatother

\begin{document}

\maketitle

\begin{abstract}
Sharpness-aware minimization (SAM) has emerged as a highly effective technique to improve model generalization, but its underlying principles are not fully understood. We investigate m-sharpness, where SAM performance improves monotonically as the micro-batch size for computing perturbations decreases, a phenomenon critical for distributed training yet lacking rigorous explanation. We leverage an extended Stochastic Differential Equation (SDE) framework and analyze stochastic gradient noise (SGN) to characterize the dynamics of SAM variants, including n-SAM and m-SAM. Our analysis reveals that stochastic perturbations induce an implicit variance-based sharpness regularization whose strength increases as m decreases. Motivated by this insight, we propose Reweighted SAM (RW-SAM), which employs sharpness-weighted sampling to mimic the generalization benefits of m-SAM while remaining parallelizable. Comprehensive experiments validate our theory and method. Code is available at \url{https://github.com/RitianLuo/RW-SAM}.
\end{abstract}

\section{Introduction}
\label{introduction}
In machine learning, gradient-based optimization algorithms aim to minimize the following loss function:
\begin{equation}
    \min_{x \in \mathbb{R}^d}\; f(x) := \frac{1}{n} \sum_{i=1}^n f_i(x),
\end{equation}
where \( x \in \mathbb{R}^d \) denotes the parameter, \( f_i(x) \) represents the loss on the \( i \)-th sample, \( i \) ranges from \( 1 \) to \( n \), and \( n \) is the size of the training set. We primarily focus on stochastic algorithms, where at each step $k$, an index set $\gamma_k$ of fixed cardinality $|\gamma|$ is sampled uniformly at random. We denote the mini-batch loss by $f_{\gamma_k}(x)=\frac{1}{|\gamma|}\sum_{i\in \gamma_k} f_i(x)$. 

We investigate the recently proposed Sharpness-Aware Minimization (SAM) \citep{Foret2021}, which has achieved remarkable success in various application domains \citep{Foret2021,kwon2021asam,kaddour2022flat,liu2022towards,qu2022generalized,wang2024efficient,nguyen2024agnostic,hoang2025sharpness,singh2025avoiding}. 
It seeks flat minima by minimizing the perturbed loss
\(
\min_{x\in\mathbb{R}^d} f(x + \rho\, \epsilon^*(x)),
\)
where the perturbation $\epsilon^*(x)$ is defined as the solution to
\begin{equation}\label{sam perturbation}
\max_{\|\epsilon\|\le 1} \langle \nabla f(x), \epsilon \rangle,
\end{equation}
which admits the closed-form expression
\(
\epsilon^*(x) = \nabla f(x)/\|\nabla f(x)\|.
\)
Here $\rho>0$ is a hyperparameter controlling the perturbation radius. The algorithm, referred to as \textit{n-SAM}, computes its perturbation using the full‐batch gradient.
Its update at iteration \(k\) is
\begin{equation}\label{n-sam}
x_{k+1}
= x_k
- \eta\,\nabla f_{\gamma_k}\!\Bigl(x_k + \rho\,\tfrac{\nabla f(x_k)}{\|\nabla f(x_k)\|}\Bigr),
\end{equation}
where \(\eta>0\) is the learning rate, \(\rho>0\) the perturbation radius.

Calculating the perturbation on the entire training dataset at each step is prohibitively expensive. Therefore, \cite{Foret2021} suggest estimating the perturbation using a mini-batch, resulting in SAM commonly used in practice. We refer to the practical SAM algorithm as \textit{mini-batch SAM} to distinguish it from other variants. The update rule can be summarized:
\begin{equation}\label{batch-sam}
    x_{k+1} = x_k - \eta  \nabla f_{\gamma_k} \left( x_k + \rho \frac{ \nabla f_{\gamma_k}(x_k)}{\| \nabla f_{\gamma_k}(x_k)\|} \right).
\end{equation}
Interestingly, it has been observed that although mini-batch SAM was proposed as a computationally efficient variant, it exhibits remarkable generalization ability. In contrast, the original n-SAM \eqref{n-sam} offers little to no improvement in generalization \citep{Foret2021,andriushchenko2022towards}.

\textbf{m-SAM and m-sharpness.}
\textit{m-SAM} refers to dividing a mini-batch of data into disjoint micro-batches of size $m$, and independently computing perturbations and gradients for each micro-batch, which are then combined to update the parameters. It has been widely observed that the practical performance of m-SAM improves monotonically as $m$ decreases, a phenomenon known as \textit{m-sharpness} \citep{Foret2021,behdin2022improved,andriushchenko2022towards}. It is worth noting that when $m$ is smaller than the batch size, the perturbation must be computed sequentially across micro-batches, which cannot be parallelized and therefore introduces substantial additional computational overhead, although smaller $m$ typically leads to improved generalization performance.

The update rule for m-SAM can be written as:
\begin{equation}\label{m-sam}
    x_{k+1} = x_k -  \frac{\eta m}{|\gamma|}\sum_{\substack{\I_j \subset \gamma_k,\ |\I_j| = m}} \nabla f_{\I_j} \left( x_k + \rho \frac{ \nabla f_{\I_j}(x_k)}{\|\nabla f_{\I_j}(x_k)\|} \right),
\end{equation}
where \(\mathcal{I}_j\) are disjoint subsets of \(\gamma_k\), each with size \(m\). 

In synchronous data-parallel (multi-GPU) training with $D$ devices and per-device batch size $b$, SAM is typically implemented by computing the perturbation \emph{locally} on each device using its own data and then aggregating the perturbed gradients across devices; this implementation is \emph{exactly} an instance of m-SAM with $m=b$.
The local-perturbation design avoids an additional cross-device synchronization in the inner perturbation step (only the outer gradient aggregation requires all-reduce), thereby reducing per-step communication overhead. The empirical observation of m-sharpness has thus become a practical cornerstone for deploying SAM at scale.

To provide a theoretical explanation for m-sharpness, we extend the recent Stochastic Differential Equation (SDE) framework of \citet{li2019stochastic,compagnoni2023sde,luo2025explicit} by jointly tracking both $\eta$ and $\rho$ to arbitrary expansion orders, providing a unified basis for analyzing SAM and its variants. Under this framework, we derive closed-form drift terms for three unnormalized SAM (USAM) variants, including n-USAM, mini-batch USAM, and m-USAM, revealing how stochastic gradient noise (SGN) drives implicit sharpness regularization and closely correlates with generalization performance. We further extend our analysis to the three normalized (vanilla) SAM variants and observe a similar noise-induced pattern, albeit without closed-form solutions. Motivated by our theory, we propose a sample‐reweighting method that uses the magnitude of the SGN as an importance measure.

Our contributions are threefold:
\begin{itemize}
  \item We develop an extended SDE framework that simultaneously tracks both $\eta$ and $\rho$ to arbitrary orders, and use it to derive SDEs with controllable error terms for n-USAM/SAM, mini-batch USAM/SAM, and m-USAM/SAM.
  \item We provide a theoretical explanation of the m-sharpness phenomenon, showing how SGN induces an implicit variance-regularization term in the drift, whose strength increases as $m$ decreases. We further present empirical evidence demonstrating that this effect is strongly correlated with generalization performance.
    \item We introduce \emph{Reweighted SAM (RW-SAM)}, an adaptive reweighting mechanism that assigns larger weights to samples with higher SGN magnitudes, thereby strengthening implicit sharpness regularization; its superior generalization is confirmed through comprehensive experiments.
\end{itemize}

\section{Related Work}
\label{Rela}
\textbf{Sharpness-Aware Minimization.} Sharpness-Aware Minimization \citep{Foret2021} has attracted increasing attention due to its consistent improvements in generalization across a wide range of tasks. A growing body of work has been devoted to analyzing and enhancing SAM, including investigations into its generalization principles \citep{andriushchenko2022towards,mollenhoff2022sam,Wen2022,wen2023sharpness,agarwala2023sam,springer2024sharpness,luo2025explicit} and convergence properties \citep{khanh2024fundamental,oikonomou2025sharpness}, exploring its applications in various domains, and developing algorithmic variants to further improve both generalization \citep{kwon2021asam,zhuang2022surrogate,liu2022random,kim2022fisher,nguyen2023optimal,nguyen2023flat,li2024reweighting,wu2024cr,tahmasebi2024universal,truong2024improving,li2024friendly,li2025vassovariancesuppressionsharpnessaware,phan2025beyond} and computational efficiency \citep{du2021efficient,liu2022towards,du2022sharpness,mordido2023lookbehind,tan2024sharpness,xie2024sampa}.

\textbf{m-sharpness.} m-sharpness has long been a mysterious phenomenon in the field of SAM-related research and was first introduced in the original work of \cite{Foret2021}. They observed that although SAM theoretically aims to minimize the perturbed loss over the entire training set, its computationally efficient variant, mini-batch SAM, which computes perturbations only at the mini-batch level, outperforms n-SAM, which applies perturbations at the full-batch level. More generally, the generalization performance of m-SAM improves monotonically as $m$ decreases. This phenomenon was further confirmed through extensive experiments in a single-GPU setting by \cite{andriushchenko2022towards}, and in a multi-GPU setting by \cite{behdin2022improved}. Although \cite{andriushchenko2022towards} proposed several hypotheses to explain it, they were later invalidated by their own experiments, and the underlying cause of this phenomenon remains an open question. It is important to note that our definition of m-sharpness follows the original work of \cite{Foret2021} and the pioneering contributions of \cite{andriushchenko2022towards}. Some studies have adopted different definitions. For example, \cite{Wen2022} refer to the deterministic algorithm as n-SAM and SAM with a batch size of \(1\) as 1-SAM, whereas \cite{behdin2022improved} define m as the number of divided micro-batches.

\textbf{Structure of stochastic gradient noise.}  In expectation, a widely accepted assumption is that the stochastic gradient serves as an unbiased estimator of the full-batch gradient \citep{jastrzkebski2017three,zhu2018anisotropic,haochen2021shape,ziyin2021strength}. Regarding the covariance of SGN, \cite{simsekli2019tail} assumed it to be isotropic. However, this view was later challenged by \cite{xie2020diffusion} and \cite{li2021validity}, who argued that \cite{simsekli2019tail} were actually analyzing gradient noise across different iterations rather than noise arising from mini-batch sampling. They provided extensive evidence supporting the idea that the latter can be well modeled as a multivariate Gaussian variable with an anisotropic/parameter-dependent covariance structure. Furthermore, \cite{xie2023overlooked} conducted statistical tests on the Gaussianity to support this perspective.

\section{Theory}
\label{theory}  
\subsection{Notation and assumption}
\label{Preliminaries}
In this paper, we denote by $\|\cdot\|$ the Euclidean norm, and the expectation operator $\E$ is taken with respect to the random index set unless otherwise stated. We assume that the stochastic gradient is an unbiased estimator of the full gradient and possesses a finite second moment.
\begin{assumption}\label{assumption:gaussian}
We assume that sampling an index $i$ uniformly at random yields i.i.d. stochastic gradients
\[
  \nabla f_i(x) = \nabla f(x) + \xi_i(x),
\]
where $\nabla f(x) := \frac{1}{n}\sum_{i=1}^n\nabla f_i(x)$ is the full gradient and $\xi_i(x)$ denotes the SGN. We further assume
\[
  \E\bigl[\xi_i(x)\bigr] = 0,
  \quad
  \Cov\bigl(\xi_i(x)\bigr) = V(x),
\]
where
\[
  V(x) := \frac{1}{n}\sum_{i=1}^n \nabla f_i(x)\,\nabla f_i(x)^\top
       - \nabla f(x)\,\nabla f(x)^\top.
\]
\end{assumption}

An important consequence of this assumption is $\mathbb{E}\bigl\|\nabla f_i(x)\bigr\|^2
  = \bigl\|\nabla f(x)\bigr\|^2
    + \operatorname{tr}\bigl(V(x)\bigr),$ which we will use repeatedly.

\subsection{Overview of two‐parameter approximation}
We extend the existing SDE framework for SAM \citep{compagnoni2023sde,luo2025explicit}, which can only track $\eta$ to order 1, while ours can jointly track two parameters $\eta$ and $\rho$ to arbitrary expansion orders, thereby decoupling their convergence rates and enabling precise control of the overall approximation error. Specifically, $\eta$ governs the higher‐order terms in Dynkin’s formula, while $\rho$ captures the remainder term arising from the Taylor expansion (see the detailed formulations in Appendix~\ref{app:section new general theory}). By employing a Dynkin expansion instead of a full Itô–Taylor expansion, it avoids the proliferation of terms and provides a streamlined approach to controlling the remainder error in the two-parameter setting. Another major advantage of this approach is that it allows us to let $\eta$ and $\rho$ tend to zero at independent rates, rather than being constrained to a fixed ratio as in the work of \cite{compagnoni2023sde} and \cite{luo2025explicit}. The definition of a two‐parameter weak approximation of order \((\alpha,\beta)\) is as follows.

\begin{definition}[Two‐parameter weak approximation]
Let $T>0$, $0<\eta<1$, $0<\rho<1$ and set $N=\lfloor T/\eta\rfloor$. Let 
\[
\{x_k\}_{k=0}^N
\quad\text{and}\quad
\{X_t\}_{t\in[0,T]}
\]
be a discrete‐time and a continuous‐time stochastic process, respectively.  We say that $X_t$ is an \emph{order-} $(\alpha,\beta)$ \emph{weak approximation} of $x_k$ if, for every $g\in G^{\alpha+1}$, there exists a constant $C>0$, independent of $\eta,\rho$, such that
\[
\max_{0\le k\le N}
\Bigl|\E\bigl[g(X_{k\eta})\bigr]\;-\;\E\bigl[g(x_k)\bigr]\Bigr|
\;\le\;
C\bigl(\eta^{\alpha} \;+\;\rho^{\beta+1}\bigr).
\]
\end{definition}

\subsection{SDE approximation for USAM variants}\label{sec:usam}
We begin by considering USAM \citep{andriushchenko2022towards,compagnoni2023sde,dai2024crucial,zhou2024sharpness}, which is widely studied as a theoretically friendly variant of SAM and can achieve comparable performance to SAM in practice. The update rules for the different variants of the USAM algorithm are shown below. We will see that for USAM, all expressions in the drift term are in closed form, providing an intuitive understanding. In Section \ref{section:sam}, we will extend our conclusions to the standard SAM.
\begin{align}
\text{mini‐batch USAM:}\quad
x_{k+1} &= x_k - \eta \,\nabla f_{\gamma_k}\bigl(x_k + \rho\,\nabla f_{\gamma_k}(x_k)\bigr)
\label{batch-usam}\\[1ex]
\text{n‐USAM:}\quad
x_{k+1} &= x_k - \eta \,\nabla f_{\gamma_k}\bigl(x_k + \rho\,\nabla f(x_k)\bigr)
\label{n-usam}\\[1ex]
\text{m‐USAM:}\quad
x_{k+1} &= x_k - \frac{\eta\,m}{|\gamma|}\sum_{\substack{\I_j\subset\gamma_k,|\I_j|=m}}
\nabla f_{\I_j}\bigl(x_k + \rho\,\nabla f_{\I_j}(x_k)\bigr)
\label{m-usam}
\end{align}
\begin{theorem}[Mini-batch USAM SDE - informal statement of Theorem \ref{app:usam sde theorem new}, adapted from Theorem 3.2 of \citet{compagnoni2023sde}]\label{theorem:usam sde}
Under Assumption~\ref{assumption:gaussian} and mild regularity conditions, 
the solution of the following SDE \eqref{usam-sde} is an order-$(1,1)$ weak approximation of the discrete update of mini-batch USAM \eqref{batch-usam} with batch size \( |\gamma| \):
\begin{equation}\label{usam-sde}
dX_t=-\nabla\biggl( f(X_t) 
+ \underbrace{\frac{\rho}{2}\|\nabla f(X_t)\|^2+\frac{\rho}{2|\gamma|} \mathrm{tr}(V(X_t))}_{\text{implicit regularization}} \biggr)dt+\sqrt{\eta\Sigma^{USAM}(X_t)}dW_t.
\end{equation}
\end{theorem}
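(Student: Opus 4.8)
The plan is to establish the weak approximation via the standard one-step moment-matching strategy, now carried out jointly in the two small parameters $\eta$ and $\rho$, and then to invoke the general two-parameter approximation theorem of Appendix~\ref{app:section new general theory} to pass from local (one-step) error control to the global bound $C(\eta + \rho^2)$. Writing $\Delta := x_{k+1}-x_k = -\eta\,\nabla f_{\gamma_k}(x_k + \rho\,\nabla f_{\gamma_k}(x_k))$, the first move is a Taylor expansion of the perturbed gradient in $\rho$: $\nabla f_{\gamma_k}(x_k + \rho\,\nabla f_{\gamma_k}(x_k)) = \nabla f_{\gamma_k}(x_k) + \rho\,\nabla^2 f_{\gamma_k}(x_k)\nabla f_{\gamma_k}(x_k) + R$, with a remainder $R = O(\rho^2)$ that I would bound uniformly using the assumed smoothness and polynomial-growth regularity on the $f_i$ and their derivatives.

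Next I would compute the conditional first moment $\E[\Delta\mid x_k]$. The zeroth-order term contributes $-\eta\,\E[\nabla f_{\gamma_k}(x_k)] = -\eta\,\nabla f(x_k)$ by unbiasedness. The crucial computation is the $\rho$-term: using the identity $\nabla^2 f_{\gamma_k}\,\nabla f_{\gamma_k} = \tfrac12\nabla\|\nabla f_{\gamma_k}\|^2$ together with the second-moment consequence of Assumption~\ref{assumption:gaussian} applied to the batch gradient, namely $\E\|\nabla f_{\gamma_k}(x)\|^2 = \|\nabla f(x)\|^2 + \tfrac{1}{|\gamma|}\mathrm{tr}(V(x))$, I would obtain $\E[\nabla^2 f_{\gamma_k}\nabla f_{\gamma_k}] = \tfrac12\nabla\bigl(\|\nabla f\|^2 + \tfrac{1}{|\gamma|}\mathrm{tr}(V)\bigr)$. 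This yields exactly $\E[\Delta\mid x_k] = -\eta\,\nabla\bigl(f + \tfrac{\rho}{2}\|\nabla f\|^2 + \tfrac{\rho}{2|\gamma|}\mathrm{tr}(V)\bigr)(x_k) + O(\eta\rho^2)$, matching the SDE drift and producing the claimed implicit-regularization term—the factor $1/|\gamma|$ being precisely the signature that distinguishes the mini-batch variant.

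For the second moment I would note that $\Delta = -\eta\,\nabla f_{\gamma_k}(x_k) + O(\eta\rho)$, so $\Cov(\Delta\mid x_k) = \eta^2\,\Cov(\nabla f_{\gamma_k}(x_k)) + O(\eta^2\rho) = \eta^2\,\tfrac{1}{|\gamma|}V(x_k) + \dots$, which identifies the leading part of $\Sigma^{USAM}$ as $\tfrac{1}{|\gamma|}V$ and matches the one-step covariance $\eta\cdot\eta\Sigma^{USAM}$ generated by the diffusion $\sqrt{\eta\Sigma^{USAM}}\,dW_t$. Since $\Delta = O(\eta)$, all third- and higher-order moments are $O(\eta^3)$ and hence negligible at weak order one, and the same holds for the discretized SDE increments. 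Feeding these matched moments into the general theorem then gives that the per-step discrepancy between the two processes is $O(\eta^2 + \eta\rho^2)$, which accumulates over the $N=\lfloor T/\eta\rfloor$ steps (via the telescoping/Dynkin argument internal to that theorem) to the global bound $C(\eta + \rho^2)$, i.e.\ order $(1,1)$.

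I expect the main obstacle to be not the moment computation itself but the uniform control of the remainders as the two parameters are sent to zero at independent rates: one must verify that the Taylor remainder in $\rho$ and the higher-order Dynkin terms in $\eta$ remain separately and jointly bounded along the whole trajectory, so that the accumulated error genuinely factorizes as $\eta + \rho^2$ rather than coupling into a worse joint rate. Establishing the requisite a priori moment bounds on $\{x_k\}$ and on the SDE solution $\{X_t\}$ (to guarantee the polynomial-growth test functions and their derivatives stay integrable) is the technically delicate step, and is exactly where the Dynkin-expansion device of Appendix~\ref{app:section new general theory} does the work of keeping the number of remainder terms manageable.
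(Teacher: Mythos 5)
Your proposal is correct and follows essentially the same route as the paper: a Taylor expansion of the perturbed gradient in $\rho$, identification of the drift via $\E[\nabla^2 f_\gamma\nabla f_\gamma]=\tfrac12\nabla\E\|\nabla f_\gamma\|^2=\tfrac12\nabla\bigl(\|\nabla f\|^2+\mathrm{tr}(V)/|\gamma|\bigr)$ under Assumption~\ref{assumption:gaussian}, covariance matching for the diffusion, and then the two-parameter machinery (Theorem~\ref{theorem general adaption} with Lemmas~\ref{general continuous lemma} and~\ref{general discrete lemma}, plus the moment bounds of Lemma~\ref{lem:moment-bound}) to convert the one-step $O(\eta^2+\eta\rho^2)$ error into the global $C(\eta+\rho^2)$ bound. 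This is exactly the paper's proof sketch for Theorem~\ref{app:usam sde theorem new}, including your closing remark that the Lipschitz/moment-bound verification is where the technical work lies.
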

Based on Eq.~\eqref{usam-sde}, it can be observed that under our Assumption~\ref{assumption:gaussian}, the \textit{implicit regularization} term of mini-batch USAM \eqref{batch-usam} can be divided into the gradient of two parts: the squared norm of the full-batch gradient and the trace of the SGN covariance. The latter, which arises additionally from Theorem 3.2 \citep{compagnoni2023sde}, is due to the use of random batches for perturbation in mini-batch USAM. We will see that if we use a deterministic perturbation, the regularization effect on the SGN covariance will disappear, as stated in the following theorem:

\begin{theorem}[n-USAM SDE - informal statement of Theorem~\ref{app:n-usam sde}]\label{theorem:n-usam sde}
Under Assumption~\ref{assumption:gaussian} and mild regularity conditions, 
the solution of the following SDE \eqref{n-usam-sde} is an order-$(1,1)$ weak approximation of the discrete update of n-USAM \eqref{n-usam} with batch size $|\gamma|$:
\begin{equation}\label{n-usam-sde}
dX_t=-\nabla\bigl( f(X_t)+\frac{\rho}{2}\|\nabla f(X_t)\|^2\bigr)dt+\sqrt{\eta\Sigma^{n-USAM}(X_t)}dW_t.
\end{equation}
\end{theorem}

As observed in Table~\ref{tab:combined-observations} in Appendix~\ref{app:section exp}, USAM and SAM exhibit similar behavior under full-batch perturbations, meaning that n-USAM cannot improve generalization performance like mini-batch USAM. Therefore, we argue that the \textit{sharpness regularization} effect of mini-batch USAM actually stems from the last term of its drift term, which is the gradient of the trace of the SGN covariance, i.e., $\nabla\mathrm{tr}(V(X_t))$.

Having understood that n-USAM lacks the sharpness regularization benefits brought by SGN in the drift term, we analyze another contrasting variant, m-USAM, which uses smaller micro-batches to compute the perturbation. Within the framework of the SDE approximation, we can derive the following theorem.
\begin{theorem}[m-USAM SDE - informal statement of Theorem~\ref{app:m-usam sde}]\label{theorem:m-usam sde}
Under Assumption~\ref{assumption:gaussian} and mild regularity conditions, 
the solution of the following SDE \eqref{m-usam sde} is an order-$(1,1)$ weak approximation of the discrete update of m-USAM \eqref{m-usam}:
\begin{equation}\label{m-usam sde}
dX_t=-\nabla\bigl( f(X_t)+\frac{\rho}{2}\|\nabla f(X_t)\|^2+\frac{\rho}{2m} \mathrm{tr}(V(X_t))\bigr)dt+\sqrt{\frac{m\eta}{|\gamma|}\Sigma^{m-USAM}(X_t)}dW_t.
\end{equation}

\end{theorem}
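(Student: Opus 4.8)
The plan is to follow the same moment-matching and Dynkin-expansion route already used for Theorems~\ref{theorem:usam sde} and \ref{theorem:n-usam sde}, since the continuous-time approximation machinery—the two-parameter weak-approximation framework of Appendix~\ref{app:section new general theory}—is identical; only the one-step conditional moments of the discrete m-USAM update \eqref{m-usam} change. I would therefore reduce the claim to verifying that the first two conditional moments of the increment $\Delta_k := x_{k+1}-x_k$ match those of the SDE \eqref{m-usam sde} over a step of size $\eta$, up to the order-$(1,1)$ remainder $O(\eta^2 + \eta\rho^2)$, while all higher absolute moments are suppressed at the required order.

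First I would compute the conditional mean. Writing $g_{\I}(x) := \nabla f_{\I}(x+\rho\nabla f_{\I}(x))$ and Taylor-expanding in $\rho$ using $\nabla^2 f_{\I}(x)\nabla f_{\I}(x)=\tfrac12\nabla\|\nabla f_{\I}(x)\|^2$ gives $g_{\I}(x)=\nabla f_{\I}(x)+\tfrac{\rho}{2}\nabla\|\nabla f_{\I}(x)\|^2+O(\rho^2)$. Since the $|\gamma|/m$ micro-batches are disjoint, under Assumption~\ref{assumption:gaussian} each $\I_j$ is marginally a uniform size-$m$ sample, so $\E[\nabla f_{\I}(x)]=\nabla f(x)$ and, applying the identity $\E\|\nabla f_i\|^2=\|\nabla f\|^2+\mathrm{tr}(V)$ to the micro-batch mean, $\E\|\nabla f_{\I}(x)\|^2=\|\nabla f(x)\|^2+\tfrac1m\mathrm{tr}(V(x))$. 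Averaging $-\tfrac{\eta m}{|\gamma|}\sum_j g_{\I_j}$ then yields $\E[\Delta_k\mid x_k]=-\eta\,\nabla\bigl(f+\tfrac{\rho}{2}\|\nabla f\|^2+\tfrac{\rho}{2m}\mathrm{tr}(V)\bigr)(x_k)+O(\eta\rho^2)$, exactly the drift of \eqref{m-usam sde}; the factor $1/m$ in front of $\mathrm{tr}(V)$ is precisely what encodes m-sharpness.

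Next I would compute the conditional covariance. Exploiting that disjoint micro-batches are independent under Assumption~\ref{assumption:gaussian}, the cross terms vanish and $\Cov(\Delta_k\mid x_k)=\eta^2\tfrac{m^2}{|\gamma|^2}\sum_j\Cov(g_{\I_j})=\eta^2\tfrac{m}{|\gamma|}\Cov(g_{\I})$, there being $|\gamma|/m$ identically distributed summands. Setting $\Sigma^{m\text{-}USAM}(x):=\Cov(g_{\I}(x))$ (whose leading-$\rho$ behaviour is $\tfrac1m V(x)$) matches the diffusion $\sqrt{\tfrac{m\eta}{|\gamma|}\Sigma^{m\text{-}USAM}}$, whose one-step covariance contribution is $\tfrac{m\eta^2}{|\gamma|}\Sigma^{m\text{-}USAM}$. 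With the two leading moments matched and the third and higher absolute moments of $\Delta_k$ bounded by $O(\eta^3)$ (immediate from boundedness of the per-sample gradients and Hessians under the stated regularity conditions), I would invoke the general two-parameter weak-approximation lemma: a telescoping Dynkin expansion of $\E[g(X_{k\eta})]-\E[g(x_k)]$ against the smooth semigroup $u(x,t)=\E[g(X_t)\mid X_0=x]$ converts the per-step moment discrepancies into the global bound $C(\eta+\rho^{2})$, i.e. order $(1,1)$.

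The main obstacle I anticipate is the bookkeeping of the joint law of the disjoint micro-batches rather than any single estimate: I must justify simultaneously that each micro-batch marginally behaves like an independent uniform size-$m$ draw (producing the $\tfrac{\rho}{2m}\mathrm{tr}(V)$ drift) and that independence across micro-batches collapses the covariance to the $\tfrac{m}{|\gamma|}$-scaled single-micro-batch covariance (the diffusion), all while keeping the $O(\rho^2)$ Taylor remainder and the $O(\eta^2)$ Dynkin remainder uniform in $x$, so that the constant $C$ in the weak-approximation bound is genuinely independent of $\eta$ and $\rho$.
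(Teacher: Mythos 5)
Your proposal is correct and follows essentially the same route as the paper: Taylor expansion of the per-micro-batch perturbed gradient in $\rho$, use of Assumption~\ref{assumption:gaussian} to get $\E\|\nabla f_{\I}(x)\|^2 = \|\nabla f(x)\|^2 + \tfrac{1}{m}\mathrm{tr}(V(x))$ for the drift, independence of disjoint micro-batches to collapse the covariance to $\tfrac{m}{|\gamma|}\Cov(g_{\I})$, and then the two-parameter moment-matching/Dynkin machinery of Appendix~\ref{app:section new general theory} (this is exactly the paper's Lemma~\ref{lemma:m-usam new} combined with Lemmas~\ref{general discrete lemma}, \ref{general continuous lemma} and Theorem~\ref{theorem general adaption}). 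The only pieces you leave implicit that the paper states explicitly are the verification that the SDE admits a unique strong solution (via the Lipschitz conditions and Theorem~\ref{thm:exist-unique-sde}) and the uniform moment bounds on the iterates (Lemma~\ref{lem:moment-bound}), both of which are routine under the stated regularity conditions.
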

It is worth noting that Theorem~\ref{theorem:usam sde} is recovered by setting \(m = |\gamma|\). From this SDE approximation, we see that m-USAM’s advantages arise from two sources. First, it amplifies the sharpness regularization in the drift term: the coefficient on the SGN covariance changes from \(\rho/(2|\gamma|)\) to \(\rho/(2m)\) (with $m<|\gamma|$).
Second, the diffusion term in m-USAM is reduced by a factor of \(m/|\gamma|\) compared to mini-batch USAM with batch size \(m\). Since the diffusion term captures the random fluctuations that counteract implicit regularization, shrinking it enhances the stability of the method.

\subsection{SDE approximation for SAM variants}\label{section:sam}
We now turn to the analysis of (normalized) SAM. With the addition of the normalization factor, the situation becomes much more complex because the expectation of the (unsquared) norm does not have an elementary expression. However, the overall pattern remains similar to the case of USAM. We first present the SDE approximation theorem for SAM variants, then highlight their key differences from the unnormalized version.
\begin{theorem}[Mini-batch SAM SDE - informal statement of Theorem~\ref{app:SAM sde theorem new}, adapted from Theorem 3.5 of \citet{compagnoni2023sde}]\label{theorem:sam sde}
Under Assumption~\ref{assumption:gaussian} and mild regularity conditions, 
the solution of the following SDE \eqref{sam-sde} is an order-$(1,1)$ weak approximation of the discrete update of mini-batch SAM \eqref{batch-sam} with batch size $|\gamma|$:
\begin{equation}\label{sam-sde}
dX_t=-\nabla\bigl( f(X_t)+\frac{\rho}{|\gamma|}\E\| \sum_{i \in \gamma}\nabla f_i(X_t)\|\bigr)dt+\sqrt{\eta\Sigma^{SAM}(X_t)}dW_t.
\end{equation}
\end{theorem}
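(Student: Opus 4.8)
The plan is to reduce the claim to a one-step moment-matching computation and then invoke the general two-parameter weak-approximation machinery of Appendix~\ref{app:section new general theory}, exactly as in the proofs of Theorems~\ref{theorem:usam sde}--\ref{theorem:m-usam sde}. That machinery guarantees that $X_t$ is an order-$(1,1)$ weak approximation of $x_k$ provided that, conditioned on $x_k = x$, the first conditional moment of the increment $\Delta x := x_{k+1}-x_k$ agrees with the SDE drift up to an error of order $\eta(\eta+\rho^2)$, the centered second moment agrees with $\eta^2\Sigma^{SAM}(x)$ up to order $\eta^2(\eta+\rho)$, and the higher moments satisfy the usual uniform-in-$(\eta,\rho)$ bounds. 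Thus it suffices to expand the discrete update \eqref{batch-sam} in $\rho$ and take expectations over the mini-batch $\gamma_k$.

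For the first moment, write $\hat u(x) := \nabla f_{\gamma_k}(x)/\norm{\nabla f_{\gamma_k}(x)}$ and Taylor-expand the outer stochastic gradient in $\rho$:
\[
\nabla f_{\gamma_k}\bigl(x + \rho\,\hat u(x)\bigr)
= \nabla f_{\gamma_k}(x) + \rho\,\nabla^2 f_{\gamma_k}(x)\,\hat u(x) + O(\rho^2).
\]
Taking the conditional expectation over $\gamma_k$ and using unbiasedness (Assumption~\ref{assumption:gaussian}), the zeroth-order term yields $\nabla f(x)$ while the first-order term yields $\rho\,\E\bigl[\nabla^2 f_{\gamma_k}(x)\,\hat u(x)\bigr]$, so that $\E[\Delta x \mid x_k=x] = -\eta\,\nabla f(x) - \eta\rho\,\E[\nabla^2 f_{\gamma_k}\hat u] + O(\eta\rho^2)$.

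The crux is to recognize the $O(\rho)$ term as a gradient. Differentiating under the expectation (justified by the regularity conditions) and applying the chain rule to $\norm{\cdot}$ gives the identity
\[
\nabla_x\,\E\norm{\nabla f_{\gamma_k}(x)}
= \E\Bigl[\tfrac{\nabla^2 f_{\gamma_k}(x)\,\nabla f_{\gamma_k}(x)}{\norm{\nabla f_{\gamma_k}(x)}}\Bigr].
\]
Since $\norm{\nabla f_{\gamma_k}(x)} = \tfrac{1}{|\gamma|}\norm{\sum_{i\in\gamma}\nabla f_i(x)}$, the $O(\rho)$ contribution to the drift is exactly $\rho\,\nabla_x\bigl(\tfrac{1}{|\gamma|}\E\norm{\sum_{i\in\gamma}\nabla f_i(x)}\bigr)$, matching the regularizer in \eqref{sam-sde}. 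This is precisely where normalized SAM departs from USAM: whereas expanding $\E\norm{\nabla f_{\gamma_k}}^2$ splits cleanly into $\norm{\nabla f}^2$ and $\tfrac{1}{|\gamma|}\mathrm{tr}(V)$ (Theorem~\ref{theorem:usam sde}), the expected norm $\E\norm{\nabla f_{\gamma_k}}$ admits no such elementary decomposition and must be carried as is. For the diffusion, the centered second moment of $\Delta x$ equals $\eta^2\Cov(\nabla f_{\gamma_k}(x)) + O(\eta^2\rho)$, which identifies $\Sigma^{SAM}(x)$ to leading order; its $\rho$-dependence is higher order and absorbed into the remainder.

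The main obstacle is the normalization: the map $v \mapsto v/\norm{v}$ is singular at $v=0$ and its derivatives grow like $\norm{v}^{-1}$, so neither the $\rho$-Taylor expansion nor the interchange of $\nabla$ and $\E$ is automatic where the mini-batch gradient is small. Controlling this is the role of the \emph{mild regularity conditions}, which must supply integrability of $\norm{\nabla f_{\gamma_k}(x)}^{-1}$ (or a non-degeneracy/lower-bound condition on the gradient along the trajectory) together with uniform bounds on the Taylor remainder, so that the $O(\rho^2)$ and higher-$\eta$ terms are bounded independently of $(\eta,\rho)$ as the two-parameter definition demands. Once these estimates are in hand, feeding the matched moments into the general theorem of Appendix~\ref{app:section new general theory} closes the argument.
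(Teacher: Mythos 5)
Your proposal is correct and follows essentially the same route as the paper: the paper's own argument (the proof sketch of Theorem~\ref{app:SAM sde theorem new}, carried out in full detail for the sibling results in Lemmas~\ref{lemma:n-sam new} and~\ref{lemma:m‐sam new}) likewise Taylor-expands the update in $\rho$ with $h_0=-\nabla f_\gamma$, $h_1=-\nabla^2 f_\gamma\,\nabla f_\gamma/\|\nabla f_\gamma\|$, identifies $\E[h_1]$ as the gradient of $\frac{1}{|\gamma|}\E\|\sum_{i\in\gamma}\nabla f_i\|$, and feeds the matched one-step moments into the two-parameter framework of Theorem~\ref{theorem general adaption} with Lemmas~\ref{general continuous lemma} and~\ref{general discrete lemma}, using the Lipschitz regularity conditions for well-posedness. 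The only deviation is cosmetic: the paper's formal $\Sigma^{SAM}$ retains the $O(\rho)$ cross-covariance correction $\rho\bigl(\Sigma_{0,1}+\Sigma_{0,1}^\top\bigr)$ while you absorb it into the remainder, which is admissible at order $(1,1)$ since the resulting second-moment mismatch is $O(\eta^2\rho)$ and hence within the $O(\eta^2+\eta\rho^2)$ tolerance of Theorem~\ref{theorem general adaption}.
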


\begin{theorem}[n-SAM SDE - informal statement of Theorem~\ref{app:n-SAM sde}]\label{theorem:n-sam sde}
Under Assumption~\ref{assumption:gaussian} and mild regularity conditions, 
the solution of the following SDE \eqref{n-sam-sde} is an order-$(1,1)$ weak approximation of the discrete update of n-SAM \eqref{n-sam} with batch size $|\gamma|$:
\begin{equation}\label{n-sam-sde}
dX_t=-\nabla\bigl( f(X_t)+\rho\|\nabla f(X_t)\|\bigr)dt
+\sqrt{\eta\Sigma^{n-SAM}(X_t)}dW_t.
\end{equation}
\end{theorem}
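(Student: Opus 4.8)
The plan is to invoke the general two-parameter weak approximation machinery developed in Appendix~\ref{app:section new general theory} (the same engine behind Theorems~\ref{theorem:usam sde} and \ref{theorem:n-usam sde}), which reduces the claim to verifying a short list of one-step moment conditions on the discrete update \eqref{n-sam}. Concretely, writing $\Delta_k := x_{k+1}-x_k$, it suffices to show that the conditional first moment $\E[\Delta_k\mid x_k]$ agrees with $-\eta\,\nabla\bigl(f(x_k)+\rho\|\nabla f(x_k)\|\bigr)$ up to an error of order $\eta\rho^2$ and $\eta^2$, that the conditional second moment identifies the diffusion coefficient $\Sigma^{n-SAM}(x_k)$ up to $o(\eta^2)$, and that the third and higher conditional moments are $O(\eta^3)$ uniformly. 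Matching a Dynkin expansion of $\E[g(X_{k\eta})]$ for the SDE \eqref{n-sam-sde} against the telescoped one-step errors then delivers the order-$(1,1)$ bound.

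The decisive simplification, and the feature distinguishing n-SAM from mini-batch SAM, is that the perturbation direction $\rho\,\nabla f(x_k)/\|\nabla f(x_k)\|$ depends only on the \emph{full} gradient and is therefore deterministic given $x_k$. Hence taking expectation over the random index set and using the unbiasedness in Assumption~\ref{assumption:gaussian} gives the exact identity $\E[\Delta_k\mid x_k]=-\eta\,\nabla f\bigl(x_k+\rho\,\nabla f(x_k)/\|\nabla f(x_k)\|\bigr)$, with no residual noise term of the kind that produces the $\mathrm{tr}(V)$ contribution in \eqref{usam-sde}. A first-order Taylor expansion in $\rho$ then yields
\[
\nabla f\!\Bigl(x_k+\rho\,\tfrac{\nabla f(x_k)}{\|\nabla f(x_k)\|}\Bigr)
=\nabla f(x_k)+\rho\,\frac{\nabla^2 f(x_k)\,\nabla f(x_k)}{\|\nabla f(x_k)\|}+O(\rho^2),
\]
and since $\nabla\|\nabla f(x_k)\|=\nabla^2 f(x_k)\,\nabla f(x_k)/\|\nabla f(x_k)\|$, the leading terms collapse to $\nabla\bigl(f(x_k)+\rho\|\nabla f(x_k)\|\bigr)$, exactly the drift in \eqref{n-sam-sde}. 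This is the normalized analogue of the n-USAM computation, where the unnormalized direction $\rho\nabla f$ instead produces $\tfrac{\rho}{2}\nabla\|\nabla f\|^2$.

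For the diffusion, I would compute the conditional second moment $\E[\Delta_k\Delta_k^\top\mid x_k]=\eta^2\,\E[\nabla f_{\gamma_k}(\cdot)\,\nabla f_{\gamma_k}(\cdot)^\top\mid x_k]$ at the perturbed point; after the mean-outer-product cancels against the squared drift in the order-$(1,1)$ matching, the remaining fluctuation is identified as $\Sigma^{n-SAM}(x)$, the covariance of the stochastic (mini-batch) gradient evaluated at $x+\rho\,\nabla f(x)/\|\nabla f(x)\|$, whose explicit form is recorded in Theorem~\ref{app:n-SAM sde}. The $\rho$-dependence here only enters at higher order and does not affect the $(1,1)$ statement, while higher moments carry an extra factor of $\eta$ per order and are controlled by the finite-second-moment hypothesis together with the regularity assumptions.

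The main obstacle is the non-smoothness introduced by the normalization: the map $x\mapsto\|\nabla f(x)\|$ is only smooth away from critical points, and both the perturbation direction and the regularizer $\rho\|\nabla f\|$ become ill-defined as $\nabla f(x)\to 0$. The ``mild regularity conditions'' must therefore guarantee that the iterates remain in a region where $\|\nabla f\|$ is bounded below by a positive constant, so that the Taylor remainders and the derivatives of $u(x)=\nabla f(x)/\|\nabla f(x)\|$ are bounded uniformly; this is precisely the extra care that the smooth USAM variants do not require, and it is where I expect the bulk of the technical effort to lie.
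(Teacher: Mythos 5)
Your proposal is correct and follows essentially the same route as the paper's proof: the paper's Lemma~\ref{lemma:n-sam new} likewise Taylor-expands $\nabla f_{\gamma}\bigl(x+\rho\,\nabla f(x)/\|\nabla f(x)\|\bigr)$ in $\rho$ with integral remainder, identifies $h_0=-\nabla f_\gamma$ and $h_1=-\nabla^2 f_\gamma\,\nabla f/\|\nabla f\|$, and feeds the resulting one-step moment estimates into the general two-parameter machinery (Theorem~\ref{theorem general adaption}, Lemmas~\ref{general continuous lemma} and~\ref{general discrete lemma}) exactly as you outline. Your observation that the deterministic perturbation direction makes $\E[\Delta_k\mid x_k]$ an exact full-gradient evaluation (hence no $\mathrm{tr}(V)$ term) matches the paper's mechanism, and your concern about the singularity at $\nabla f(x)\to 0$ is legitimate — the paper does not address it explicitly but instead absorbs it into the assumed Lipschitz conditions on $\nabla f^{\mathrm{n\text{-}SAM}}$, $\sqrt{\Sigma^{\mathrm{n\text{-}SAM}}}$, and $h_\gamma$.
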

\begin{theorem}[m-SAM SDE - informal statement of Theorem~\ref{app:m-sam sde}]\label{theorem:m-sam sde}
Under Assumption~\ref{assumption:gaussian} and mild regularity conditions, 
the solution of the following SDE \eqref{eq:SDE-mSAM} is an order-$(1,1)$ weak approximation of the discrete update of m-SAM \eqref{m-sam}:
\begin{equation}
dX_t=-\nabla\bigl( f(X_t)+\frac{\rho}{m}\E\|\sum_{\substack{i \in \I,  |\I| = m}}\nabla f_i(X_t)\|\bigr)dt+\sqrt{\frac{m\eta}{|\gamma|}\Sigma^{m-SAM}(X_t)}dW_t.
\label{eq:SDE-mSAM}
\end{equation}
\end{theorem}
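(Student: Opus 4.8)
The plan is to invoke the two-parameter weak-approximation machinery built in the appendix (the Dynkin-expansion approach), so that proving the claim reduces to checking that the one-step conditional moments of the discrete m-SAM update \eqref{m-sam} agree with the drift and diffusion coefficients of \eqref{eq:SDE-mSAM} up to an error of order $\eta^2+\eta\rho^2$. Concretely, I would write one step as $x_{k+1}-x_k=-\tfrac{\eta m}{|\gamma|}\sum_{j}\nabla f_{\I_j}\bigl(x_k+\rho\,u_{\I_j}\bigr)$ with $u_{\I_j}=\nabla f_{\I_j}(x_k)/\|\nabla f_{\I_j}(x_k)\|$, and Taylor-expand each summand in $\rho$ about $x_k$, producing the zeroth-order term $\nabla f_{\I_j}(x_k)$ and the first-order term $\nabla^2 f_{\I_j}(x_k)\,u_{\I_j}$.

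For the drift, the central observation is the identity $\nabla^2 f_{\I_j}(x)\,u_{\I_j}(x)=\nabla\|\nabla f_{\I_j}(x)\|$, so that each micro-batch's $O(\rho)$ contribution is exactly the gradient of the norm of its own gradient. Taking the conditional expectation and using Assumption~\ref{assumption:gaussian}—under which the per-sample gradients, hence the disjoint micro-batch gradients, are i.i.d.—the prefactor $m/|\gamma|$ cancels the number $|\gamma|/m$ of micro-batches and collapses the sum to a single expectation $\E_{\I}$. Exchanging gradient and expectation and rewriting $\|\nabla f_{\I}(x)\|=\tfrac1m\bigl\|\sum_{i\in\I}\nabla f_i(x)\bigr\|$ then yields the regularizer $\tfrac{\rho}{m}\,\nabla\,\E\bigl\|\sum_{i\in\I}\nabla f_i(x)\bigr\|$ of \eqref{eq:SDE-mSAM}, while the leading term reproduces $-\nabla f(x)$ since $\E_{\I}[\nabla f_{\I}]=\nabla f$. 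This is the normalized analogue of the m-USAM computation in Theorem~\ref{theorem:m-usam sde}, with the closed-form $\tfrac12\nabla\|\nabla f_{\I}\|^2$ replaced by the non-squared $\nabla\|\nabla f_{\I}\|$.

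For the diffusion, I would compute the conditional covariance of the increment. Because the disjoint micro-batch gradients are independent under Assumption~\ref{assumption:gaussian}, the covariance of their average is $\tfrac{m}{|\gamma|}$ times the covariance of a single perturbed micro-batch gradient; matching $\eta^2$ times this against $\tfrac{m\eta}{|\gamma|}\Sigma^{m\text{-}SAM}\cdot\eta$ identifies $\Sigma^{m\text{-}SAM}(x)=\Cov\bigl(\nabla f_{\I}(x+\rho u_{\I})\bigr)$ and fixes the prefactor $m/|\gamma|$. Setting $m=|\gamma|$ recovers Theorem~\ref{theorem:sam sde}, a useful consistency check. The leftover higher-order $\eta$-terms are absorbed into the Dynkin remainder and the residual $\rho$-terms into the Taylor remainder, delivering the order-$(1,1)$ bound against test functions $g\in G^{2}$.

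The hard part will be controlling the normalization rigorously. Unlike the USAM case, the map $x\mapsto\nabla f_{\I}(x)/\|\nabla f_{\I}(x)\|$ is singular wherever a micro-batch gradient vanishes, so both $\nabla\|\nabla f_{\I}\|$ and, more seriously, the second- and higher-order Taylor coefficients involve derivatives of $1/\|\cdot\|$ that blow up near such points. I therefore expect the technical core to be a non-degeneracy/regularity argument bounding the relevant moments of these derivatives uniformly, so that the $\rho$-remainder is genuinely $O(\rho^2)$ in the weak sense, together with verifying that $\E\bigl\|\sum_{i\in\I}\nabla f_i\bigr\|$ and $\Sigma^{m\text{-}SAM}$ inherit the smoothness needed for \eqref{eq:SDE-mSAM} to be well-posed. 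This is precisely the non-elementary obstruction flagged before Theorem~\ref{theorem:sam sde}, and it is the only step where the SAM argument departs substantively from its USAM counterpart.
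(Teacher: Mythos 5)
Your core plan coincides with the paper's proof. The paper likewise writes the increment as $\eta\,h(x,\gamma,\rho)$ with $h(x,\gamma,\rho)=-\tfrac{m}{|\gamma|}\sum_j \nabla f_{\I_j}\bigl(x+\rho\,u_{\I_j}(x)\bigr)$, Taylor-expands in $\rho$ with integral remainder to get $h=h_0+\rho\,h_1+O(\rho^2)$ where $h_0=-\tfrac{m}{|\gamma|}\sum_j\nabla f_{\I_j}$ and $h_1=-\tfrac{m}{|\gamma|}\sum_j\nabla^2 f_{\I_j}\,\nabla f_{\I_j}/\|\nabla f_{\I_j}\|$, uses the i.i.d.\ structure so that $\E[h_0]=-\nabla f$ and $\E[h_1]=-\E\bigl[\nabla^2 f_{\I}\,\nabla f_{\I}/\|\nabla f_{\I}\|\bigr]$ (the gradient of $\E\|\nabla f_\I\|$, i.e.\ $\tfrac1m\nabla\E\|\sum_{i\in\I}\nabla f_i\|$), and then feeds these into the generic one-step moment lemmas (Lemmas~\ref{general discrete lemma} and~\ref{general continuous lemma}) and the two-parameter matching theorem (Theorem~\ref{theorem general adaption}), with the $m/|\gamma|$ covariance scaling and the $m=|\gamma|$ consistency exactly as you describe.

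The one place you go astray is the final paragraph, and it matters for where you would spend effort. The $\rho$-Taylor remainder does \emph{not} involve derivatives of $1/\|\cdot\|$: the expansion is of $\rho\mapsto\nabla f_{\I_j}(x+\rho u)$ at fixed $x$, with $u=u_{\I_j}(x)$ held fixed, so the remainder is $\int_0^1(1-t)\,D^3f_{\I_j}(x+t\rho u)[\rho u,\rho u]\,dt$, bounded by $\tfrac12 K(x)\rho^2$ precisely because $\|u\|=1$---if anything this is \emph{cleaner} than the USAM case, where the analogous bound carries a factor $\|\nabla f_{\I_j}(x)\|^2$. The normalization singularity is real but enters elsewhere: (a) the framework requires $h_1\in G^2$ as a function of $x$, and differentiating $\nabla f_{\I_j}/\|\nabla f_{\I_j}\|$ in $x$ does produce terms that blow up where a micro-batch gradient vanishes; (b) the SDE coefficients $\nabla\E\|\sum_{i\in\I}\nabla f_i\|$ and $\sqrt{\Sigma^{m\text{-}SAM}}$ must be Lipschitz for well-posedness; and (c) the exchange of $\nabla$ and $\E$ needs justification. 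The paper does not resolve any of these by a non-degeneracy argument; it simply assumes them away as the ``mild regularity conditions'' (the Lipschitz hypotheses (i)--(ii) of Theorem~\ref{app:m-sam sde}, together with the asserted $G^2$ membership of $h_0,h_1$). So your proof would close the same way the paper's does---by hypothesis---rather than via the uniform moment bounds on derivatives of $1/\|\cdot\|$ that you anticipate as the technical core.
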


 Unlike the unnormalized algorithm, these SAM regularization terms cannot be expressed as simple functions of the full gradient and SGN covariance. Nonetheless, the following proposition clarifies how the regularization terms in the SDEs depend on the mini/micro batch size, revealing an inverse relationship between the norm and the batch size.
\begin{proposition}\label{proposition:sam}
Under Assumption~\ref{assumption:gaussian}, the following inequalities hold:
\begin{equation*}
\|\nabla f(x)\|  \leq\E\|\nabla f_{\gamma}(x)\| \leq\sqrt{\|\nabla f(x)\|^{2}+\E\|\xi_\gamma(x)\|^2}=\sqrt{\|\nabla f(x)\|^{2}+\frac{\operatorname{tr}(V(x))}{|\gamma|}},
\end{equation*}
where $\xi_\gamma$ denotes the SGN in $\gamma$. Furthermore, if $\nabla f_{i}(x)$ follows a log-concave distribution, then we have \(\E\|\nabla f_{\gamma}(x)\|\) monotonically increases as \(|\gamma|\) decreases.
\end{proposition}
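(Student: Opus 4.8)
The plan is to reduce the entire inequality chain to the first two moments of the batch gradient, since Assumption~\ref{assumption:gaussian} controls exactly those. Writing $\nabla f_{\gamma}(x)=\nabla f(x)+\xi_\gamma(x)$ with $\xi_\gamma(x)=\frac{1}{|\gamma|}\sum_{i\in\gamma}\xi_i(x)$, the i.i.d. structure together with $\E[\xi_i]=0$ and $\Cov(\xi_i)=V(x)$ gives $\E[\xi_\gamma(x)]=0$ and $\Cov(\xi_\gamma(x))=V(x)/|\gamma|$, whence $\E\|\xi_\gamma(x)\|^2=\operatorname{tr}(V(x))/|\gamma|$; this is the rightmost equality. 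The lower bound is Jensen's inequality for the convex map $\|\cdot\|$ combined with unbiasedness, $\|\nabla f(x)\|=\|\E\nabla f_{\gamma}(x)\|\le\E\|\nabla f_{\gamma}(x)\|$. The upper bound combines the scalar inequality $\E\|Z\|\le(\E\|Z\|^2)^{1/2}$ with the second-moment identity $\E\|\nabla f_{\gamma}(x)\|^2=\|\nabla f(x)\|^2+\E\|\xi_\gamma(x)\|^2$, which is just the orthogonality of the mean and the fluctuation.

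For the monotonicity I would avoid any explicit density computation and instead use a leave-one-out symmetrization. Fix $m$ and consider $|\gamma|=m+1$. The key identity is that the size-$(m+1)$ average equals the average of its $m+1$ leave-one-out size-$m$ sub-averages, $\nabla f_{\gamma}=\frac{1}{m+1}\sum_{j}\bar Y^{(-j)}$, where $\bar Y^{(-j)}$ omits one index and is therefore distributed exactly as the size-$m$ batch gradient. Applying the triangle inequality (convexity of the norm) and taking expectations yields $\E\|\nabla f_{\gamma}\|\bigl|_{|\gamma|=m+1}\le\frac{1}{m+1}\sum_j\E\|\bar Y^{(-j)}\|=\E\|\nabla f_{\gamma}\|\bigl|_{|\gamma|=m}$. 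Hence $\E\|\nabla f_{\gamma}\|$ is non-increasing in $|\gamma|$, i.e. it increases as $|\gamma|$ decreases. I would stress that this weak monotonicity uses only the i.i.d. (in fact only exchangeable) structure and needs no log-concavity at all.

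The role of log-concavity, then, is to upgrade this to \emph{strict} monotonicity, which is what the m-sharpness narrative requires. Equality in the triangle-inequality step forces the leave-one-out averages $\bar Y^{(-j)}$ to be almost surely positively collinear, and the plan is to exclude this using log-concavity: by Prékopa's theorem the size-$m$ batch gradient inherits a log-concave density, and for a non-degenerate (full-dimensional support) log-concave law the joint distribution of the $\bar Y^{(-j)}$ places positive mass on non-collinear configurations, forcing the inequality to be strict.

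The main obstacle is exactly this strictness/degeneracy analysis. One must rule out the pathological case in which all per-sample gradients are almost surely aligned along a common ray—for which $\E\|\nabla f_{\gamma}\|$ is constant in $|\gamma|$ and both bounds become tight—and verify that full-dimensional log-concavity excludes it; the remaining moment computations of the first paragraph are routine. As a secondary check, I would revisit the covariance identity under sampling without replacement, where a finite-population correction could in principle alter the $1/|\gamma|$ factor, though under the i.i.d. reading of Assumption~\ref{assumption:gaussian} it holds as stated.
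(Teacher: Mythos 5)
Your proof of the inequality chain coincides with the paper's: Jensen's inequality for the lower bound, the Cauchy--Schwarz step $\E\|Z\|\le(\E\|Z\|^2)^{1/2}$ plus the second-moment identity for the upper bound, and the computation $\E\|\xi_\gamma(x)\|^2=\operatorname{tr}(V(x))/|\gamma|$ for the final equality. For the monotonicity claim, however, you take a genuinely different route. The paper invokes a black-box convex-order lemma (its Lemma~F.1, cited from M\"uller and Stoyan): for i.i.d.\ vectors with log-concave density, $\E[\phi(S_m)]\le\E[\phi(S_k)]$ for $k<m$ and convex $\phi$, applied to $\phi=\|\cdot\|$. You instead prove the needed instance directly via the leave-one-out identity $S_{m+1}=\frac{1}{m+1}\sum_j \bar Y^{(-j)}$, the triangle inequality, and the fact that each $\bar Y^{(-j)}$ is distributed as $S_m$. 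This is correct and self-contained, and—as you rightly stress—it uses only exchangeability, which exposes that the log-concavity hypothesis in the proposition (and in the paper's Lemma~F.1 as stated) is superfluous for the weak monotonicity that is actually being proved. What the paper's citation buys is brevity and the full convex-order statement; what your argument buys is elementarity and strictly greater generality.

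One caveat: your plan to use log-concavity to upgrade the conclusion to \emph{strict} monotonicity does not go through as described. The one-dimensional exponential distribution—which the paper itself lists as a standard log-concave example—is supported on a ray, so all leave-one-out averages are almost surely positively collinear and $\E\|S_k\|$ is constant in $k$; hence log-concavity alone cannot rule out the degenerate case you identify. Strictness would additionally require $d\ge 2$ together with a full-dimensional density (or, in $d=1$, mass on both signs). Since the paper claims and proves only the weak inequality, this affects only your optional strengthening, not your proof of the proposition itself.
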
 
A complete proof of Proposition \ref{proposition:sam} is given in Appendix~\ref{prop proof}. Notably, log-concave distributions include many of the standard distributions of interest, such as the Gaussian and exponential distributions.

Based on the above theorems and propositions, we observe that mini-batch SAM (like USAM) regularizes the magnitude of the SGN. In contrast, n-SAM loses this noise-regularization effect, leading to degraded performance, whereas m-SAM amplifies it and thus achieves improved results. One intuitive explanation is that in larger batches, these noise terms tend to cancel each other out, causing the averaged stochastic gradient to concentrate around its expectation and diminishing the contribution of SGN.

\subsection{Benefits of SGN Covariance Regularization for Generalization}

Building on our observation that USAM/SAM variants impose different strengths of sharpness regularization, i.e., variance-based regularization of the SGN, we investigate how directly regularizing the covariance of the SGN further enhances generalization performance.

We address this from two perspectives. First, following the work of \cite{neu2021information,wang2021generalization}, a bound on the generalization error can be decomposed into a sum of mutual‐information terms, among which the “trajectory term” is controlled by the covariance of the SGN. This implies that by implicitly regularizing the covariance of the SGN during training, one can reduce the cumulative trajectory term over time, thereby tightening the overall generalization bound. 

On the other hand, we examine the algorithm’s dynamics as it approaches convergence in the late stages of training. As the parameters approach the minimum, the loss and the full gradient on the training set diminish, causing the gradient of the SGN covariance to dominate the drift term. Near a local minimum, it is well established \citep{jastrzkebski2017three,daneshmand2018escaping,
zhu2018anisotropic,xie2020diffusion,xie2023overlooked} that for the negative log‐likelihood loss we have
\begin{equation}\label{eq:minimum}
V(x)\approx \frac{1}{n} \sum_{i=1}^n \nabla f_i(x)\nabla f_i(x)^\top
  \approx \mathrm{FIM}(x)
  \approx \nabla^2 f(x)\,,
\end{equation}
where $\mathrm{FIM}(x)$ denotes the empirical Fisher information matrix and $\nabla^2 f(x)$ the Hessian matrix. Moreover, an empirical result by \cite{xie2020diffusion} shows that for neural networks, this relationship remains approximately valid even far away from the local minima. Therefore, in the late stages of training, regularizing the trace of the SGN covariance is approximately equivalent to regularizing the trace of the Hessian, which is widely regarded as a good measure of sharpness that has been observed to correlate strongly with generalization performance \citep{keskar2017largebatchtrainingdeeplearning,blanc2020implicit,Wen2022,arora2022understanding,damian2022self,ahn2023escape,tahmasebi2024universal}.

To verify the dynamics around the minima,  we consider the setting from the work of \cite{liu2020bad,damian2021label}, where initialization is performed at a bad minimum. We use the checkpoint provided by \cite{damian2021label}. The learning rate is set to $1\mathrm{e}{-3}$, under which SGD has been shown to struggle to escape poor minima. We do not employ any explicit regularization techniques. For SAM, we set $\rho = 5\mathrm{e}{-3}$ and compare the performance of m-SAM with different values of $m$. In Figures~\ref{escape1} and~\ref{escape2}, we can clearly observe that as $m$ decreases, the regularization effect on the SGN covariance is significantly strengthened, which in turn accelerates the escape from poor minima—consistent with our theoretical analysis.
\begin{figure}[htbp]
  \centering
  \begin{minipage}[t]{0.48\textwidth}
    \centering
    \includegraphics[width=0.9\linewidth]{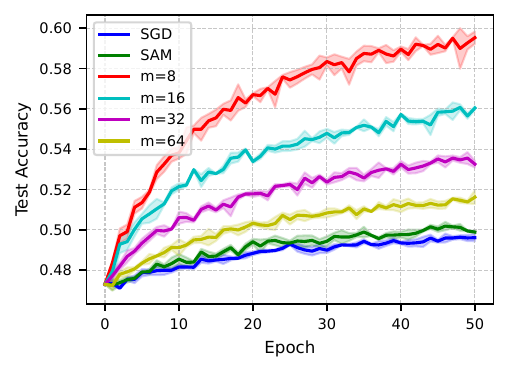}
    \captionof{figure}{Speed of escaping poor minima, measured by test accuracy.}\label{escape1}
  \end{minipage}
  \hfill
  \begin{minipage}[t]{0.48\textwidth}
    \centering
    \includegraphics[width=0.9\linewidth]{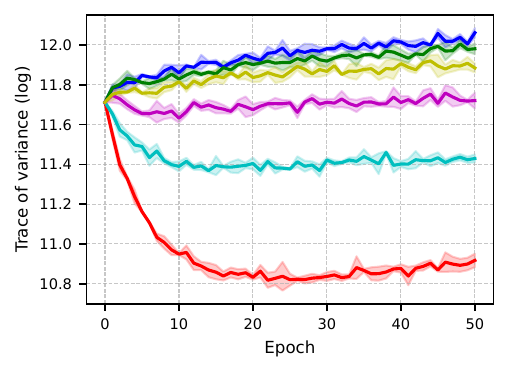}
    \captionof{figure}{Variance of SGN over iterations.}
    \label{escape2}
  \end{minipage}
\end{figure}

\section{Practical Method}
\label{method}
In this section, we will demonstrate how to translate the theoretical insights gained from our SDE approximations into a practical method. While m-SAM\footnote{In the multi-GPU setting, we use “m-SAM” to denote the scenario where $m < \text{per-device batch size}$, which is inherently not parallelizable as well.} achieves substantially better generalization than mini-batch SAM by strengthening the regularization of the SGN magnitude, its inherently sequential nature creates a serious parallelization bottleneck (see Table \ref{tab:combined-observations} in Appendix \ref{app:section exp} for performance and time cost). This is because it must sequentially compute the perturbation for each micro‐batch and then individually backpropagate to obtain the perturbed gradient. Building on our theoretical insights, a natural question arises: 

\textit{Can we design a parallelizable algorithm that preserves the generalization advantages of m-SAM?} 
One important observation is, if we further assume that the SGN is approximately orthogonal to the full gradient, or that the norm of the full gradient is negligible compared to that of the SGN (as often occurs in the late stages of training when the model approaches convergence), which commonly arises in signal-to-noise-ratio analyses \citep{Cao2022,Jelassi2022,Zou2023,Huang2023b,AllenZhuLi2023,Han2024,Huang2024a,li2024optimization}, then from the following equation we observe that the norm of the stochastic gradient is directly related to the norm of the SGN:
\begin{equation}\label{orth assumption}
    \|\nabla f_i(x)\| \approx \sqrt{\|\nabla f(x)\|^2+\|\xi_i\|^2}.
\end{equation} This decomposition reveals that samples with larger gradient norms carry higher‐magnitude SGN. Drawing inspiration from importance sampling, which assigns a weight to each sample proportional to its “importance”, we adapt this idea to emphasize more important samples when computing SAM's perturbation \eqref{sam perturbation}. In our framework, we quantify importance by the magnitude of the SGN: samples exhibiting larger SGN contribute more to sharpness regularization and therefore deserve greater weight in the perturbation. Specifically, we introduce an \textit{adaptive weighting mechanism} in which each weight \(p_i\) reflects the importance of sample \(i\) when computing the perturbation vector. This reweighting strategy defines a probability distribution \(P = \{ p_i \}_{i \in \gamma}\) over the sampled indices \(i \in \gamma\), thereby modulating the influence of each sample. We refer to the resulting algorithm as \emph{Reweighted SAM (RW-SAM)}. The objective of RW-SAM’s perturbation is formulated as follows:

\begin{equation}\label{method:2}
    \max_{P \in \Delta} \quad \max_{\|\epsilon\|\leq 1} \quad 
    \bigl\langle \sum_{i \in \gamma} p_i \,\nabla f_i(x), \epsilon \bigr\rangle
    + \frac{\mathbb{H}(P)}{\lambda},
\end{equation}
where \(\Delta\) denotes the probability simplex, \(\mathbb{H}\) is the entropy function used to prevent all weights from concentrating on a single sample, and \(\lambda\) is a hyperparameter to maintain a balance between emphasis and diversity. Notably, as \(\lambda \to 0\), Objective \eqref{method:2} degenerates to mini-batch SAM's perturbation \eqref{sam perturbation}.

Solving Objective \eqref{method:2} for $\epsilon$ yields:
\begin{equation}\label{epsilon}
    \epsilon^* \;=\;
\frac{\sum_{i \in \gamma} p_i\,\nabla f_i(x)}%
          {\Bigl\|\sum_{i \in \gamma} p_i\,\nabla f_i(x)\Bigr\|}.
\end{equation}

Since Objective~\eqref{method:2} does not have a closed-form solution for $P$, we propose solving its relaxation:
\begin{equation}\label{relax}
\max_{P \in \Delta} \quad
\sum_{i \in \gamma} p_i\bigl\|\nabla f_i(x)\bigr\|
\;+\;
\frac{\mathbb{H}(P)}{\lambda}.
\end{equation}
Objective~\eqref{relax} corresponds to the well-known Gibbs distribution (see derivation in Section~\ref{app:gibbs}), which is given by:
\begin{equation}\label{p gibbs}
    p_i^*
\;=\;
\frac{\exp\bigl(\lambda\,\|\nabla f_i(x)\|\bigr)}
{\displaystyle \sum_{j\in \gamma}\exp\bigl(\lambda\,\|\nabla f_j(x)\|\bigr)}.
\end{equation}

Broadly speaking, Eq. \eqref{p gibbs} corresponds to assigning higher weights to samples with larger stochastic gradient norms, where \( \lambda \) controls the concentration of this distribution. In practice, the optimal value of \(\lambda\) depends on the scale of per-sample gradient norms. Therefore, we normalize the estimated gradient norms before applying the exponential function, making the algorithm's performance less sensitive to the choice of \(\lambda\). As observed in Section~\ref{ablation}, normalization significantly reduces the need for tuning \(\lambda\). 

Additionally, we propose using a finite‐difference method combined with Monte Carlo sampling to avoid per‐sample gradient‐norm estimation via backpropagation. The formula is as follows (see derivation in Appendix~\ref{app:sec mc}):
\begin{equation}\label{monte-carlo-finite-diff}
    \|\nabla f_i(x)\| \approx \sqrt{\frac{1}{Q} \sum_{q=1}^{Q} \left( \frac{f_i(x + \delta z_q) - f_i(x)}{\delta} \right)^2},
\end{equation}
where \(z \in \R^d\) is a Rademacher random vector, $\delta$ is a small constant. This estimator requires only $Q$ additional forward passes, as we can conveniently obtain the loss for each sample in a single forward pass, making it an efficient way to estimate the gradient norm for each sample. To minimize additional computational overhead, we set \(Q=1\) in our experiments, consistent with common practice in deep learning \citep{kingma2013auto,gal2016dropout,ho2020denoising,malladi2023fine}, and found that this choice suffices to achieve a non‐trivial performance improvement. However, unlike common implementations \citep{kingma2013auto,ho2020denoising,malladi2023fine}, we propose using Rademacher instead of Gaussian perturbations to minimize the variance of the Monte Carlo estimator. Since Rademacher variables have a fixed expected squared norm, they achieve the optimal variance (according to Theorem 2.2 in \cite{marevisiting}). The pseudocode for our algorithm is presented in Algorithm~\ref{alg:reweighted-sam} (see Appendix~\ref{app:alg}).

\section{Experiments}
\label{exp}
\subsection{Training from scratch}
We evaluate three optimization methods: SGD, SAM\footnote{In this section, consistent with common practice, we use “SAM” to refer to the mini-batch SAM.}, and RW-SAM on CIFAR-10 and CIFAR-100 \citep{krizhevsky2009learning}, training three models from scratch: ResNet-18, ResNet-50 \citep{he2016deep}, and WideResNet-28-10 \citep{zagoruyko2016wide}. We use a batch size of 128 and a cosine learning rate schedule with an initial learning rate of 0.1. SAM and RW-SAM are trained for 200 epochs, while SGD is trained for 400 epochs. We apply a momentum of 0.9 and a weight decay of \(5 \mathrm{e}{-4}\), along with standard data augmentation techniques, including horizontal flipping, padding by four pixels, and random cropping.

For SAM and RW-SAM, we set \(\rho = 0.05\) for CIFAR-10 and \(\rho = 0.1\) for CIFAR-100. In the case of RW-SAM, we determine $\delta$ through finite-difference estimation on the model before training, based on the estimated error. Specifically, we consider $\delta \in \{1\mathrm{e}{-5}, 1\mathrm{e}{-4}, 1\mathrm{e}{-3}\}$ and use $\delta = 1\mathrm{e}{-3}$ for ResNet-18 and $\delta = 1\mathrm{e}{-4}$ for both ResNet-50 and WideResNet-28-10. For the additional hyperparameter \(\lambda\) in RW-SAM, we performed a grid search over \(\{0.25, 0.5, 1.0, 2.0\}\) on a validation set and found that 0.5 consistently yielded strong performance across experiments.

\begin{table}[h!]
\centering
\setlength{\abovecaptionskip}{0.1in} 
\setlength{\belowcaptionskip}{0.1in} 
\renewcommand{\arraystretch}{1.2}    
\caption{Test accuracy comparison on CIFAR-10 and CIFAR-100 with different optimizers.}
\label{tab:combined-performance}
\resizebox{\textwidth}{!}{
\begin{tabular}{c|ccc|ccc}
\hline
\textbf{Model} 
  & \multicolumn{3}{c|}{\textbf{CIFAR-10}} 
  & \multicolumn{3}{c}{\textbf{CIFAR-100}} \\
\cline{2-7}
 & \textbf{SGD} & \textbf{SAM} & \textbf{RW-SAM} 
 & \textbf{SGD} & \textbf{SAM} & \textbf{RW-SAM} \\ 
\hline
\textbf{ResNet-18}  
  & 95.62 $\pm$ 0.03 & 95.99 $\pm$ 0.07 & \textbf{96.24} $\pm$ 0.05 
  & 78.91 $\pm$ 0.18 & 78.90 $\pm$ 0.27 & \textbf{79.31} $\pm$ 0.28 \\

\textbf{ResNet-50}  
  & 95.64 $\pm$ 0.37 & 96.06 $\pm$ 0.04 & \textbf{96.34} $\pm$ 0.04 
  & 79.55 $\pm$ 0.16 & 80.31 $\pm$ 0.35 & \textbf{80.83} $\pm$ 0.05 \\

\textbf{WideResNet} 
  & 96.47 $\pm$ 0.03 & 96.91 $\pm$ 0.02 & \textbf{97.11} $\pm$ 0.05 
  & 81.55 $\pm$ 0.15 & 83.25 $\pm$ 0.07 & \textbf{83.52} $\pm$ 0.08 \\ 
\hline
\end{tabular}
}
\end{table}

For large-scale experiments, we train a ResNet-50 on ImageNet-1K \citep{deng2009imagenet} for 90 epochs with an initial learning rate of 0.05. For both SAM and RW-SAM, we use the same \(\rho = 0.05\). For the additional hyperparameter \(\lambda\) in RW-SAM, we set \(\lambda = 0.25\). All other hyperparameters remain the same as those used on CIFAR-10/100.

We repeated three independent experiments and reported the mean and standard deviation of the test accuracy in Table~\ref{tab:combined-performance} and Table~\ref{tab:imagenet}. We observe that RW-SAM consistently outperforms the baselines across various models, as well as on both small and large datasets.

\textbf{Analysis of Computational Overhead.} RW-SAM requires an additional forward pass to estimate per-sample gradient norms, leading to approximately 1/6 more training overhead compared to vanilla SAM, as a forward pass typically takes about half the time of a backward pass \citep{kaplan2022notes}. For a report of the additional wall‐clock time overhead, please see Table~\ref{time} in Appendix~\ref{app:section exp}. However, RW-SAM matches the performance of m-SAM at \(m=64\) without incurring its nearly two-fold training overhead (See Table~\ref{tab:combined-observations} in Appendix~\ref{app:section exp}). This highlights the efficiency of RW-SAM in balancing computational cost and performance.

\begin{table}[h!]
  \centering
  \setlength{\abovecaptionskip}{0.1in} %
  \setlength{\belowcaptionskip}{0.1in} %
  \renewcommand{\arraystretch}{1.2}    %
  \caption{(a) Test accuracy on ImageNet-1K with different optimizers; (b) Test accuracy fine-tuning ViT-B/16 on CIFAR-10/100 with different optimizers. }
  \begin{subtable}[b]{0.48\textwidth}
    \centering
    \resizebox{\textwidth}{!}{%
      \begin{tabular}{c|ccc}
        \hline
        \textbf{ImageNet-1K} & \textbf{SGD} & \textbf{SAM} & \textbf{RW-SAM} \\
        \hline
        \textbf{ResNet-50}   & 76.67 $\pm$ 0.05 & 77.16 $\pm$ 0.04 & \textbf{77.37} $\pm$ 0.05 \\
        \hline
      \end{tabular}
    }
    \caption{}\label{tab:imagenet}
  \end{subtable}
  \hfill
  \begin{subtable}[b]{0.48\textwidth}
    \centering
    \resizebox{\textwidth}{!}{%
      \begin{tabular}{c|ccc}
        \hline
             & \textbf{SGD} & \textbf{SAM} & \textbf{RW-SAM} \\
        \hline
        \textbf{CIFAR-10}   & 98.24 $\pm$ 0.05 & 98.40 $\pm$ 0.02 & \textbf{98.58} $\pm$ 0.02 \\
        \hline
        \textbf{CIFAR-100}  & 88.71 $\pm$ 0.10 & 89.63 $\pm$ 0.12 & \textbf{89.89} $\pm$ 0.09 \\
        \hline
      \end{tabular}
    }
    \caption{}\label{tab:finetuning}
  \end{subtable}
\end{table}

\subsection{Fine-tuning}
We fine-tune a ViT-B/16 model \citep{dosovitskiy2020image}, pre-trained on ImageNet-1K, on CIFAR-10 and CIFAR-100. We train for 20 epochs with an initial learning rate of 0.01. Other hyperparameters are the same as those in training from scratch. The results are summarized in Table~\ref{tab:finetuning}. We also fine-tune a pretrained DistilBERT model \citep{sanh2019distilbert} on the GLUE benchmark \citep{wang2018glue}. We use AdamW \citep{loshchilov2019decoupledweightdecayregularization} as the base optimizer. The detailed hyperparameter settings are provided in Table~\ref{tab:glue-hyperparams} of Appendix~\ref{app:section exp}, and the results are presented in Table~\ref{tab:glue-performance}. We observe that RW-SAM consistently outperforms SAM in both fine-tuning experiments.

\begin{table}[h!]
\centering
\setlength{\abovecaptionskip}{0.1in}
\setlength{\belowcaptionskip}{0.1in}
\renewcommand{\arraystretch}{1.2}
\caption{Performance comparison on GLUE tasks using different optimizers.}
\label{tab:glue-performance}
\resizebox{\textwidth}{!}{
\begin{tabular}{c|cccccccc|c}
\hline
\textbf{Optimizer} & \textbf{CoLA} & \textbf{MNLI} & \textbf{MRPC} & \textbf{QNLI} & \textbf{QQP} & \textbf{RTE} & \textbf{SST-2} & \textbf{STS-B} & \textbf{Average} \\
\hline
\textbf{AdamW} 
& 0.538 & 0.825 & 0.900 & 0.884 & 0.868 & \textbf{0.628} & 0.914 & 0.864 & 0.804 \\

\textbf{SAM} 
& 0.517 & 0.824 & 0.900 & 0.894 & 0.871 & 0.610 & 0.911 & \textbf{0.870} & 0.800 \\

\textbf{RW-SAM} 
& \textbf{0.560} & \textbf{0.826} & \textbf{0.904} & \textbf{0.896} & \textbf{0.872} & 0.625 & \textbf{0.915} & \textbf{0.870} & \textbf{0.809} \\
\hline
\end{tabular}
}
\end{table}

\subsection{Robustness to label noise}
\cite{Foret2021} have shown that SAM exhibits robustness to label noise. Motivated by this, we evaluate RW-SAM's performance on CIFAR-10 with labels randomly flipped at specified noise ratios. We train a ResNet-18 and report clean test accuracies in Table~\ref{exp:noisy}. As the noise ratio increases, RW-SAM maintains remarkably strong performance. In particular, at an 80\% noise ratio, RW-SAM achieves a 16\% absolute accuracy improvement over SAM.
\subsection{Additional experiments}\label{ablation}
\paragraph{Hyperparameter sensitivity.}
We train ResNet-18 on CIFAR-100 using RW-SAM to evaluate its sensitivity to different values of \(\lambda\). As summarized in Table~\ref{tab:sensitivity}, RW-SAM demonstrates robustness to the choice of \(\lambda\) and consistently outperforms the baselines within a reasonable range.
\paragraph{Trace of stochastic gradient covariance.} According to our theory, we compare the trace of the stochastic gradient covariance matrix at convergence for different algorithms, which, near the minimum, closely approximates the Hessian matrix (See Eq.~\eqref{eq:minimum}). 
The results, shown in Table~\ref{tab:trace}, indicate that compared to SGD and SAM, RW-SAM indeed converges to a minimum with a smaller stochastic gradient covariance magnitude.
\begin{figure}[htbp]
  \centering
  \begin{minipage}[t]{0.45\textwidth}
    \centering
    \captionof{table}{Performance comparison on CIFAR-10 across different noise ratios}
    \label{exp:noisy}
    \scalebox{0.75}{%
      \begin{tabular}{lcccc}
        \toprule
        \textbf{Noise Ratio} & \textbf{SGD}        & \textbf{SAM}        & \textbf{RW-SAM}       \\
        \midrule
        20\% & 87.54 $\pm$ 0.20 & 90.01 $\pm$ 0.09 & \textbf{90.34} $\pm$ 0.20 \\
        40\% & 83.66 $\pm$ 0.30 & 86.40 $\pm$ 0.12 & \textbf{86.87} $\pm$ 0.11 \\
        60\% & 76.64 $\pm$ 0.32 & 78.79 $\pm$ 0.24 & \textbf{81.52} $\pm$ 0.31 \\
        80\% & 46.53 $\pm$ 1.13 & 37.69 $\pm$ 3.12 & \textbf{53.17} $\pm$ 3.70 \\
        \bottomrule
      \end{tabular}
    }
  \end{minipage}
  \hfill
  \begin{minipage}[t]{0.45\textwidth}
    \centering
    \captionof{table}{RW-SAM $\lambda$-sensitivity}
    \label{tab:sensitivity}
    \scalebox{0.6}{%
      \begin{tabular}{c|cccc}
        \toprule
        $\lambda$ & 0.25              & 0.5                   & 1.0                   & 2.0               \\
        \midrule
                  & 79.09 $\pm$ 0.21  & \textbf{79.31} $\pm$ 0.28  & 79.12 $\pm$ 0.33  & 79.03 $\pm$ 0.22 \\
        \bottomrule
      \end{tabular}
    }

    \vspace{1em}

    \captionof{table}{Trace of the gradient covariance}
    \label{tab:trace}
    \scalebox{0.7}{%
      \begin{tabular}{c|c}
        \toprule
        \textbf{Optimizer}       & \textbf{Trace}               \\
        \midrule
        \textbf{SGD}             & 572.39 $\pm$ 24.15  \\
        \textbf{SAM}             & 198.40 $\pm$ 6.20   \\
        \textbf{RW-SAM}  & \textbf{177.79} $\pm$ 5.10 \\
        \bottomrule
      \end{tabular}
    }
  \end{minipage}
\end{figure}

\section{Conclusion}
In this work, we conducted a comprehensive theoretical analysis of SAM and its variants through an enhanced SDE modeling framework. Our findings reveal that the structure of SGN plays a crucial role in implicit regularization, significantly influencing generalization performance. Based on our analysis, we proposed Reweighted SAM, an adaptive weighting mechanism for perturbation, which we empirically validated through extensive experiments. Our study provides a deeper understanding of the dynamics of SAM-based algorithms and offers new perspectives on improving their generalization performance.

\section*{Acknowledgement}
Trung Le, Mehrtash Harandi, and Dinh Phung were supported by the ARC Discovery Project grants DP230101176 and DP250100262. Trung Le and Mehrtash Harandi were also supported by the Air Force Office of Scientific Research under award number FA9550-23-S-0001.

\bibliography{ref}
\bibliographystyle{apalike}

\clearpage
\appendix
\begin{section}{General theory for two-parameter weak approximation.}\label{app:section new general theory}
Let \(T>0\), \(\eta\in(0,\min\{1,T\})\), and \(N=\lfloor T/\eta\rfloor\). We consider the general discrete iteration
\begin{equation}\label{eq:general discrete-iteration}
x_{k+1} \;=\; x_k + \eta\,h\bigl(x_k,\gamma_k,\eta,\rho\bigr),
\qquad
x_0\in\mathbb{R}^d,\quad k=0,1,\dots,N,
\end{equation}
and its corresponding continuous‐time approximation by the SDE
\begin{equation}\label{eq:general approximating-sde}
\mathrm{d}X_t
\;=\;
b\bigl(X_t,\eta,\rho\bigr)\,\mathrm{d}t
\;+\;\sqrt{\eta}\,\sigma\bigl(X_t,\eta,\rho\bigr)\,\mathrm{d}W_t,
\qquad
X_0 = x_0,\;t\in[0,T].
\end{equation}

We denote \(\widetilde{X}_k:=X_{k\eta}\), and the one-step changes:
\begin{equation}\label{delta}
    \Delta(x) := x_1 - x,
\qquad
\widetilde\Delta(x) := \widetilde X_1 - x.
\end{equation}

Following the SDE framework by \cite{mil1986weak,li2017stochastic,compagnoni2023sde,luo2025explicit}, we have the following definition:
\begin{definition} \label{def:function-sets}
 Let $G$ denote the set of continuous functions $\mathbb{R}^d \rightarrow \mathbb{R}$  of at most polynomial growth, i.e. $g \in G$ if there exists positive integers $\kappa_1,\kappa_2>0$ such that
\begin{align*}|g(x)|\leq \kappa_1(1+\|x\|^{2\kappa_2}),
\end{align*}
for all \(x \in \mathbb{R}^d\). Moreover, for each integer \(\alpha \geq 1\) we denote by \(G^{\alpha}\) the set of \(\alpha\)-times continuously differentiable functions \(\mathbb{R}^d \rightarrow \mathbb{R}\) which, together with its partial derivatives up to and
including order \(\alpha\), belong to \(G\).
\end{definition}
This definition comes from the field of numerical analysis of SDEs \citep{mil1986weak}. In the case of \(g(x)=||x||^j\), the bound restricts the difference between the \(j\)-th moments of the discrete process and those of the continuous process. We write \(\mathcal{O}(\eta^{\alpha}\rho^{\beta})\) to denote that there exists a function \(K \in G\) independent of \(\rho,\ \eta\), such that the error terms are bounded by \(K\eta^{\alpha}\rho^{\beta}\).

\begin{theorem}{(Adaptation of Theorem 3 in \cite{li2019stochastic})}\label{theorem general adaption}
Let $T>0$, $\eta\in(0,\min\{1,T\})$, $N=\lfloor T/\eta\rfloor$. Let $\alpha\ge1$ be an integer. Suppose further that the following conditions hold:
\begin{enumerate}[(i)]
  \item There exists $K_1\in G$, independent of $\eta,\rho$, such that for each $s=1,2,\dots,\alpha$ and any indices $i_1,\dots,i_s\in\{1,\dots,d\}$,
  \[
    \biggl|\mathbb{E}\!\prod_{j=1}^s\Delta_{(i_j)}(x)
    \;-\;\mathbb{E}\!\prod_{j=1}^s\widetilde\Delta_{(i_j)}(x)\biggr|
    \;\le\;K_1(x)\,(\eta^{\alpha+1}+\eta\rho^{\beta+1}),
  \]
and
  \[
    \mathbb{E}\!\prod_{j=1}^{\alpha+1}\bigl|\Delta_{(i_j)}(x)\bigr|
    \;\le\;K_1(x)\,(\eta^{\alpha+1}+\eta\rho^{\beta+1}).
  \]
  \item For each $m\ge1$, the $2m$-moment of $x_k$ is uniformly bounded in $k$ and $\eta$, i.e.\ there exists $K_2\in G$, independent of $\eta$ and $k$, such that
  \[
    \mathbb{E}\bigl\|x_k\bigr\|^{2m}\le K_2(x),
    \quad k=0,1,\dots,N.
  \]
\end{enumerate}
Then, for each $g\in G^{\alpha+1}$, there exists a constant $C>0$, independent of $\eta,\rho$, such that
\[
  \max_{0\le k\le N}
  \bigl|\mathbb{E}\,g(x_k)\;-\;\mathbb{E}\,g(X_{k\eta})\bigr|
  \;\le\;C\,(\eta^{\alpha}+\rho^{\beta+1}).
\]
\end{theorem}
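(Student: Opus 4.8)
The plan is to follow the classical weak-convergence argument of Milstein, as adapted to the stochastic-optimization setting by \citet{li2019stochastic}, while carrying the second parameter $\rho$ through the bookkeeping. The central object is the value function of the continuous process, $u(x,t):=\E\bigl[g(X_t)\mid X_0=x\bigr]$, which satisfies $u(x,0)=g(x)$ and the semigroup identity $u(x,(j+1)\eta)=\E\bigl[u(X_\eta^x,j\eta)\bigr]$. First I would write the global error as a telescoping sum of one-step errors:
\[
\E\,g(x_N)-\E\,g(X_{N\eta})
=\sum_{k=0}^{N-1}\E\Bigl[u\bigl(x_{k+1},(N-k-1)\eta\bigr)-u\bigl(x_k,(N-k)\eta\bigr)\Bigr],
\]
using $u(x_N,0)=g(x_N)$ and $u(x_0,N\eta)=\E\,g(X_{N\eta})$ (since $x_0$ is deterministic). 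The semigroup identity lets me rewrite $u\bigl(x_k,(N-k)\eta\bigr)$ as the expectation of $u(\cdot,(N-k-1)\eta)$ after one \emph{continuous} step from $x_k$. Conditioning on $x_k=x$, each summand becomes $\E\bigl[u(x+\Delta(x),s)\bigr]-\E\bigl[u(x+\widetilde\Delta(x),s)\bigr]$ with $s=(N-k-1)\eta$, i.e.\ the same smooth function $u(\cdot,s)$ evaluated at the discrete and the continuous one-step increments.

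Next I would Taylor-expand $u(\cdot,s)$ around $x$ to order $\alpha$. Every increment monomial up to order $\alpha$ multiplies a derivative $D^{\beta'}u(x,s)$, so the two expansions differ only through the moment mismatches $\bigl|\E\prod_{j=1}^s\Delta_{(i_j)}-\E\prod_{j=1}^s\widetilde\Delta_{(i_j)}\bigr|$, which Condition~(i) bounds by $K_1(x)(\eta^{\alpha+1}+\eta\rho^{\beta+1})$. The order-$(\alpha+1)$ Taylor remainders are controlled by the absolute moment $\E\prod_{j=1}^{\alpha+1}|\Delta_{(i_j)}|$ on the discrete side (again via Condition~(i)) and by standard moment estimates on the continuous side, where the $\sqrt\eta$ diffusion scaling makes each increment component $O(\eta)$ so that the product is $O(\eta^{\alpha+1})$. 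Hence, conditional on $x_k=x$, the one-step error is at most $C\,\Phi(x)\,(\eta^{\alpha+1}+\eta\rho^{\beta+1})$, where $\Phi(x)$ collects $K_1(x)$ and the derivatives of $u$.

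The crucial auxiliary input is the regularity of $u$: I would invoke the standard result that, under the stated mild regularity of the drift $b$ and diffusion $\sigma$ together with $g\in G^{\alpha+1}$, the map $u(\cdot,s)$ lies in $G^{\alpha+1}$ with polynomial-growth derivative bounds that are uniform in $s\in[0,T]$ and — importantly here — uniform in $\eta,\rho$. This makes $\Phi\in G$. Taking the outer expectation over $x_k$ and applying Condition~(ii) (uniform-in-$k,\eta$ bounds on all moments of $x_k$) turns $\E\,\Phi(x_k)$ into a finite constant, so each summand is $O(\eta^{\alpha+1}+\eta\rho^{\beta+1})$ uniformly in $k$. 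Summing the $N=\lfloor T/\eta\rfloor$ terms divides out exactly one factor of $\eta$:
\[
\bigl|\E\,g(x_N)-\E\,g(X_{N\eta})\bigr|
\le \frac{T}{\eta}\,C\bigl(\eta^{\alpha+1}+\eta\rho^{\beta+1}\bigr)
= CT\bigl(\eta^{\alpha}+\rho^{\beta+1}\bigr).
\]
Repeating the same telescoping only up to any $k\le N$ shortens the sum, so the bound holds for the maximum over $k$, giving the claim.

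I expect the main obstacle to be the regularity step rather than the algebra: establishing that the Kolmogorov semigroup $u$ has polynomial-growth derivatives up to order $\alpha+1$ with constants independent of $\eta$ and $\rho$. Because $b$ and $\sigma$ here themselves depend on $(\eta,\rho)$, one must verify that these differentiability estimates do not degrade as the parameters vary, which is precisely the content folded into the ``mild regularity conditions.'' The remaining subtlety is purely careful accounting — keeping the two error scales $\eta^{\alpha+1}$ and $\eta\rho^{\beta+1}$ separate through the one-step expansion so that accumulation yields the clean decoupled rate $\eta^{\alpha}+\rho^{\beta+1}$.
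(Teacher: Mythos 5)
Your proposal is correct and follows essentially the same route as the paper: the paper does not write out a proof at all, but simply observes that substituting its Assumption~(i) (with the mixed bound $\eta^{\alpha+1}+\eta\rho^{\beta+1}$) into the penultimate step of the proof of Theorem~3 in \citet{li2019stochastic} lets that argument go through verbatim, and your telescoping/semigroup expansion with the $N\approx T/\eta$ accumulation step is precisely that argument written out in full. Your closing remark about needing derivative estimates for $u$ that are uniform in $(\eta,\rho)$ correctly identifies the one technical point the paper leaves implicit in its ``mild regularity conditions.''
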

By substituting Assumption (i) in the penultimate step of the proof in Theorem 3 in \cite{li2019stochastic} with Assumption (i) of our Theorem~\ref{theorem general adaption}, the same argument goes through and yields the desired conclusion. Hence, we omit the proof.

Next, we state the key lemma for the two-parameter weak approximation. By employing a Dynkin expansion (see, e.g. \cite{evans2012introduction}) instead of a full Itô–Taylor expansion, it avoids the proliferation of terms and provides a streamlined approach to controlling the remainder error in the two-parameter setting.

\begin{lemma}[Two‐parameter Dynkin (semigroup) expansion]\label{lemma:Two–parameter Itô–Taylor expansion}
Let $\psi\in G^{2\alpha+2}$, and suppose the drift admits the expansion
\[
b(x,\rho)
=\sum_{m=0}^\beta \rho^m\,b_m(x)\;+\;O(\rho^{\beta+1}),
\qquad
\sigma(x)=\sigma_0(x).
\]
Define
\[
A_{m}\,\psi(x)
:=b^{(i)}_m(x)\,\partial_{i}\psi(x)
\quad(m=0,1,\dots,\beta),
\qquad
A_{\Delta}\,\psi(x)
:=\tfrac12\,\bigl[\sigma_0\sigma_0^T\bigr]^{(ij)}\,
\partial^2_{ij}\psi(x).
\]
Suppose further that $b_m\,(m=0,1,\dots,\beta),\sigma_0\in G^{2\alpha}$, then for any nonnegative integers $\alpha,\beta$,
\[
\displaystyle
\E\bigl[\psi(X_\eta)\bigr]
=
\sum_{n=0}^{\alpha}\frac{\eta^n}{n!}
\sum_{m=0}^{\beta}\rho^m
\;\sum_{\substack{
\ell\in\{0,\Delta,1,\dots,\beta\}^n\\
|\ell| = m
}}
A_{\ell_1}A_{\ell_2}\cdots A_{\ell_n}\psi(x)
\;+\;O(\eta^{\alpha+1})\;+\;O(\eta\rho^{\beta+1}).
\]
Here $|\ell|:=\sum_{i:\,\ell_i\notin\{0,\Delta\}}\ell_i$, and each multi‐index 
\(\ell=(\ell_1,\dots,\ell_n)\) 
contributes a factor 
\(\rho^{|\ell|}\),
and indices $\ell_i=0$ or $\ell_i=\Delta$ contribute no $\rho$‐power (via $A_0$ or $A_\Delta$).
\end{lemma}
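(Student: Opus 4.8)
The plan is to pass to the associated Kolmogorov semigroup, Taylor-expand it in time, and only then re-expand the resulting iterated generator in powers of $\rho$. Write $u(t,x):=\E\bigl[\psi(X_t)\mid X_0=x\bigr]$, which under the stated regularity solves the backward equation $\partial_t u=\mathcal{L}u$, $u(0,\cdot)=\psi$, whose generator for \eqref{eq:general approximating-sde} is
\[
\mathcal{L}=b^{(i)}(\cdot,\eta,\rho)\,\partial_i+\tfrac{\eta}{2}\bigl[\sigma_0\sigma_0^\top\bigr]^{(ij)}\partial^2_{ij}.
\]
Since $\partial_t^n u=\mathcal{L}^n u$ and $\mathcal{L}$ commutes with the semigroup, Taylor-expanding $t\mapsto u(t,x)$ at $t=0$ to order $\alpha$ and evaluating at $t=\eta$ (the iterated Dynkin formula, see \cite{evans2012introduction}) gives
\[
\E\bigl[\psi(X_\eta)\bigr]=\sum_{n=0}^{\alpha}\frac{\eta^n}{n!}\,\mathcal{L}^n\psi(x)+R_\alpha,\qquad R_\alpha=\frac{1}{\alpha!}\int_0^\eta(\eta-s)^\alpha\,\E\bigl[(\mathcal{L}^{\alpha+1}\psi)(X_s)\bigr]\,ds.
\]
This identity is the skeleton; the two remainders in the lemma will arise from bounding $R_\alpha$ and from re-expanding $\mathcal{L}^n$ in $\rho$.

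Next I would insert the drift expansion $b=\sum_{m=0}^{\beta}\rho^m b_m+O(\eta\rho^{\beta+1})$ into $\mathcal{L}$, writing $\mathcal{L}=\sum_{m=0}^{\beta}\rho^m A_m+\eta A_\Delta+E$, where $E$ is the first-order operator carrying the drift remainder, so $E=O(\eta\rho^{\beta+1})$. Expanding each $\mathcal{L}^n$ by the noncommutative multinomial rule, every factor is one of the $A_m$ (weight $\rho^m$), the diffusion block $\eta A_\Delta$, or $E$. The $E$-free contributions are indexed exactly by strings $\ell\in\{0,\Delta,1,\dots,\beta\}^n$, and the combinatorial identity
\[
\sum_{\ell\in\{0,\Delta,1,\dots,\beta\}^n}\rho^{|\ell|}\,A_{\ell_1}\cdots A_{\ell_n}=\Bigl(\textstyle\sum_{m=0}^{\beta}\rho^m A_m+A_\Delta\Bigr)^{n}
\]
organizes them cleanly; grouping by $m=|\ell|$ and combining the diffusion slots with the prefactor $\eta^n/n!$ reproduces the displayed double sum at the advertised orders.

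It then remains to show that everything discarded lands in the two error classes. Any product containing at least one $E$ factor is $O(\eta\rho^{\beta+1})$, since $E$ already carries that order and the prefactor contributes a nonnegative power of $\eta$; likewise any surviving string of total $\rho$-degree $|\ell|>\beta$ requires at least two numeric slots, hence a prefactor of order $\eta^{\ge 1}$, so it is again $O(\eta\rho^{\beta+1})$. Contributions whose total $\eta$-degree exceeds $\alpha$ fold into $O(\eta^{\alpha+1})$. Combined with $R_\alpha$, these yield the stated $O(\eta^{\alpha+1})+O(\eta\rho^{\beta+1})$.

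The main obstacle will be the rigorous control of $R_\alpha$, i.e.\ bounding $\sup_{0\le s\le\eta}\bigl|\E[(\mathcal{L}^{\alpha+1}\psi)(X_s)]\bigr|$ by a $G$-function independent of $\eta,\rho$. This is precisely where the growth hypotheses enter: because $\mathcal{L}$ is second order, $\mathcal{L}^{\alpha+1}\psi$ involves derivatives of $\psi$ up to order $2\alpha+2$ together with derivatives of the coefficients, so $\psi\in G^{2\alpha+2}$ and $b_m,\sigma_0\in G^{2\alpha}$ guarantee $\mathcal{L}^{\alpha+1}\psi\in G$, with the coefficients of $\mathcal{L}$ bounded uniformly for $\eta,\rho\in(0,1)$. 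Feeding this into the uniform-in-time moment bounds on $X_s$ (the analogue of condition (ii) of Theorem~\ref{theorem general adaption}) gives $\E[|(\mathcal{L}^{\alpha+1}\psi)(X_s)|]\le K(x)$ with $K\in G$, whence $|R_\alpha|\le\tfrac{\eta^{\alpha+1}}{(\alpha+1)!}K(x)=O(\eta^{\alpha+1})$. Establishing those a priori moment bounds, and the differentiability needed to justify $\partial_t^n u=\mathcal{L}^n u$ and Dynkin's formula at each order under the stated "mild regularity conditions," is the most technical part of the argument.
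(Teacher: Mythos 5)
Your proof follows essentially the same route as the paper's: both pass to the generator of the SDE, expand the semigroup $e^{\eta L}$ (Dynkin's formula) to order $\alpha$ in $\eta$, re-expand each power $L^n$ by the noncommutative multinomial theorem using the drift decomposition, and sweep strings containing the drift remainder or of total $\rho$-degree exceeding $\beta$ into $O(\eta\rho^{\beta+1})$. Your treatment of the time-remainder is in fact more careful than the paper's, which simply asserts the $O(\eta^{\alpha+1})$ error; you give the integral form of $R_\alpha$ and correctly identify that $\psi\in G^{2\alpha+2}$, $b_m,\sigma_0\in G^{2\alpha}$, together with uniform moment bounds, are what make it $O(\eta^{\alpha+1})$ with a $G$-function constant.

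However, there is one substantive discrepancy that you gloss over at the step ``combining the diffusion slots with the prefactor $\eta^n/n!$ reproduces the displayed double sum at the advertised orders.'' Your generator is $\mathcal{L}=\sum_{m}\rho^m A_m+\eta A_\Delta+E$, with the factor $\eta$ multiplying $A_\Delta$ --- which is the correct generator for the SDE $dX_t=b\,dt+\sqrt{\eta}\,\sigma_0\,dW_t$ --- so in your multinomial expansion a string $\ell$ with $k$ diffusion slots carries weight $\eta^{n+k}\rho^{|\ell|}/n!$. The lemma's displayed formula instead assigns $\eta^{n}\rho^{|\ell|}/n!$ to \emph{every} string, i.e., it treats $A_\Delta$ as unweighted; the paper's own proof does the same, writing $L=\sum_m\rho^m A_m+A_\Delta+O(\eta\rho^{\beta+1})$ with no $\eta$ on the diffusion part. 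These two expansions are genuinely different, not equal ``at the advertised orders'': with $d=1$, $b\equiv 0$, $\sigma_0\equiv 1$, $\psi(z)=(z-x)^2$, the truth is $\E[\psi(X_\eta)]=\eta^2$, which your expansion gives, whereas the printed formula gives $\eta\,A_\Delta\psi=\eta$. So your argument, carried out honestly, proves a \emph{corrected} version of the lemma (each string should carry an extra factor $\eta^{\#\{i:\,\ell_i=\Delta\}}$), which is also the version consistent with the paper's downstream one-step moment lemma, where $\sigma_0\sigma_0^\top$ enters second moments at order $\eta^2$. You should either state that corrected expansion explicitly, or note that the printed statement (and the paper's proof of it) omits these $\eta$ weights; as written, your final identification with the displayed formula does not go through.
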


\begin{proof}
Let 
\[
L\;\phi(x)\;=\;\bigl(b(x,\rho)\cdot\nabla \phi\bigr)(x)
\;+\;\tfrac12\,[\sigma_0\sigma_0^T]^{(ij)}(x)\,\partial_{i j}^2\phi(x)
\]
be the infinitesimal generator of the diffusion.  Since 
\[
b(x,\rho)
=\sum_{m=0}^\beta\rho^m\,b_m(x)+O(\rho^{\beta+1})
\;, 
\]
we may write
\[
L
=\sum_{m=0}^\beta\rho^m\,A_m
\;+A_\Delta
\;+\;O(\rho^{\beta+1})\,,
\]
where \(A_m\) and \(A_\Delta\) are as in the statement.  By Dynkin's formula (or equivalently the semigroup expansion),
\[
\E\bigl[\psi(X_\eta)\bigr]
=\Bigl(e^{\eta L}\psi\Bigr)(x)
=\sum_{n=0}^{\alpha}\frac{\eta^n}{n!}\,L^n\psi(x)
\;+\;O(\eta^{\alpha+1})\,.
\]
It remains to expand each power \(L^n\).  By the multinomial theorem,
\[
L^n
=\bigl(\sum_{m=0}^\beta\rho^mA_m+ A_\Delta+O(\rho^{\beta+1})\bigr)^n
=\sum_{m=0}^\beta\rho^m\!
\sum_{\substack{
\ell\in\{0,\Delta,1,\dots,\beta\}^n\\
|\ell| = m
}}
A_{\ell_1}A_{\ell_2}\cdots A_{\ell_n}
\;+\;O(\rho^{\beta+1})\,.
\]
Hence
\[
\frac{\eta^n}{n!}\,L^n\psi(x)
=\frac{\eta^n}{n!}\sum_{m=0}^\beta\rho^m
\sum_{\substack{
\ell\in\{0,\Delta,1,\dots,\beta\}^n\\
|\ell| = m
}}
A_{\ell_1}A_{\ell_2}\cdots A_{\ell_n}\psi(x)
\;+\;O(\eta^{n+1})+O(\eta\rho^{\beta+1}).
\]
Summing over \(n=0,1,\dots,\alpha\) and collecting the remainders \(O(\eta^{\alpha+1})\) and \(O(\eta\rho^{\beta+1})\)
yields exactly the claimed two‐parameter expansion.
\end{proof}

Since our focus in this paper is on the order-(1,1) weak approximation, we now present the one-step approximation lemma for SDEs in the case \(\alpha = \beta = 1\), as follows. For readers interested in higher-order two-parameter weak approximations, it is sufficient to apply higher-order truncations of the Dynkin and Taylor expansions in the two lemmas below and then match the corresponding moments at each order.

\begin{lemma}[One‐step moment estimates up to $\eta^1,\rho^1$ for SDEs]\label{general continuous lemma}
Suppose the drift admits the expansion
\[
b(x,\rho) = b_0(x) + \rho\,b_1(x) + O(\rho^2),
\qquad
\sigma(x)=\sigma_0(x),
\]
and assume $b_0,b_1,\sigma_0\in G^2$.  Let $\widetilde\Delta(x)$ be the one‐step increment defined in \eqref{delta}.  Then:
\begin{enumerate}[(i)]
  \item $\displaystyle \E[\widetilde\Delta_{(i)}(x)] 
    = \eta\bigl(b^{(i)}_0(x) + \rho\,b^{(i)}_1(x)\bigr) 
      + O(\eta^2) + O(\eta\,\rho^2).$
  \item $\displaystyle \E\bigl[\widetilde\Delta_{(i)}(x)\,\widetilde\Delta_{(j)}(x)\bigr]
=
\eta^2\bigl(
b_0^{(i)}(x)\,b_0^{(j)}(x)
\;+\;
\sum_k\sigma_0^{(i,k)}(x)\,\sigma_0^{(j,k)}(x)
\bigr)
\;+\;
\eta^2\rho\bigl(
b_0^{(i)}(x)\,b_1^{(j)}(x)
\;+\;
b_1^{(i)}(x)\,b_0^{(j)}(x)
\bigr)+\;O(\eta^2\rho^2)\;+\;O(\eta^3).$
  \item $\E\!\Bigl[\prod_{j=1}^3\bigl|\widetilde\Delta_{(i_j)}(x)\bigr|\Bigr]
    = O(\eta^3).$
\end{enumerate}
\end{lemma}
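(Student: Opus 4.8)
The plan is to read off each moment of the one-step increment $\widetilde\Delta(x)=\widetilde X_1-x=X_\eta-x$ (the displacement of the SDE \eqref{eq:general approximating-sde} started at $X_0=x$) by applying the two-parameter Dynkin expansion (Lemma~\ref{lemma:Two–parameter Itô–Taylor expansion}) to well-chosen polynomial test functions, and to handle the third absolute moment by a crude Burkholder--Davis--Gundy estimate. Concretely, parts (i) and (ii) follow from expanding $\E[\psi(X_\eta)]$ for the coordinate monomial $\psi(x)=x_{(i)}$ and the quadratic monomial $\psi(x)=x_{(i)}x_{(j)}$, respectively; both are smooth with polynomial growth, so they lie in every $G^{\alpha}$ (Definition~\ref{def:function-sets}), and because they are low-degree monomials the expansion differentiates $b_0,b_1,\sigma_0$ only to within the orders guaranteed by $b_0,b_1,\sigma_0\in G^2$. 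Throughout I keep in mind that the diffusion coefficient of \eqref{eq:general approximating-sde} is $\sqrt\eta\,\sigma_0$, so the second-order operator $A_\Delta$ effectively carries one extra factor of $\eta$ relative to the drift operators $A_0,A_1$.

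For (i) I apply the expansion with $\alpha=\beta=1$ to $\psi(x)=x_{(i)}$. Since $\psi$ is linear, $A_\Delta\psi\equiv 0$ and only the drift operators survive: the $n=0$ term returns $x_{(i)}$, the $n=1,m=0$ term contributes $\eta\,A_0\psi=\eta\,b_0^{(i)}$, and the $n=1,m=1$ term contributes $\eta\rho\,A_1\psi=\eta\rho\,b_1^{(i)}$. Subtracting $x_{(i)}$ and absorbing the remainders yields (i), the $O(\eta\rho^2)$ term being the image of the $O(\rho^2)$ drift remainder after the single factor of $\eta$ from one drift step. For (ii) I expand $\psi(x)=x_{(i)}x_{(j)}$ to order $\eta^2$ (i.e.\ $\alpha=2,\ \beta=1$), which is necessary because both the squared drift and the diffusion enter at this order. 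The diffusion contribution is the single $A_\Delta$ occurrence at $n=1$: the $\eta^1$ prefactor of the first-order term times the extra $\eta$ from the $\sqrt\eta$ scaling gives $\eta^2\,\tfrac12[\sigma_0\sigma_0^{\top}]^{(kl)}\partial^2_{kl}(x_{(i)}x_{(j)})=\eta^2\sum_k\sigma_0^{(i,k)}\sigma_0^{(j,k)}$, while the squared-drift contribution comes from the $n=2$ terms. The delicate point is then to form the centered second moment via $\E[\widetilde\Delta_{(i)}\widetilde\Delta_{(j)}]=\E[X_iX_j]-x_i\E[X_j]-x_j\E[X_i]+x_ix_j$ and to check that all terms linear in $x$ (in particular the $x$-weighted directional derivatives $x_{(j)}(b\cdot\nabla)b^{(i)}$ and its transpose) cancel, leaving exactly $\eta^2\,b^{(i)}b^{(j)}$ from the drift. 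Substituting $b=b_0+\rho b_1+O(\rho^2)$ and collecting orders produces the $\eta^2$ and $\eta^2\rho$ coefficients, with the $O(\rho^2)$ drift remainder now scaled by $\eta^2$ to give the $O(\eta^2\rho^2)$ term and higher $\eta$-powers collected into $O(\eta^3)$.

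For (iii) the absolute values preclude a smooth expansion, so I would bound crudely. By generalized H\"older, $\E\bigl[\prod_{j=1}^3|\widetilde\Delta_{(i_j)}(x)|\bigr]\le\prod_{j=1}^3\bigl(\E|\widetilde\Delta_{(i_j)}(x)|^3\bigr)^{1/3}$, and a standard moment estimate for \eqref{eq:general approximating-sde} gives $\E\|\widetilde\Delta(x)\|^{2p}=O(\eta^{2p})$: the drift increment $\int_0^\eta b\,ds$ is $O(\eta)$ in $L^{2p}$ by Jensen, and the diffusion increment $\sqrt\eta\int_0^\eta\sigma_0\,dW$ is $O(\eta)$ in $L^{2p}$ by Burkholder--Davis--Gundy together with the polynomial-growth bound on $\sigma_0$ and the uniform moment control from hypothesis (ii) of Theorem~\ref{theorem general adaption}. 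Taking $p=2$ gives $\E|\widetilde\Delta_{(i)}|^3\le(\E|\widetilde\Delta_{(i)}|^4)^{3/4}=O(\eta^3)$, so the product is $O(\eta^3)$.

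The main obstacle I anticipate is the order bookkeeping rather than any single hard estimate: one must ensure the diffusion operator lands at $\eta^2$ (via the $\sqrt\eta$ scaling) rather than $\eta^1$, verify the exact cancellation of the $x$-linear cross terms in the centered second moment so that only $b_0^{(i)}b_0^{(j)}+\sum_k\sigma_0^{(i,k)}\sigma_0^{(j,k)}$ and the correction $b_0^{(i)}b_1^{(j)}+b_1^{(i)}b_0^{(j)}$ survive, and track how the $O(\rho^2)$ drift remainder is multiplied by $\eta$ for the first moment and by $\eta^2$ for the second, so that the stated remainders $O(\eta\rho^2)$ and $O(\eta^2\rho^2)$ emerge precisely. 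Everything else is routine given the preceding expansion lemma.
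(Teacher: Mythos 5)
Your proposal is correct and rests on the paper's central tool—the two-parameter Dynkin expansion applied to polynomial test functions—but the execution differs in two respects worth recording. First, for parts (i) and (ii) the paper expands the \emph{centered} monomials $\psi_s(z)=\prod_{j=1}^s\bigl(z_{(i_j)}-x_{(i_j)}\bigr)$ rather than the raw monomials $z_{(i)}$ and $z_{(i)}z_{(j)}$: since $\psi_2$ and all of its first partial derivatives vanish at $z=x$, every drift term of order $\eta$ and every $x$-weighted cross term of order $\eta^2$ is killed automatically, and the expansion hands you $\eta^2\bigl(b^{(i)}b^{(j)}+\sum_k\sigma_0^{(i,k)}\sigma_0^{(j,k)}\bigr)$ directly; the ``delicate cancellation'' of the $x$-linear terms that you plan to verify by hand (which does go through, as you anticipated) is exactly what this choice of test function makes unnecessary. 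Second, for part (iii) the paper applies the same expansion to $\psi_3$ and asserts $\E\bigl[|\psi_3(X_\eta)|\bigr]=O(\eta^3)$; since the absolute value is not smooth, that step is not literally an application of the expansion, and your generalized-H\"older plus Burkholder--Davis--Gundy estimate (giving $\E\|\widetilde\Delta(x)\|^{2p}=O(\eta^{2p})$ and hence each $L^3$ norm of order $\eta$) is the rigorous version of what the paper gestures at—on this point your argument is more careful than the paper's. Your bookkeeping of the diffusion scaling, namely that $A_\Delta$ enters one power of $\eta$ later than $A_0,A_1$ because the SDE carries $\sqrt{\eta}\,\sigma_0$, is also the correct reading and is what makes conclusion (ii) come out at order $\eta^2$ rather than $\eta$. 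One small correction: for the moment bound in (iii) you should not invoke hypothesis (ii) of Theorem~\ref{theorem general adaption}, which concerns the discrete iterates $x_k$; what you need is $\E\sup_{0\le s\le\eta}\|X_s\|^{2p}<\infty$ for the SDE started at the deterministic point $x$, which follows from the polynomial growth of $b,\sigma_0\in G^2$ together with the existence and uniqueness conditions of Theorem~\ref{thm:exist-unique-sde}.
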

\begin{proof}
For each $s=1,2,3$ and any choice of indices $i_1,\dots,i_s$, define the test function
\[
\psi_s(z)
\;=\;
\prod_{j=1}^s\bigl(z_{(i_j)}-x_{(i_j)}\bigr).
\]
Since $\psi_s\in C^4(\R^d)$ with at most polynomial growth, we may invoke Lemma A.2 with truncation orders $\alpha=1$ and $\beta=1$.  This yields, 
\[
\mathbb{E}\bigl[\psi(x)\bigr]
=
\psi(x)
+ \eta \sum_{\ell\in\{0,\Delta\}} A_\ell \psi(x)
+ \eta \rho\,A_1\psi(x)
+ O(\eta^2) + O(\eta\rho^2).
\]

\medskip\noindent
(i) \emph{First moment.}
Here 
\[
\psi_1(z)=z_{(i)}-x_{(i)}, 
\]
Hence
\[
\E[\widetilde\Delta_{(i)}(x)]
=\eta\,b_0^{(i)}(x)\;+\;\eta\,\rho\,b_1^{(i)}(x)
\;+\;O(\eta^2)\;+\;O(\eta\,\rho^2),
\]
proving (i).

\medskip\noindent
(ii) \emph{Second moment.}
Now 
\[
\psi_2(z)
=(z_{(i)}-x_{(i)})(z_{(j)}-x_{(j)}),
\]
It follows that
\[
\E\bigl[\widetilde\Delta_{(i)}(x)\,\widetilde\Delta_{(j)}(x)\bigr]
=\eta^2\Bigl[b_0^{(i)}b_0^{(j)}\;+\;\sum_k\sigma_0^{(i,k)}\sigma_0^{(j,k)}\Bigr]
+\eta^2\rho\bigl[b_0^{(i)}b_1^{(j)}+b_1^{(i)}b_0^{(j)}\bigr]
+O(\eta^2\rho^2)\;+\;O(\eta^3),
\]
proving (ii).

\medskip\noindent
(iii) \emph{Third moment.}
Finally,
\[
\psi_3(z)
=\prod_{j=1}^3\bigl(z_{(i_j)}-x_{(i_j)}\bigr),
\]
and since each nonzero term in the expansion has total order $n\ge3$, Lemma A.2 gives
\[
\E\!\Bigl[\prod_{j=1}^3\bigl|\widetilde\Delta_{(i_j)}(x)\bigr|\Bigr]
=\E\bigl[|\psi_3(X_{k+1})|\bigr]
=O(\eta^3),
\]
establishing (iii).  This completes the proof.
\end{proof}

Similar to the continuous‐time setting, we require the following one‐step error lemma for the discrete algorithm in the case \(\alpha = \beta = 1\):

\begin{lemma}[One‐step moment estimates up to $\eta^1,\rho^1$ for the discrete algorithm]\label{general discrete lemma}
Suppose the discrete update~\ref{eq:general discrete-iteration} admits the expansion
\[
h(x,\gamma,\rho) = h_0(x,\gamma) + \rho\,h_1(x,\gamma) + O(\rho^2),
\]
and assume $h_0,h_1\in G^2$.  Let $\Delta(x)$ be the one‐step increment defined in \eqref{delta}.  Then:
\begin{enumerate}[(i)]
  \item $\displaystyle \E[\Delta_{(i)}(x)] 
    = \eta h^{(i)}_0(x) + \eta \rho\,h^{(i)}_1(x) 
       + O(\eta\,\rho^2).$
  \item $\displaystyle \E\bigl[\Delta_{(i)}(x)\,\Delta_{(j)}(x)\bigr]
=
\eta^2\bigl(
h_0^{(i)}(x)\,h_0^{(j)}(x)+\Sigma_{0,0}^{(ij)}(x)
\bigr)+
\eta^2\rho \bigl(
h_0^{(i)}(x)\,h_1^{(j)}(x)
\;+\;
h_1^{(i)}(x)\,h_0^{(j)}(x)+\Sigma_{0,1}^{(ij)}(x)+\Sigma_{1,0}^{(ij)}(x)
\bigr)+\;O(\eta^2\rho^2).$
  \item $\E\!\Bigl[\prod_{j=1}^3\bigl|\Delta_{(i_j)}(x)\bigr|\Bigr]
    = O(\eta^3).$
\end{enumerate}
where $h_0(x)=\E h_0(x,\gamma)$, $h_1(x)=\E h_1(x,\gamma)$, $\Sigma_{0,0}(x)=\Cov(h_0(x,\gamma),h_0(x,\gamma))$, $\Sigma_{0,1}(x)=\Cov(h_0(x,\gamma),h_1(x,\gamma)).$
\end{lemma}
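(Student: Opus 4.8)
The plan is to exploit the fact that for the discrete iteration \eqref{eq:general discrete-iteration} the one-step increment is \emph{exactly} linear in $\eta$: by definition \eqref{delta}, $\Delta(x)=x_1-x=\eta\,h(x,\gamma,\rho)$. Consequently every moment of $\Delta$ factors as a fixed power of $\eta$ times the corresponding moment of $h$, and the only genuine expansion is in $\rho$. Substituting the hypothesized expansion $h(x,\gamma,\rho)=h_0(x,\gamma)+\rho\,h_1(x,\gamma)+O(\rho^2)$ then reduces each of (i)--(iii) to taking an expectation over $\gamma$ and collecting powers of $\rho$. This is why, unlike the continuous-time Lemma~\ref{general continuous lemma}, no intrinsic $O(\eta^2)$ remainder appears in (i): the discrete step contributes no higher powers of $\eta$ on its own.

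First I would handle (i) by linearity: $\E[\Delta_{(i)}(x)]=\eta\,\E[h^{(i)}(x,\gamma,\rho)]=\eta\,\E[h_0^{(i)}(x,\gamma)]+\eta\rho\,\E[h_1^{(i)}(x,\gamma)]+O(\eta\rho^2)$, and the definitions $h_0(x)=\E h_0(x,\gamma)$, $h_1(x)=\E h_1(x,\gamma)$ give the claim. For (ii) the key is to expand the product \emph{before} averaging:
\[
h^{(i)}h^{(j)}=h_0^{(i)}h_0^{(j)}+\rho\bigl(h_0^{(i)}h_1^{(j)}+h_1^{(i)}h_0^{(j)}\bigr)+O(\rho^2).
\]
Taking $\E[\cdot]$ and rewriting each second moment via $\E[h_a^{(i)}h_b^{(j)}]=h_a^{(i)}(x)h_b^{(j)}(x)+\Sigma_{a,b}^{(ij)}(x)$ for $(a,b)\in\{(0,0),(0,1),(1,0)\}$ converts products of random quantities into products of means plus the matching covariances $\Sigma_{0,0},\Sigma_{0,1},\Sigma_{1,0}$. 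Multiplying by the prefactor $\eta^2$ and grouping the $\rho^0$ and $\rho^1$ contributions reproduces the stated formula, with remainder $O(\eta^2\rho^2)$.

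For (iii) I would simply factor out $\eta^3$, leaving $\eta^3\,\E[\prod_{j=1}^3|h^{(i_j)}(x,\gamma,\rho)|]$; since $h_0,h_1\in G^2$ and $\rho\in(0,1)$, the integrand is dominated by a polynomial-growth function independent of $\rho$, so the expectation is bounded by some $K\in G$ and the whole expression is $O(\eta^3)$. The main obstacle is the uniform bookkeeping of the $O(\rho^2)$ remainder. I must check that the remainder in the $h$-expansion is controlled by a function in $G$ independent of $\rho$, so that it survives the expectation as a true $O(\rho^2)$ and, crucially in (ii), its cross-products with the bounded leading terms $h_0,h_1$ do not degrade below $O(\rho^2)$. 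Because the leading terms lie in $G^2$ and $\rho<1$, these products are again of polynomial growth, so the uniform $O(\rho^2)$ control passes cleanly through multiplication by the bounded factors and by $\eta^2$; this integrability/regularity check is essentially the only nontrivial point.
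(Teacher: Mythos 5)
Your proposal is correct and follows essentially the same route as the paper's own proof: both exploit the exact identity $\Delta(x)=\eta\,h(x,\gamma,\rho)$ so that only the $\rho$-expansion requires work, then take expectations term by term, identify means and covariances for (ii), and bound the third moment by polynomial-growth integrability for (iii). Your extra remark on uniform (in $\gamma$) control of the $O(\rho^2)$ remainder makes explicit a point the paper leaves implicit, but it does not change the argument.
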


\begin{proof}
Recall that 
\[
\Delta(x)\;=\;\eta\,h(x,\gamma,\rho)
\;=\;\eta\bigl(h_0(x,\gamma)+\rho\,h_1(x,\gamma)+O(\rho^2)\bigr).
\]
Hence for each coordinate \(i\),

\medskip\noindent
(i) \emph{First moment.}
\[
\begin{aligned}
\E[\Delta_{(i)}(x)]
&= \eta\,\E\bigl[h_0^{(i)}(x,\gamma)+\rho\,h_1^{(i)}(x,\gamma)+O(\rho^2)\bigr]\\
&= \eta\bigl(h_0^{(i)}(x)+\rho\,h_1^{(i)}(x)\bigr)
   +\;O(\eta\,\rho^2).
\end{aligned}
\]

\medskip\noindent
(ii) \emph{Second moment.}
\[
\begin{aligned}
\E\bigl[\Delta_{(i)}(x)\,\Delta_{(j)}(x)\bigr]
&= \eta^2\,\E_{\gamma}\Bigl[\bigl(h_0^{(i)}(x,\gamma)+\rho\,h_1^{(i)}(x,\gamma)\bigr)
                \bigl(h_0^{(j)}(x,\gamma)+\rho\,h_1^{(j)}(x,\gamma)\bigr)\Bigr]
   +O(\eta^2\rho^2)\\
&= \eta^2\Bigl\{
     \E_{\gamma}\bigl[h_0^{(i)}(x,\gamma)\,h_0^{(j)}(x,\gamma)\bigr]
   + \rho\,\bigl(\E_{\gamma}[h_0^{(i)}(x,\gamma)\,h_1^{(j)}(x,\gamma)]\\
                &\quad +\E_{\gamma}[h_1^{(i)}(x,\gamma)\,h_0^{(j)}(x,\gamma)]\bigr)
   \Bigr\}
   +O(\eta^2\rho^2)\\
&= \eta^2\Bigl\{
     h_0^{(i)}(x)\,h_0^{(j)}(x)
   + \Sigma_{0,0}^{(ij)}(x) \Bigr\}\\
&\quad +\;\eta^2\rho\Bigl\{
     h_0^{(i)}(x)\,h_1^{(j)}(x)
   + h_1^{(i)}(x)\,h_0^{(j)}(x)
   + \Sigma_{0,1}^{(ij)}(x)
   + \Sigma_{1,0}^{(ij)}(x)
   \Bigr\}
   +O(\eta^2\rho^2).
\end{aligned}
\]

\medskip\noindent
(iii) \emph{Third moment.}
Since \(h_0,h_1\in G^2\) implies that all moments up to order three are finite and \(\Delta=O(\eta)\), we have
\[
\E\Bigl[\bigl|\Delta_{(i_1)}\Delta_{(i_2)}\Delta_{(i_3)}\bigr|\Bigr]
= O(\eta^3).
\]
This completes the proof.
\end{proof}
\end{section}
\section{SDE approximation for USAM variants}\label{app:section usam}
Recall that the update rules of USAM variants are defined by:
\begin{align}
\text{mini‐batch USAM:}\quad
x_{k+1} &= x_k - \eta \,\nabla f_{\gamma_k}\bigl(x_k + \rho\,\nabla f_{\gamma_k}(x_k)\bigr)
\label{app:batch-usam update}\\[1ex]
\text{n‐USAM:}\quad
x_{k+1} &= x_k - \eta \,\nabla f_{\gamma_k}\bigl(x_k + \rho\,\nabla f(x_k)\bigr)
\label{app:n-usam update}\\[1ex]
\text{m‐USAM:}\quad
x_{k+1} &= x_k - \frac{\eta\,m}{|\gamma|}\sum_{\substack{\I_j\subset\gamma_k,|\I_j|=m}}
\nabla f_{\I_j}\bigl(x_k + \rho\,\nabla f_{\I_j}(x_k)\bigr)
\label{app:m-usam update}
\end{align}

In this section, we impose the following growth assumption on the functions \(f\) and \(f_\gamma\):
\begin{assumption}\label{assump:fG}
The functions \(f\) and \(f_i\) belong to the class \(G^4\).
\end{assumption}

\subsection{Mini-batch USAM}
For the mini-batch USAM algorithm \eqref{app:batch-usam update}, we define the continuous‐time approximation \(X_t\) as the solution to the following SDE:
\begin{align}\label{appendix usam sde}
dX_t
&=-\nabla f^{USAM}(X_t)\,dt
+\sqrt{\eta\,\Sigma^{USAM}(X_t)}\,dW_t,
\end{align}
where 
\[f^{USAM}(X_t):=f(X_t)+\frac{\rho}{2}\|\nabla f(X_t)\|^2+\frac{\rho}{2|\gamma|}\mathrm{tr}(V(X_t))\]
\begin{equation}\label{sigma usam}
    \Sigma^{USAM}(X_t)
    :=\Sigma_{0,0}(X_t)
      +\rho\bigl(\Sigma_{0,1}(X_t)+\Sigma_{0,1}^\top(X_t)\bigr).
\end{equation}
\[
\Sigma_{0,0}(X_t) := \mathbb{E} \left[ 
\bigl( \nabla f_\gamma(X_t) - \nabla f(X_t) \bigr) 
\bigl( \nabla f_\gamma(X_t) - \nabla f(X_t) \bigr)^\top 
\right]
\]
\[\Sigma_{0,1}(X_t):=\mathbb{E} \bigl[ (\nabla f_\gamma(X_t) - \nabla f(X_t)) \cdot 
\bigl(  \nabla^2 f_\gamma(X_t)\nabla f_\gamma(X_t)-\E[\nabla^2 f_\gamma(X_t)\nabla f_\gamma(X_t)  ]\bigr) ^\top\bigr].\]

\begin{theorem}[mini-batch USAM SDE, adapted from Theorem 3.2 of \citet{compagnoni2023sde}]\label{app:usam sde theorem new}
Under Assumptions~\ref{assumption:gaussian} and~\ref{assump:fG}, let $0<\eta<1$, $T>0$, and $N=\lfloor T/\eta\rfloor$. Denote by $\{x_k\}_{k=0}^N$ the mini-batch USAM iterates in \eqref{batch-usam}, and let $\{X_t\}_{t\in[0,T]}$ be the solution of the SDE \eqref{appendix usam sde}. Suppose:
\begin{enumerate}[(i)]
  \item The functions
    \[
      \nabla f^{\mathrm{USAM}}
      \;=\;\nabla\Bigl(f+\frac{\rho}{2}\|\nabla f\|^2+\frac{\rho}{2|\gamma|}\mathrm{tr}(V)\Bigr)
      \quad\text{and}\quad
      \sqrt{\Sigma^{\mathrm{USAM}}}
    \]
    are Lipschitz on $\mathbb{R}^d$.
  \item The mapping
    \[
      h_\gamma(x)
      \;=\;
      -\nabla f_\gamma\bigl(x+\rho\,\nabla f_\gamma(x)\bigr)
    \]
    satisfies, almost surely, the Lipschitz condition
    \[
      \|\nabla h_\gamma(x)-\nabla h_\gamma(y)\|
      \;\le\;
      L_\gamma\,\|x-y\|,
      \qquad
      \forall\,x,y\in\mathbb{R}^d,
    \]
    where $L_\gamma>0$ a.s.\ and $\mathbb{E}[L_\gamma^m]<\infty$ for every $m\ge1$.
\end{enumerate}
Then $\{X_t:t\in[0,T]\}$ is an order-$(1,1)$ weak approximation of $\{x_k\}$, namely: for each $g\in G^2$, there exists a constant $C>0$, independent of~$\eta, \rho$, such that
\[
  \max_{0\le k\le N}
  \Bigl|\,
    \mathbb{E}\bigl[g(x_k)\bigr]
    -\mathbb{E}\bigl[g(X_{k\eta})\bigr]
  \Bigr|
  \;\le\;
  C\,\bigl(\eta+\rho^2\bigr).
\]
\end{theorem}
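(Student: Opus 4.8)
The plan is to instantiate Theorem~\ref{theorem general adaption} with $\alpha=\beta=1$: I verify its two hypotheses for the mini-batch USAM map and then read off the stated order-$(1,1)$ bound $C(\eta+\rho^2)$ for every $g\in G^2$. Writing the update~\eqref{app:batch-usam update} as $x_{k+1}=x_k+\eta\,h_\gamma(x_k)$ with $h_\gamma(x)=-\nabla f_\gamma\bigl(x+\rho\,\nabla f_\gamma(x)\bigr)$, the first step is to Taylor-expand $h_\gamma$ in $\rho$ about $x$,
\[
h_\gamma(x)=\underbrace{-\nabla f_\gamma(x)}_{h_0(x,\gamma)}+\rho\underbrace{\bigl(-\nabla^2 f_\gamma(x)\,\nabla f_\gamma(x)\bigr)}_{h_1(x,\gamma)}+O(\rho^2),
\]
so that $h_\gamma$ has exactly the form demanded by Lemma~\ref{general discrete lemma}. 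Taking the batch expectation, unbiasedness gives $\E[h_0(x,\gamma)]=-\nabla f(x)=b_0(x)$; for the $\rho^1$ term I use the pointwise identity $\nabla^2 f_\gamma(x)\,\nabla f_\gamma(x)=\tfrac12\nabla\|\nabla f_\gamma(x)\|^2$ and interchange $\nabla$ with $\E$ to obtain
\[
\E[h_1(x,\gamma)]=-\tfrac12\nabla\,\E\|\nabla f_\gamma(x)\|^2=-\nabla\Bigl(\tfrac12\|\nabla f(x)\|^2+\tfrac1{2|\gamma|}\mathrm{tr}(V(x))\Bigr)=b_1(x),
\]
where the last display invokes the moment identity $\E\|\nabla f_\gamma(x)\|^2=\|\nabla f(x)\|^2+\tfrac1{|\gamma|}\mathrm{tr}(V(x))$ from Assumption~\ref{assumption:gaussian}. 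This is the conceptual heart of the proof: the $\rho$-coefficient of the drift reproduces exactly the implicit regularizer appearing in the SDE~\eqref{appendix usam sde}.

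With $h_0=b_0$ and $h_1=b_1$, matching is routine. The first-moment estimate of Lemma~\ref{general discrete lemma}(i) and that of Lemma~\ref{general continuous lemma}(i) agree up to $O(\eta^2)+O(\eta\rho^2)$, which is the $s=1$ matching required by hypothesis~(i) of Theorem~\ref{theorem general adaption}; the accompanying two-fold bound $\E|\Delta_{(i)}\Delta_{(j)}|=O(\eta^2)$ follows from Lemma~\ref{general discrete lemma}(ii). Note that for order-$(1,1)$ only the first moment is binding, so the diffusion is not uniquely forced; nonetheless the choice~\eqref{sigma usam} is the natural one, since comparing the $\eta^2$ and $\eta^2\rho$ second moments of the two lemmas and using $\Sigma_{1,0}=\Sigma_{0,1}^\top$ shows it matches the discrete covariance to this order. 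Hypothesis~(ii), the uniform bound $\E\|x_k\|^{2m}\le K_2(x)$ for $k\le N=\lfloor T/\eta\rfloor$, follows from the linear growth implied by the Lipschitz hypothesis on $h_\gamma$ together with $\E[L_\gamma^m]<\infty$, via a standard discrete Gronwall iteration. Feeding both verified hypotheses into Theorem~\ref{theorem general adaption} yields the bound $C(\eta+\rho^2)$.

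I expect the genuine difficulty to lie not in the clean drift computation but in the uniform remainder control. To legitimately apply Lemma~\ref{general discrete lemma}, one must show that the $O(\rho^2)$ Taylor remainder of $h_\gamma$ carries a coefficient that is a fixed $G$-function, integrable against the batch distribution, so that it survives taking expectations with the stated $\eta\rho^2$ dependence; this consumes both the $G^4$ regularity of Assumption~\ref{assump:fG} and the almost-sure Lipschitz control of $\nabla h_\gamma$ with $\E[L_\gamma^m]<\infty$. The same regularity inputs are what close the uniform moment bound in hypothesis~(ii), so the assumptions of the theorem are precisely calibrated to these two technical points rather than to the moment algebra, which is otherwise mechanical.
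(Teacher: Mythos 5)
Your proposal is correct and follows essentially the same route as the paper: the paper's (sketched) proof of this theorem likewise instantiates Theorem~\ref{theorem general adaption} with the two-parameter one-step Lemmas~\ref{general continuous lemma} and~\ref{general discrete lemma}, uses Assumption~\ref{assumption:gaussian} to convert $\E\bigl[\nabla^2 f_\gamma(x)\nabla f_\gamma(x)\bigr]=\tfrac12\nabla\,\E\|\nabla f_\gamma(x)\|^2$ into the stated implicit regularizer, and obtains the uniform moment bound via Lemma~\ref{lem:moment-bound}. The only step you leave implicit---which the paper's fully written analogues (e.g.\ Theorem~\ref{app:n-usam sde}) state explicitly---is that the Lipschitz hypothesis~(i) is first used with Theorem~\ref{thm:exist-unique-sde} to guarantee existence and uniqueness of a strong solution of the SDE, a routine addition.
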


\begin{proof}[Proof Sketch]
Theorem~\ref{app:usam sde theorem new} follows by replacing the single‐parameter Lemmas A.1, A.2 and A.5 in \citet{compagnoni2023sde} with our two‐parameter versions—Theorem~\ref{theorem general adaption}, Lemma~\ref{general continuous lemma} and Lemma~\ref{general discrete lemma}—imposing the extra global Lipschitz conditions to guarantee existence and uniqueness of the strong solution, and using our Assumption~\ref{assumption:gaussian} to expand the drift term.  We therefore omit the routine algebraic details.
\end{proof}

\subsection{n-USAM}
For the n-USAM algorithm, we define the continuous‐time approximation \(X_t\) as the solution to the following SDE:
\begin{align}\label{appendix n-usam sde}
dX_t
&=-\nabla f^{n\text{-}USAM}(X_t)\,dt
+\sqrt{\eta\,\Sigma^{n\text{-}USAM}(X_t)}\,dW_t,
\end{align}
where 
\[f^{n\text{-}USAM}(X_t):=f(X_t)+\frac{\rho}{2}\|\nabla f(X_t)\|^2\]
\begin{equation}\label{sigma n-usam}
    \Sigma^{n\text{-}USAM}(X_t)
    :=\Sigma_{0,0}(X_t)
      +\rho\bigl(\Sigma_{0,1}(X_t)+\Sigma_{0,1}^\top(X_t)\bigr).
\end{equation}
\[
\Sigma_{0,0}(X_t) := \mathbb{E} \left[ 
\bigl( \nabla f_\gamma(X_t) - \nabla f(X_t) \bigr) 
\bigl( \nabla f_\gamma(X_t) - \nabla f(X_t) \bigr)^\top 
\right]
\]
\[\Sigma_{0,1}(X_t):=\mathbb{E} \bigl[ (\nabla f_\gamma(X_t) - \nabla f(X_t)) \cdot 
\bigl( ( \nabla^2 f_\gamma(X_t)-\nabla^2 f(X_t)) \nabla f(X_t) \bigr) ^\top\bigr].\]

We begin by deriving, via the following lemma, a one‐step error estimate for the n-USAM discrete algorithm, which will be used to prove the main approximation theorem.

\begin{lemma}[One‐step moment estimates for n‐USAM up to $\eta^1,\rho^1$]\label{lemma:n-usam new}
Under Assumptions \ref{assumption:gaussian} and \ref{assump:fG}.  Define
\[
\partial_{i}f^{\mathrm{n\text{-}USAM}}(x)
:=\partial_{i}f(x)
\;+\;\rho\sum_j\partial^2_{ij}f(x)\,\partial_{j}f(x),
\]
Let $\Delta(x)$ be the one‐step increment defined in \eqref{delta}. Then:
\begin{enumerate}[(i)]
  \item 
  $\displaystyle 
    \E\bigl[\Delta_{(i)}(x)\bigr]
    = -\,\partial_{i}f^{\mathrm{n\text{-}USAM}}(x)\,\eta
      + O(\eta\,\rho^2).$

  \item 
  $\displaystyle 
    \E\bigl[\Delta_{(i)}(x)\,\Delta_{(j)}(x)\bigr]
    =\eta^2\Bigl(
    \partial_i f(x)\,\partial_j f(x)+\;\Sigma^{\mathrm{n\text{-}USAM}}_{(ij)}(x)\Bigr)+\eta^2\rho\Bigl(
        \partial_i f(x)\sum_{l=1}^d \partial^2_{j l}f(x)\,\partial_l f(x)
        + \partial_j f(x)\sum_{l=1}^d \partial^2_{i l}f(x)\,\partial_l f(x)
      \Bigr)
  + O(\eta^2\rho^2).$

  \item 
  $\displaystyle 
    \E\Bigl[\prod_{j=1}^3\bigl|\Delta_{(i_j)}(x)\bigr|\Bigr]
    = O(\eta^3).$
\end{enumerate}
\end{lemma}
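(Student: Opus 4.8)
The plan is to reduce the claim to a direct application of the abstract discrete one-step Lemma~\ref{general discrete lemma} by first extracting the $\rho$-expansion of the per-sample increment. The n-USAM increment is
\[
\Delta(x) = -\eta\,\nabla f_\gamma\bigl(x + \rho\,\nabla f(x)\bigr),
\]
so I would begin by Taylor-expanding the perturbed gradient in $\rho$ about the base point $x$. The defining feature of n-USAM is that the perturbation direction $\nabla f(x)$ is the \emph{deterministic} full-batch gradient, so the expansion reads
\[
\nabla f_\gamma\bigl(x+\rho\,\nabla f(x)\bigr) = \nabla f_\gamma(x) + \rho\,\nabla^2 f_\gamma(x)\,\nabla f(x) + O(\rho^2),
\]
where Assumption~\ref{assump:fG} ($f_i\in G^4$) guarantees the remainder is controlled uniformly in the polynomial-growth sense. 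This identifies the coefficients $h_0(x,\gamma)=-\nabla f_\gamma(x)$ and $h_1(x,\gamma)=-\nabla^2 f_\gamma(x)\,\nabla f(x)$ in the notation of Lemma~\ref{general discrete lemma}.

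Next I would compute the two averaged coefficients $h_0(x)=\E h_0(x,\gamma)$ and $h_1(x)=\E h_1(x,\gamma)$. By the unbiasedness in Assumption~\ref{assumption:gaussian}, $\E[\nabla f_\gamma(x)]=\nabla f(x)$ and $\E[\nabla^2 f_\gamma(x)]=\nabla^2 f(x)$. The crucial simplification is that because $\nabla f(x)$ is a fixed vector (not $\gamma$-dependent), it factors out of the expectation, giving $h_1(x)=-\nabla^2 f(x)\,\nabla f(x)$ with \emph{no} additional covariance-of-Hessian term. This is precisely why the n-USAM drift lacks the $\tfrac{\rho}{2|\gamma|}\mathrm{tr}(V)$ regularizer present in mini-batch USAM. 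Substituting $h_0,h_1$ into part (i) of Lemma~\ref{general discrete lemma} yields $\E[\Delta_{(i)}(x)]=-\eta\,\partial_i f^{\mathrm{n\text{-}USAM}}(x)+O(\eta\rho^2)$, establishing claim (i).

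For claim (ii) I would apply part (ii) of Lemma~\ref{general discrete lemma} and match the resulting covariances $\Sigma_{0,0}=\Cov(h_0,h_0)$ and $\Sigma_{0,1}=\Cov(h_0,h_1)$ against the definitions stated before the lemma; the deterministic factor $\nabla f(x)$ again pulls out, producing $\Sigma_{0,1}=\E[(\nabla f_\gamma-\nabla f)((\nabla^2 f_\gamma-\nabla^2 f)\nabla f)^\top]$ as required. The only bookkeeping is a regrouping: using $\Sigma_{1,0}=\Sigma_{0,1}^\top$, the contributions $\eta^2\Sigma_{0,0}+\eta^2\rho(\Sigma_{0,1}+\Sigma_{1,0})$ collapse into the single coefficient $\eta^2\,\Sigma^{\mathrm{n\text{-}USAM}}$, while the drift products $h_0^{(i)}h_1^{(j)}+h_1^{(i)}h_0^{(j)}$ become the explicit $\eta^2\rho$ term $\partial_i f\sum_l\partial^2_{jl}f\,\partial_l f+\partial_j f\sum_l\partial^2_{il}f\,\partial_l f$. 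Claim (iii) is immediate from $\Delta=O(\eta)$ together with the finite-moment guarantee supplied by Assumption~\ref{assump:fG}, exactly as in part (iii) of Lemma~\ref{general discrete lemma}.

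The main obstacle is not any single hard estimate but rather certifying that the $\rho$-Taylor remainder and all moment terms stay uniformly bounded in the class $G$ so that Lemma~\ref{general discrete lemma} can be invoked verbatim; this is exactly what the $G^4$ growth assumption is engineered to provide. Conceptually, the one point to keep front and center is that the \emph{determinism} of the perturbation direction $\nabla f(x)$ is what lets the Hessian and the gradient decouple in expectation, thereby removing any stochastic-gradient-noise covariance term from the drift.
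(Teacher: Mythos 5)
Your proposal is correct and takes essentially the same route as the paper's proof: Taylor-expand $\nabla f_\gamma\bigl(x+\rho\,\nabla f(x)\bigr)$ in $\rho$ with a polynomially bounded remainder, identify $h_0(x,\gamma)=-\nabla f_\gamma(x)$ and $h_1(x,\gamma)=-\nabla^2 f_\gamma(x)\,\nabla f(x)$, compute their expectations using the determinism of the perturbation direction, and invoke Lemma~\ref{general discrete lemma}. The covariance bookkeeping you spell out for part (ii) is precisely what the paper compresses into ``yields exactly the three moment expansions.''
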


\begin{proof}
Recall that the n-USAM update is
\[
x_{k+1}
= x_k \;-\;\eta\,\nabla f_{\gamma_k}\bigl(x_k + \rho\,\nabla f(x_k)\bigr),
\]
so the one-step increment
\[
\Delta(x)\;=\;x_{k+1}-x_k
=\eta h(x,\gamma,\rho),
\]
where we define
\[
h(x,\gamma,\rho)
:=-\nabla f_\gamma\bigl(x + \rho\,\nabla f(x)\bigr).
\]
By Taylor’s theorem with integral remainder \citep{folland2005higher} we have, for each \(\gamma\),
\[
\nabla f_{\gamma}\bigl(x + \rho\,\nabla f(x)\bigr)
= \nabla f_{\gamma}(x)
+ \rho\,\nabla^2 f_{\gamma}(x)\,\nabla f(x)
+ R(x,\gamma,\rho),
\]
where
\[
R(x,\gamma,\rho)
= \int_{0}^{1}
(1 - t)\;
D^3 f_{\gamma}\bigl(x + t\,\rho\,\nabla f(x)\bigr)
\bigl[\rho\,\nabla f(x),\,\rho\,\nabla f(x)\bigr]
\,dt.
\]
Here \(D^3 f_{\gamma}(y)\) denotes the third‐order tensor of partial derivatives of \(f_{\gamma}\) at \(y\), and \(D^3f_{\gamma}(y)[u,v]\) its bilinear action on vectors \(u,v\).

Because \(f_{\gamma}\in G^3\), there exists a polynomially bounded function \(K(x) \in G\) such that
\[
\bigl\|D^3 f_{\gamma}(y)\bigr\|
\;\le\;
K(x),
\quad
\forall\,y\;\text{with}\;\|y - x\|\le \rho\,\|\nabla f(x)\|.
\]
Hence
\[
\bigl\|R(x,\gamma,\rho)\bigr\|
\;\le\;
\int_{0}^{1}(1-t)\,
\bigl\|D^3 f_{\gamma}(x + t\rho\nabla f(x))\bigr\|
\,\bigl\|\rho\nabla f(x)\bigr\|^2\,dt
\;\le\;
K(x)\,\frac{\rho^2}{2}\,\|\nabla f(x)\|^2
= O\bigl(\rho^2\bigr),
\]
uniformly in \(\gamma\).  Accordingly, a Taylor expansion in \(\rho\) gives
\[
\nabla f_{\gamma}(x + \rho\nabla f(x))
= \nabla f_{\gamma}(x)
+ \rho\,\nabla^2 f_{\gamma}(x)\,\nabla f(x)
+ O(\rho^2).
\] 

Hence
\[
h(x,\gamma,\rho)
= h_0(x,\gamma)\;+\;\rho\,h_1(x,\gamma)\;+\;O(\rho^2),
\]
with
\[
h_0(x,\gamma)
:=-\nabla f_\gamma(x),
\qquad
h_1(x,\gamma)
:=-\nabla^2 f_\gamma(x)\,\nabla f(x).
\]
By Assumption~\ref{assumption:gaussian} and Assumption~\ref{assump:fG}, each \(h_0,h_1\in G^2\), and
\[
\E[h_0(x,\gamma)]=-\nabla f(x),\quad \E[h_1(x,\gamma)]
=-\nabla^2 f(x)\,\nabla f(x).
\]
We may therefore apply Lemma~\ref{general discrete lemma} with these
\(h_0,h_1\), which yields exactly the three moment expansions up to
\(\eta^1,\rho^1\).
\end{proof}

In Lemma~\ref{lemma:n-usam new}, we derived one-step moment estimates for the n-USAM discrete algorithm and, via Lemma~\ref{general continuous lemma}, for its corresponding SDE update~\eqref{app:n-usam update}. These estimates demonstrate that the first- and second-order moments satisfy the matching conditions of Theorem~\ref{theorem general adaption}.  Together with the uniform moment bounds from Lemma~\ref{lem:moment-bound}, we are now ready to establish the main weak-approximation theorem for n-USAM.

\begin{theorem}[n-USAM SDE]\label{app:n-usam sde}
Under Assumptions~\ref{assumption:gaussian} and~\ref{assump:fG}, let $0<\eta<1$, $T>0$, and $N=\lfloor T/\eta\rfloor$. Denote by $\{x_k\}_{k=0}^N$ the n-USAM iterates in \eqref{n-usam}, and let $\{X_t\}_{t\in[0,T]}$ be the solution of the SDE \eqref{appendix n-usam sde}. Suppose:
\begin{enumerate}[(i)]
  \item The functions
    \[
      \nabla f^{\mathrm{n\text{-}USAM}}
      \;=\;\nabla\Bigl(f+\tfrac{\rho}{2}\|\nabla f\|^2\Bigr)
      \quad\text{and}\quad
      \sqrt{\Sigma^{\mathrm{n\text{-}USAM}}}
    \]
    are Lipschitz on $\mathbb{R}^d$.
  \item The mapping
    \[
      h_\gamma(x)
      \;=\;
      -\nabla f_\gamma\bigl(x+\rho\,\nabla f(x)\bigr)
    \]
    satisfies, almost surely, the Lipschitz condition
    \[
      \|\nabla h_\gamma(x)-\nabla h_\gamma(y)\|
      \;\le\;
      L_\gamma\,\|x-y\|,
      \qquad
      \forall\,x,y\in\mathbb{R}^d,
    \]
    where $L_\gamma>0$ a.s.\ and $\mathbb{E}[L_\gamma^m]<\infty$ for every $m\ge1$.
\end{enumerate}
Then $\{X_t:t\in[0,T]\}$ is an order-$(1,1)$ weak approximation of $\{x_k\}$, namely: for each $g\in G^2$, there exists a constant $C>0$, independent of~$\eta, \rho$, such that
\[
  \max_{0\le k\le N}
  \Bigl|\,
    \mathbb{E}\bigl[g(x_k)\bigr]
    -\mathbb{E}\bigl[g(X_{k\eta})\bigr]
  \Bigr|
  \;\le\;
  C\,\bigl(\eta+\rho^2\bigr).
\]
\end{theorem}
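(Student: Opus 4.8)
The plan is to present Theorem~\ref{app:n-usam sde} as a direct instance of the general two-parameter weak approximation result Theorem~\ref{theorem general adaption} with $\alpha=\beta=1$, so that the task reduces to verifying that theorem's two hypotheses for the n-USAM iteration \eqref{n-usam} and the candidate SDE \eqref{appendix n-usam sde}. Most of the analytic work is already packaged in the preceding lemmas, so the argument is largely one of assembly and careful two-parameter error bookkeeping.

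First I would identify the abstract data of Lemma~\ref{general continuous lemma} with the SDE \eqref{appendix n-usam sde}: the drift expands as $b(x,\rho)=b_0(x)+\rho\,b_1(x)+O(\rho^2)$ with $b_0=-\nabla f$ and $b_1=-\nabla^2 f\,\nabla f$, so that $b_0+\rho\,b_1=-\nabla f^{\mathrm{n\text{-}USAM}}$ to first order in $\rho$, while the diffusion is $\sigma_0=\sqrt{\Sigma^{\mathrm{n\text{-}USAM}}}$ from \eqref{sigma n-usam}. The continuous one-step moments then follow from Lemma~\ref{general continuous lemma} and the discrete ones from Lemma~\ref{lemma:n-usam new}. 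Verifying hypothesis (i) of Theorem~\ref{theorem general adaption} is then term-by-term: the first moments agree, both equalling $-\eta\,\partial_i f^{\mathrm{n\text{-}USAM}}(x)$ up to $O(\eta^2)+O(\eta\rho^2)$, which is exactly the $\eta^2+\eta\rho^2$ tolerance required for $s=1$; and the formally required $s=\alpha+1=2$ estimate, $\E\prod_{j=1}^2\bigl|\Delta_{(i_j)}(x)\bigr|=O(\eta^2)$, is immediate since $\Delta(x)=\eta\,h(x,\gamma,\rho)$ is of order $\eta$ with finite moments. As a bonus, the chosen diffusion also makes the second moments agree: the $\rho$-part of $\Sigma^{\mathrm{n\text{-}USAM}}$, namely $\rho(\Sigma_{0,1}+\Sigma_{0,1}^\top)$, reproduces the cross-covariance $\Sigma_{0,1}+\Sigma_{1,0}$ appearing in Lemma~\ref{lemma:n-usam new}(ii); this validates $\Sigma^{\mathrm{n\text{-}USAM}}$ as a faithful diffusion, even though only the drift (first-moment) match is strictly required for the order-$(1,1)$ conclusion.

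The conceptually important point specific to n-USAM, which I would make explicit, is the form of $b_1$. Because the perturbation in \eqref{n-usam} is taken along the \emph{deterministic} full gradient $\nabla f$ rather than a random mini-batch gradient, the Taylor expansion in Lemma~\ref{lemma:n-usam new} gives $\E[\nabla^2 f_\gamma(x)\,\nabla f(x)]=\nabla^2 f(x)\,\nabla f(x)=\nabla\bigl(\tfrac12\|\nabla f(x)\|^2\bigr)$, with \emph{no} additional $\operatorname{tr}(V)$ contribution. This is exactly why the drift regularizer collapses to $\tfrac{\rho}{2}\|\nabla f\|^2$ and the SGN-covariance term present in mini-batch USAM disappears, consistent with the informal statement in Theorem~\ref{theorem:n-usam sde}.

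I expect the genuine technical obstacle to be hypothesis (ii), the uniform-in-$k$ moment bound $\E\|x_k\|^{2m}\le K_2(x)$ for every $m$. This does not follow from the one-step expansions and is where the global Lipschitz assumptions do real work: assumption (i) yields a well-posed strong solution of \eqref{appendix n-usam sde} with controlled moments, while the almost-sure Lipschitz bound on $h_\gamma$ with $\E[L_\gamma^m]<\infty$ for all $m$ prevents the discrete iterates from developing unbounded moments. Supplying this bound is precisely the role of Lemma~\ref{lem:moment-bound}, which I would invoke for hypothesis (ii). With both hypotheses established, Theorem~\ref{theorem general adaption} applied with $\alpha=\beta=1$ delivers the stated bound $C(\eta+\rho^2)$ for every $g\in G^2$, completing the proof.
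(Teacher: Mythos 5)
Your proposal is correct and takes essentially the same route as the paper's own (very terse) proof: establish well-posedness of the SDE~\eqref{appendix n-usam sde} from the Lipschitz assumption (i) (the paper cites Theorem~\ref{thm:exist-unique-sde}), then verify the hypotheses of Theorem~\ref{theorem general adaption} with $\alpha=\beta=1$ by combining Lemma~\ref{general continuous lemma} (continuous one-step moments), Lemma~\ref{lemma:n-usam new} (discrete one-step moments), and Lemma~\ref{lem:moment-bound} (uniform moment bounds). Your write-up is in fact more explicit than the paper's two-sentence proof about which matchings are genuinely required at order $(1,1)$ — first moments plus the two-fold absolute-product bound — and correctly observes that the second-moment agreement furnished by $\Sigma^{\mathrm{n\text{-}USAM}}$ is a bonus rather than a necessity.
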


\begin{proof}
First, we verify that SDE~\eqref{appendix n-usam sde} admits a unique strong solution.  
By assumption, both the drift and diffusion coefficients are globally Lipschitz, which in turn implies a linear‐growth condition.  
Therefore, Theorem~\ref{thm:exist-unique-sde} applies and yields the existence and uniqueness of a strong solution on $[0,T]$.

Then, by Lemmas~\ref{general continuous lemma}, \ref{lemma:n-usam new}, and~\ref{lem:moment-bound}, all the conditions of Theorem~\ref{theorem general adaption} are satisfied, and the proof is complete.
\end{proof}
\begin{remark}
    The Lipschitz conditions are to ensure that the SDE has a unique strong solution with uniformly bounded moments. It is possible to appropriately relax them if we allow weak solutions \citep{mil1986weak}.
\end{remark}

\subsection{m-USAM}
For the m-USAM algorithm, we define the continuous‐time approximation \(X_t\) as the solution to the following SDE:
\begin{equation}\label{appendix m-usam sde}
dX_t=-\nabla\bigl( f(X_t)+\frac{\rho}{2}\|\nabla f(X_t)\|^2+\frac{\rho}{2m} \mathrm{tr}(V(X_t))\bigr)dt+\sqrt{\frac{m\eta}{|\gamma|}\Sigma^{m-USAM}(X_t)}dW_t,
\end{equation}
where 
\begin{equation}\label{sigma m-usam}
    \Sigma^{m-USAM}(X_t):=\Sigma_{0,0}(X_t)+\rho (\Sigma_{0,1}(X_t)+\Sigma_{0,1}(X_t)^\top),
\end{equation}
\[
\Sigma_{0,0}(X_t) := \mathbb{E} \left[ 
\bigl( \nabla f_\I(X_t) - \nabla f(X_t) \bigr) 
\bigl( \nabla f_\I(X_t) - \nabla f(X_t) \bigr)^\top 
\right],
\]

\[\Sigma_{0,1}(X_t):=\mathbb{E} \bigl[ (\nabla f_\I(X_t) - \nabla f(X_t)) \cdot 
\bigl( \nabla^2 f_\I(X_t) \nabla f_\I(X_t) 
- \mathbb{E} [ \nabla^2 f_\I(X_t) \nabla f_\I(X_t) ] \bigr) ^\top\bigr].\]

We begin by deriving, via the following lemma, a one‐step error estimate for the m-USAM discrete algorithm, which will be used to prove the main approximation theorem.

\begin{lemma}[One‐step moment estimates for m‐USAM up to $\eta^1,\rho^1$]\label{lemma:m-usam new}
Under Assumptions \ref{assumption:gaussian} and \ref{assump:fG}.  Define
    \[
    \partial_{i} f^{m-USAM}(x) :=\partial_{i} f(x) + \rho  \mathbb{E}\left[\sum_{j=1}^d \partial_{ij}^2 f_\I(x) \partial_{j} f_\I(x)\right].
    \]
Let $\Delta(x)$ be the one‐step increment defined in \eqref{delta}. Then:
\begin{enumerate}[(i)]
  \item 
  $\displaystyle 
    \E\bigl[\Delta_{(i)}(x)\bigr]
    = -\,\partial_{i}f^{\mathrm{m\text{-}USAM}}(x)\,\eta
      + O(\eta\,\rho^2).$

  \item 
  $\displaystyle 
    \E\bigl[\Delta_{(i)}(x)\,\Delta_{(j)}(x)\bigr]
    =\eta^2\Bigl(
    \partial_i f(x)\,\partial_j f(x)+\;\Sigma^{\mathrm{m\text{-}USAM}}_{(ij)}(x)\Bigr)+\eta^2\rho\Bigl(
        \partial_i f(x)\sum_{l=1}^d \partial^2_{j l}f_\I(x)\,\partial_l f_\I(x)
        + \partial_j f(x)\sum_{l=1}^d \partial^2_{i l}f_\I(x)\,\partial_l f_\I(x)
      \Bigr)
  + O(\eta^2\rho^2).$

  \item 
  $\displaystyle 
    \E\Bigl[\prod_{j=1}^3\bigl|\Delta_{(i_j)}(x)\bigr|\Bigr]
    = O(\eta^3).$
\end{enumerate}
\end{lemma}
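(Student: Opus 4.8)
The plan is to mirror the argument used for n-USAM in Lemma~\ref{lemma:n-usam new}: write the one-step increment as $\Delta(x)=\eta\,h(x,\gamma,\rho)$, Taylor-expand the perturbed micro-batch gradients in $\rho$ to extract the $\rho^0$ and $\rho^1$ coefficients $h_0,h_1$, and then feed these into the generic discrete one-step estimate Lemma~\ref{general discrete lemma}. From the m-USAM update \eqref{app:m-usam update} one reads off
\[
h(x,\gamma,\rho)=-\frac{m}{|\gamma|}\sum_{\substack{\I_j\subset\gamma,\,|\I_j|=m}}\nabla f_{\I_j}\bigl(x+\rho\,\nabla f_{\I_j}(x)\bigr).
\]
Applying Taylor's theorem with integral remainder to each summand separately—exactly as in Lemma~\ref{lemma:n-usam new}, but with the full gradient $\nabla f(x)$ replaced by the micro-batch gradient $\nabla f_{\I_j}(x)$—and bounding the third-order remainder by $K(x)\rho^2/2$ uniformly in $\gamma$ via Assumption~\ref{assump:fG} ($f_i\in G^4$), I obtain $h=h_0+\rho\,h_1+O(\rho^2)$ with
\[
h_0(x,\gamma)=-\frac{m}{|\gamma|}\sum_{\I_j}\nabla f_{\I_j}(x),\qquad h_1(x,\gamma)=-\frac{m}{|\gamma|}\sum_{\I_j}\nabla^2 f_{\I_j}(x)\,\nabla f_{\I_j}(x).
\]

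The step where m-USAM genuinely departs from n-USAM, and which I expect to be the main obstacle, is the computation of $\E[h_1]$. Because the micro-batches $\I_j$ partition $\gamma$ and the per-sample gradients are i.i.d.\ by Assumption~\ref{assumption:gaussian}, the average over micro-batches collapses to a single micro-batch expectation, $\E[h_1(x,\gamma)]=-\E[\nabla^2 f_\I(x)\,\nabla f_\I(x)]$. Writing this coordinate-wise and decomposing each product into a mean term plus a covariance term,
\[
\E\bigl[\partial^2_{ij}f_\I\,\partial_j f_\I\bigr]=\partial^2_{ij}f\,\partial_j f+\Cov\bigl(\partial^2_{ij}f_\I,\partial_j f_\I\bigr),
\]
the covariance of a size-$m$ average of i.i.d.\ terms carries a factor $1/m$, so that summing over $j$ and invoking the single-sample identity $\sum_j\Cov(\partial^2_{ij}f_i,\partial_j f_i)=\tfrac12\,\partial_i\mathrm{tr}(V)$ yields $\tfrac12\,\partial_i\norm{\nabla f}^2+\tfrac{1}{2m}\,\partial_i\mathrm{tr}(V)$. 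This is precisely the identity producing the $\rho/(2m)\,\mathrm{tr}(V)$ regularizer in the drift; the delicate point is tracking how the covariance term rescales by $1/m$ under micro-batch averaging, since cross-batch contributions vanish by independence of disjoint $\I_j$. The first moment is immediate, $\E[h_0]=-\nabla f(x)$, because $\tfrac{m}{|\gamma|}\sum_{\I_j}\nabla f_{\I_j}=\nabla f_\gamma$.

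With $h_0,h_1\in G^2$ identified and their expectations in hand, claim (i) follows directly, matching the stated $\partial_i f^{m\text{-}USAM}$. For claims (ii) and (iii) I would apply parts (ii) and (iii) of Lemma~\ref{general discrete lemma} verbatim, with $\Sigma_{0,0}=\Cov(h_0(x,\gamma))$ and $\Sigma_{0,1}=\Cov(h_0(x,\gamma),h_1(x,\gamma))$; here disjointness of the micro-batches again kills the cross-batch covariances and collapses each $\Sigma$ to the single-micro-batch expressions in \eqref{sigma m-usam}, while the third-moment bound $O(\eta^3)$ is inherited from $\Delta=O(\eta)$ together with finiteness of third moments under $G^2$. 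The remaining manipulations are routine and parallel Lemma~\ref{lemma:n-usam new}, so the only essentially new ingredient is the $1/m$-scaling in the covariance term described above.
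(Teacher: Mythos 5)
Your proposal is correct and takes essentially the same route as the paper's proof: write $\Delta(x)=\eta\,h(x,\gamma,\rho)$, apply Taylor's theorem with integral remainder to each micro-batch term to extract $h_0(x,\gamma)=-\tfrac{m}{|\gamma|}\sum_{\I_j}\nabla f_{\I_j}(x)$ and $h_1(x,\gamma)=-\tfrac{m}{|\gamma|}\sum_{\I_j}\nabla^2 f_{\I_j}(x)\nabla f_{\I_j}(x)$, bound the remainder by $O(\rho^2)$ via Assumption~\ref{assump:fG}, compute $\E[h_0]=-\nabla f(x)$ and $\E[h_1]=-\E[\nabla^2 f_\I(x)\nabla f_\I(x)]$, and invoke Lemma~\ref{general discrete lemma}. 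The only divergence is your extra decomposition of $\E\bigl[\nabla^2 f_\I\,\nabla f_\I\bigr]$ into $\tfrac12\nabla\|\nabla f\|^2+\tfrac{1}{2m}\nabla\operatorname{tr}(V)$ via the $1/m$-scaling of the covariance term: this identity is correct, but it is not needed for the lemma as stated (whose drift is deliberately left in expectation form) and really belongs to the interpretation of the drift in Theorem~\ref{app:m-usam sde}; the paper's proof of this lemma stops at the expectation form and, like you, delegates claims (i)--(iii) to Lemma~\ref{general discrete lemma}.
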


\begin{proof}
Recall that the m-USAM update is
\[x_{k+1} = x_k - \frac{\eta\,m}{|\gamma|}\sum_{\substack{\I_j\subset\gamma_k,|\I_j|=m}}
\nabla f_{\I_j}\bigl(x_k + \rho\,\nabla f_{\I_j}(x_k)\bigr)\]
so the one-step increment
\[
\Delta(x)\;=\;x_{k+1}-x_k
=\eta h(x,\gamma,\rho),
\]
where we define
\[
h(x,\gamma,\rho)
:=- \frac{m}{|\gamma|}\sum_{\substack{\I_j\subset\gamma_k,|\I_j|=m}}
\nabla f_{\I_j}\bigl(x_k + \rho\,\nabla f_{\I_j}(x_k)\bigr).
\]
By Taylor’s theorem with integral remainder \citep{folland2005higher} we have, for each \(\gamma\),
\[
\sum_{\substack{\I_j\subset\gamma_k,\\|\I_j|=m}}
\nabla f_{\I_j}\bigl(x_k + \rho\,\nabla f_{\I_j}(x_k)\bigr)
=\sum_{\substack{\I_j\subset\gamma_k,\\|\I_j|=m}}
\nabla f_{\I_j}(x_k) + \rho\,\nabla^2 f_{\I_j}(x_k)\nabla f_{\I_j}(x_k)+R(x,\gamma,\rho)
\]
where
\[
R(x,\gamma,\rho)
= \int_{0}^{1}
(1 - t)\;
D^3 f_{\I_j}\bigl(x + t\,\rho\,\nabla f_{\I_j}(x)\bigr)
\bigl[\rho\,\nabla f_{\I_j}(x),\,\rho\,\nabla f_{\I_j}(x)\bigr]
\,dt.
\]
Here \(D^3 f_{\I_j}(y)\) denotes the third‐order tensor of partial derivatives of \(f_{\I_j}\) at \(y\), and \(D^3f_{\I_j}(y)[u,v]\) its bilinear action on vectors \(u,v\).

Because \(f_{\I_j}\in G^3\), there exists a polynomially bounded function \(K(x) \in G\) such that
\[
\bigl\|D^3 f_{\I_j}(y)\bigr\|
\;\le\;
K(x),
\quad
\forall\,y\;\text{with}\;\|y - x\|\le \rho\,\|\nabla f_{\I_j}(x)\|.
\]
Hence
\begin{align*}
\bigl\|R(x,\gamma,\rho)\bigr\|
\;\le\;
\int_{0}^{1}(1-t)\,
\bigl\|D^3 f_{\I_j}(x + t\rho\nabla f_{\I_j}(x))\bigr\|
\,\bigl\|\rho\nabla f_{\I_j}(x)\bigr\|^2\,dt
\;&\le\;
K(x)\,\frac{\rho^2}{2}\,\|\nabla f_{\I_j}(x)\|^2\\
&= O\bigl(\rho^2\bigr),
\end{align*}
uniformly in \(\gamma\).  Accordingly, a Taylor expansion in \(\rho\) gives
\[
\nabla f_{\gamma}(x + \rho\nabla f(x))
= \nabla f_{\gamma}(x)
+ \rho\,\nabla^2 f_{\gamma}(x)\,\nabla f(x)
+ O(\rho^2).
\] 

Hence
\[
h(x,\gamma,\rho)
= h_0(x,\gamma)\;+\;\rho\,h_1(x,\gamma)\;+\;O(\rho^2),
\]
with
\[
h_0(x,\gamma)
:=-\frac{m}{|\gamma|}\sum_{\substack{\I_j\subset\gamma_k,\\|\I_j|=m}}
\nabla f_{\I_j}(x_k),
\qquad
h_1(x,\gamma)
:=-\frac{m}{|\gamma|}\;\sum_{\substack{\I_j\subset\gamma_k,\\|\I_j|=m}}
 \nabla^2 f_{\I_j}(x_k)\nabla f_{\I_j}(x_k).
\]
By Assumption~\ref{assumption:gaussian} and Assumption~\ref{assump:fG}, each \(h_0,h_1\in G^2\), and
\[
\E[h_0(x,\gamma)]=-\nabla f(x),\quad \E[h_1(x,\gamma)]
=-\E\big[\nabla^2 f_{\I_j}(x)\,\nabla f_{\I_j}(x)\big].
\]
We may therefore apply Lemma~\ref{general discrete lemma} with these
\(h_0,h_1\), which yields exactly the three moment expansions up to
\(\eta^1,\rho^1\).
\end{proof}

In Lemma~\ref{lemma:m-usam new}, we derived one-step moment estimates for the m-USAM discrete algorithm and, via Lemma~\ref{general continuous lemma}, for its corresponding SDE update~\eqref{app:m-usam update}. These estimates demonstrate that the first- and second-order moments satisfy the matching conditions of Theorem~\ref{theorem general adaption}.  Together with the uniform moment bounds from Lemma~\ref{lem:moment-bound}, we are now ready to establish the main weak-approximation theorem for m-USAM.

\begin{theorem}[m-USAM SDE]\label{app:m-usam sde}
Under Assumptions~\ref{assumption:gaussian} and~\ref{assump:fG}, let $0<\eta<1$, $T>0$, and $N=\lfloor T/\eta\rfloor$. Denote by $\{x_k\}_{k=0}^N$ the m-USAM iterates in \eqref{m-usam}, and let $\{X_t\}_{t\in[0,T]}$ be the solution of the SDE \eqref{appendix m-usam sde}. Suppose:
\begin{enumerate}[(i)]
  \item The functions
    \[
      \nabla f^{\mathrm{m\text{-}USAM}}
      \;=\;\nabla\Bigl(f+\frac{\rho}{2}\|\nabla f\|^2+\frac{\rho}{2m}\mathrm{tr}(V)\Bigr)
      \quad\text{and}\quad
      \sqrt{\Sigma^{\mathrm{m\text{-}USAM}}}
    \]
    are Lipschitz on $\mathbb{R}^d$.
  \item The mapping
    \[
      h_\gamma(x)
      \;=-\frac{m}{|\gamma|}\sum_{\substack{\I_j\subset\gamma,|\I_j|=m}}
\nabla f_{\I_j}\bigl(x + \rho\,\nabla f_{\I_j}(x)\bigr)
    \]
    satisfies, almost surely, the Lipschitz condition
    \[
      \|\nabla h_\gamma(x)-\nabla h_\gamma(y)\|
      \;\le\;
      L_\gamma\,\|x-y\|,
      \qquad
      \forall\,x,y\in\mathbb{R}^d,
    \]
    where $L_\gamma>0$ a.s.\ and $\mathbb{E}[L_\gamma^m]<\infty$ for every $m\ge1$.
\end{enumerate}
Then $\{X_t:t\in[0,T]\}$ is an order-$(1,1)$ weak approximation of $\{x_k\}$, namely: for each $g\in G^2$, there exists a constant $C>0$, independent of~$\eta, \rho$, such that
\[
  \max_{0\le k\le N}
  \Bigl|\,
    \mathbb{E}\bigl[g(x_k)\bigr]
    -\mathbb{E}\bigl[g(X_{k\eta})\bigr]
  \Bigr|
  \;\le\;
  C\,\bigl(\eta+\rho^2\bigr).
\]
\end{theorem}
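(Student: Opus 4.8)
The plan is to reuse the general two-parameter weak-approximation machinery exactly as in the n-USAM proof (Theorem~\ref{app:n-usam sde}), since Lemma~\ref{lemma:m-usam new} has already supplied the one-step moment expansions for the discrete m-USAM update. Concretely, I would first invoke the global Lipschitz hypotheses (i)--(ii) to obtain a linear-growth bound on the drift $\nabla f^{m\text{-}USAM}$ and the diffusion $\sqrt{(m/|\gamma|)\,\Sigma^{m\text{-}USAM}}$, so that Theorem~\ref{thm:exist-unique-sde} guarantees a unique strong solution of \eqref{appendix m-usam sde} on $[0,T]$ with finite moments of every order. This, together with the uniform moment bound of Lemma~\ref{lem:moment-bound} on the iterates $x_k$, discharges condition (ii) of Theorem~\ref{theorem general adaption}.

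It then remains to verify the moment-matching condition (i) of Theorem~\ref{theorem general adaption} with $\alpha=\beta=1$. For this I would line up the discrete one-step moments from Lemma~\ref{lemma:m-usam new} against the continuous-time one-step moments delivered by Lemma~\ref{general continuous lemma} applied to \eqref{appendix m-usam sde}. The first moments agree once the SDE drift $b(x,\rho)=b_0(x)+\rho\,b_1(x)$ is read off as $b_0=-\nabla f$ and $b_1=-\nabla\bigl(\tfrac12\|\nabla f\|^2+\tfrac{1}{2m}\mathrm{tr}(V)\bigr)$; the second-moment comparison fixes the diffusion as $\sigma_0\sigma_0^\top=(m/|\gamma|)\Sigma^{m\text{-}USAM}$, with the $\rho$-linear cross terms $\Sigma_{0,1}+\Sigma_{0,1}^\top$ absorbing the $\eta^2\rho$ contribution in Lemma~\ref{lemma:m-usam new}(ii); and the third absolute moments are $O(\eta^3)$ on both sides. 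Since each pairing matches to within $O(\eta^{2}+\eta\rho^{2})$, condition (i) holds and Theorem~\ref{theorem general adaption} yields the order-$(1,1)$ estimate.

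The genuinely substantive step—and the one I expect to be the main obstacle—is bridging the ``raw'' moment formulas of Lemma~\ref{lemma:m-usam new}, written in terms of $\E[\nabla^2 f_\I\,\nabla f_\I]$ and $\Cov(h_0(x,\gamma))$, to the clean closed forms appearing in the SDE. For the drift this requires the identity $\E[\nabla^2 f_\I(x)\,\nabla f_\I(x)]_{(i)}=\partial_i\bigl(\tfrac12\|\nabla f(x)\|^2+\tfrac{1}{2m}\mathrm{tr}(V(x))\bigr)$, which I would prove by writing $\nabla^2 f_\I\,\nabla f_\I=\nabla\bigl(\tfrac12\|\nabla f_\I\|^2\bigr)$, interchanging $\partial_i$ with $\E[\cdot]$ (licensed by the $G^4$ regularity of Assumption~\ref{assump:fG}), and invoking the micro-batch version of the second-moment consequence of Assumption~\ref{assumption:gaussian}, namely $\E\|\nabla f_\I(x)\|^2=\|\nabla f(x)\|^2+\tfrac{1}{m}\mathrm{tr}(V(x))$ (a size-$m$ micro-batch has noise covariance $V/m$). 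For the diffusion I would use the disjointness of the micro-batches $\I_j\subset\gamma$: the $|\gamma|/m$ independent micro-batch gradients, each of covariance $V/m$, combine under the prefactor $(m/|\gamma|)^2$ to give $\Cov(h_0)=V/|\gamma|=(m/|\gamma|)\,\Sigma_{0,0}$, which is exactly the leading diffusion of \eqref{appendix m-usam sde} and simultaneously explains the variance-reduction factor $m/|\gamma|$. Mini-batch USAM (Theorem~\ref{app:usam sde theorem new}) is then recovered as the special case $m=|\gamma|$, and the only real care needed is the micro-batch covariance bookkeeping and ensuring the differentiation–expectation interchange is justified.
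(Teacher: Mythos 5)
Your proposal is correct and follows essentially the same route as the paper's proof: existence and uniqueness of the strong solution via Theorem~\ref{thm:exist-unique-sde} under the Lipschitz hypotheses, then verification of the conditions of Theorem~\ref{theorem general adaption} by combining Lemma~\ref{general continuous lemma}, Lemma~\ref{lemma:m-usam new}, and Lemma~\ref{lem:moment-bound}. The bridging identities you single out as the substantive step---namely $\E\bigl[\nabla^2 f_\I(x)\,\nabla f_\I(x)\bigr]=\nabla\bigl(\tfrac12\|\nabla f(x)\|^2+\tfrac{1}{2m}\mathrm{tr}(V(x))\bigr)$ via $\E\|\nabla f_\I(x)\|^2=\|\nabla f(x)\|^2+\tfrac{1}{m}\mathrm{tr}(V(x))$, and the covariance bookkeeping $\Cov(h_0)=(m/|\gamma|)\,\Sigma_{0,0}=V/|\gamma|$ from the disjointness of the micro-batches---are exactly what the paper's two-paragraph proof leaves implicit, and your treatment of them is correct.
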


\begin{proof}
First, we verify that SDE~\eqref{appendix m-usam sde} admits a unique strong solution.  
By assumption, both the drift and diffusion coefficients are globally Lipschitz, which in turn implies a linear‐growth condition.  
Therefore, Theorem~\ref{thm:exist-unique-sde} applies and yields the existence and uniqueness of a strong solution on $[0,T]$.

Then, by Lemmas~\ref{general continuous lemma}, \ref{lemma:m-usam new}, and~\ref{lem:moment-bound}, all the conditions of Theorem~\ref{theorem general adaption} are satisfied, and the proof is complete.
\end{proof}
\begin{remark}
    The Lipschitz conditions are to ensure that the SDE has a unique strong solution with uniformly bounded moments. It is possible to appropriately relax them if we allow weak solutions \citep{mil1986weak}.
\end{remark}

\section{SDE approximation for SAM variants}\label{app:section sam}
Recall that the update rules of SAM variants are defined by:
\begin{align}
\text{mini‐batch SAM:}\quad
x_{k+1} &= x_k - \eta \,\nabla f_{\gamma_k}\bigl(x_k + \rho\,\frac{\nabla f_{\gamma_k}(x_k)}{\|\nabla f_{\gamma_k}(x_k)\|}\bigr)
\label{app:batch-sam update}\\[1ex]
\text{n‐SAM:}\quad
x_{k+1} &= x_k - \eta \,\nabla f_{\gamma_k}\bigl(x_k + \rho\,\frac{\nabla f(x_k)}{\|\nabla f(x_k)\|}\bigr)
\label{app:n-sam update}\\[1ex]
\text{m‐SAM:}\quad
x_{k+1} &= x_k - \frac{\eta\,m}{|\gamma|}\sum_{\substack{\I_j\subset\gamma_k,|\I_j|=m}}
\nabla f_{\I_j}\bigl(x_k + \rho\,\frac{\nabla f_{\I_j}(x_k)}{\|\nabla f_{\I_j}(x_k)\|}\bigr)
\label{app:m-sam update}
\end{align}

\subsection{Mini-batch SAM}
For the mini-batch SAM algorithm \eqref{app:batch-sam update}, we define the continuous‐time approximation \(X_t\) as the solution to the following SDE:
\begin{align}\label{appendix SAM sde}
dX_t
&=-\nabla f^{SAM}(X_t)\,dt
+\sqrt{\eta\,\Sigma^{SAM}(X_t)}\,dW_t,
\end{align}
where 
\[f^{SAM}(X_t):=f(X_t)+\rho\mathbb{E}\|\nabla f_\gamma(X_t)\|\]
\begin{equation}\label{sigma SAM}
    \Sigma^{SAM}(X_t)
    :=\Sigma_{0,0}(X_t)
      +\rho\bigl(\Sigma_{0,1}(X_t)+\Sigma_{0,1}^\top(X_t)\bigr).
\end{equation}
\[
\Sigma_{0,0}(X_t) := \mathbb{E} \left[ 
\bigl( \nabla f_\gamma(X_t) - \nabla f(X_t) \bigr) 
\bigl( \nabla f_\gamma(X_t) - \nabla f(X_t) \bigr)^\top 
\right]
\]
\[\Sigma_{0,1}(X_t):=\mathbb{E} \bigl[ (\nabla f_\gamma(X_t) - \nabla f(X_t)) \cdot 
\bigl(  \frac{\nabla^2 f_\gamma(X_t)\nabla f_\gamma(X_t)}{\|\nabla f_\gamma(X_t)\|}-\E[\frac{\nabla^2 f_\gamma(X_t)\nabla f_\gamma(X_t)}{\|\nabla f_\gamma(X_t)\|} ]\bigr) ^\top\bigr].\]

\begin{remark}[On normalization at critical points]
Note that SAM is ill-defined when the gradient is zero. To solve this, we may replace the denominator by 
$\|\cdot\|_\varepsilon = \sqrt{\|\cdot\|^2 + \varepsilon^2}$ 
with a fixed $\varepsilon > 0$, which is also a common implementation in practice. 
Under the standing assumption that $\nabla f$ is $L$-Lipschitz, 
the resulting coefficients are globally $O(L/\varepsilon)$-Lipschitz and $C^1$. 
All local Taylor or weak-approximation arguments and moment bounds in Appendix~\ref{app:section sam} 
continue to hold with constants depending on $\varepsilon$ but independent of $\eta, \rho$. 
The stated orders in $\eta, \rho$ are unaffected.
\end{remark}

\begin{theorem}[mini-batch SAM SDE, adapted from Theorem 3.5 of \citet{compagnoni2023sde}]\label{app:SAM sde theorem new}
Under Assumptions~\ref{assumption:gaussian} and~\ref{assump:fG}, let $0<\eta<1$, $T>0$, and $N=\lfloor T/\eta\rfloor$. Denote by $\{x_k\}_{k=0}^N$ the mini-batch SAM iterates in \eqref{batch-sam}, and let $\{X_t\}_{t\in[0,T]}$ be the solution of the SDE \eqref{appendix SAM sde}. Suppose:
\begin{enumerate}[(i)]
  \item The functions
    \[
      \nabla f^{\mathrm{SAM}}
      \;=\;\nabla\bigl(f+\rho\mathbb{E}\|\nabla f_\gamma\|\bigr)
      \quad\text{and}\quad
      \sqrt{\Sigma^{\mathrm{SAM}}}
    \]
    are Lipschitz on $\mathbb{R}^d$.
  \item The mapping
    \[
      h_\gamma(x)
      \;=\;
      -\nabla f_\gamma\bigl(x+\rho\,\frac{\nabla f_\gamma(x)}{\|\nabla f_\gamma(x)\|}\bigr)
    \]
    satisfies, almost surely, the Lipschitz condition
    \[
      \|\nabla h_\gamma(x)-\nabla h_\gamma(y)\|
      \;\le\;
      L_\gamma\,\|x-y\|,
      \qquad
      \forall\,x,y\in\mathbb{R}^d,
    \]
    where $L_\gamma>0$ a.s.\ and $\mathbb{E}[L_\gamma^m]<\infty$ for every $m\ge1$.
\end{enumerate}
Then $\{X_t:t\in[0,T]\}$ is an order-$(1,1)$ weak approximation of $\{x_k\}$, namely: for each $g\in G^2$, there exists a constant $C>0$, independent of~$\eta$, such that
\[
  \max_{0\le k\le N}
  \Bigl|\,
    \mathbb{E}\bigl[g(x_k)\bigr]
    -\mathbb{E}\bigl[g(X_{k\eta})\bigr]
  \Bigr|
  \;\le\;
  C\,\bigl(\eta+\rho^2\bigr).
\]
\end{theorem}

\begin{proof}[Proof Sketch]
Theorem~\ref{app:SAM sde theorem new} follows by replacing the single‐parameter Lemmas A.1, A.2 and A.14 in \citet{compagnoni2023sde} with our two‐parameter versions—Theorem~\ref{theorem general adaption}, Lemma~\ref{general continuous lemma} and Lemma~\ref{general discrete lemma}—imposing the extra global Lipschitz conditions to guarantee existence and uniqueness of the strong solution.  We therefore omit the routine algebraic details.
\end{proof}

\subsection{n-SAM}
For the n-SAM algorithm, we define the continuous‐time approximation \(X_t\) as the solution to the following SDE:
\begin{align}\label{appendix n-SAM sde}
dX_t
&=-\nabla f^{n\text{-}SAM}(X_t)\,dt
+\sqrt{\eta\,\Sigma^{n\text{-}SAM}(X_t)}\,dW_t,
\end{align}
where 
\[f^{n\text{-}SAM}(X_t):=f(X_t)+\rho\|\nabla f(X_t)\|\]
\begin{equation}\label{sigma n-SAM}
    \Sigma^{n\text{-}SAM}(X_t)
    :=\Sigma_{0,0}(X_t)
      +\rho\bigl(\Sigma_{0,1}(X_t)+\Sigma_{0,1}^\top(X_t)\bigr).
\end{equation}
\[
\Sigma_{0,0}(X_t) := \mathbb{E} \left[ 
\bigl( \nabla f_\gamma(X_t) - \nabla f(X_t) \bigr) 
\bigl( \nabla f_\gamma(X_t) - \nabla f(X_t) \bigr)^\top 
\right]
\]
\[\Sigma_{0,1}(X_t):=\mathbb{E} \bigl[ (\nabla f_\gamma(X_t) - \nabla f(X_t)) \cdot 
\bigl( ( \nabla^2 f_\gamma(X_t)-\nabla^2 f(X_t)) \frac{\nabla f(X_t)}{\|\nabla f(X_t)\|} \bigr) ^\top\bigr].\]

We begin by deriving, via the following lemma, a one‐step error estimate for the n-SAM discrete algorithm, which will be used to prove the main approximation theorem.

\begin{lemma}[One‐step moment estimates for n‐SAM up to $\eta^1,\rho^1$]\label{lemma:n-sam new}
Under Assumptions \ref{assumption:gaussian} and \ref{assump:fG}, define
\[
\partial_{i}f^{\mathrm{n\text{-}SAM}}(x)
:=\partial_{i}f(x)
\;+\;\rho\sum_{j=1}^d\partial^2_{ij}f(x)\,\frac{\partial_{j}f(x)}{\|\nabla f(x)\|}.
\]
Let $\Delta(x)$ be the one‐step increment defined in \eqref{delta}. Then:
\begin{enumerate}[(i)]
  \item 
  $\displaystyle 
    \E\bigl[\Delta_{(i)}(x)\bigr]
    = -\,\partial_{i}f^{\mathrm{n\text{-}SAM}}(x)\,\eta
      + O(\eta\,\rho^2).$

  \item 
  $\displaystyle 
    \E\bigl[\Delta_{(i)}(x)\,\Delta_{(j)}(x)\bigr]
    =\eta^2\Bigl(
      \partial_i f(x)\,\partial_j f(x)
      +\Sigma^{\mathrm{n\text{-}SAM}}_{(ij)}(x)
    \Bigr)
    \;+\;\eta^2\,\rho\,
    \Bigl(
      \partial_i f(x)\sum_{l=1}^d \partial^2_{j l}f(x)\,\frac{\partial_l f(x)}{\|\nabla f(x)\|}
      + \partial_j f(x)\sum_{l=1}^d \partial^2_{i l}f(x)\,\frac{\partial_l f(x)}{\|\nabla f(x)\|}
    \Bigr)
    + O(\eta^2\,\rho^2).$

  \item 
  $\displaystyle 
\E\Bigl[\prod_{j=1}^3\bigl|\Delta_{(i_j)}(x)\bigr|\Bigr]
    = O(\eta^3).$
\end{enumerate}
\end{lemma}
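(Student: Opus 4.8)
The plan is to mirror the proof of Lemma~\ref{lemma:n-usam new} for n-USAM, the only structural change being that the deterministic perturbation direction is now the normalized full gradient $\hat u(x) := \nabla f(x)/\|\nabla f(x)\|$ rather than $\nabla f(x)$ itself. First I would write the n-SAM increment as $\Delta(x) = \eta\, h(x,\gamma,\rho)$ with
\[
h(x,\gamma,\rho) := -\nabla f_\gamma\bigl(x + \rho\,\hat u(x)\bigr),
\]
and observe that, crucially, $\hat u(x)$ depends only on the full gradient and is therefore \emph{deterministic} in $\gamma$. This is exactly what keeps the expansion clean and distinguishes n-SAM from mini-batch SAM, where the random normalization $\nabla f_\gamma/\|\nabla f_\gamma\|$ forces the $\E\|\cdot\|$ form of the drift in Theorem~\ref{app:SAM sde theorem new}.

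Second, I would apply Taylor's theorem with integral remainder in the fixed direction $\hat u(x)$, obtaining
\[
\nabla f_\gamma\bigl(x+\rho\,\hat u(x)\bigr)
= \nabla f_\gamma(x) + \rho\,\nabla^2 f_\gamma(x)\,\hat u(x) + R(x,\gamma,\rho),
\]
where $R = \int_0^1 (1-t)\,D^3 f_\gamma\bigl(x+t\rho\,\hat u(x)\bigr)[\rho\,\hat u(x),\rho\,\hat u(x)]\,dt$. Because $\|\hat u(x)\| = 1$, the perturbation has fixed norm $\rho$, so the remainder bound is even simpler than in the n-USAM case: using $f_\gamma\in G^4$ (Assumption~\ref{assump:fG}) to dominate $D^3 f_\gamma$ by some $K(x)\in G$ on the ball $\{y:\|y-x\|\le\rho\}$ gives $\|R\|\le \tfrac12 K(x)\,\rho^2 = O(\rho^2)$ uniformly in $\gamma$. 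This identifies
\[
h_0(x,\gamma) = -\nabla f_\gamma(x), \qquad
h_1(x,\gamma) = -\nabla^2 f_\gamma(x)\,\hat u(x),
\]
and, since the direction is deterministic, $\E[h_0] = -\nabla f(x)$ and $\E[h_1] = -\nabla^2 f(x)\,\hat u(x)$, whose $i$-th component is exactly $-\sum_j \partial^2_{ij} f\,\partial_j f/\|\nabla f\|$, matching the claimed $\rho$-term of $\partial_i f^{\mathrm{n\text{-}SAM}}$.

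Third, with $h_0,h_1\in G^2$ verified from Assumptions~\ref{assumption:gaussian} and~\ref{assump:fG}, I would invoke Lemma~\ref{general discrete lemma} verbatim. Its part (i) gives the first moment; part (ii) produces the $\eta^2$ term $\partial_i f\,\partial_j f + \Sigma_{0,0}^{(ij)}$ together with the $\eta^2\rho$ cross terms $h_0^{(i)}h_1^{(j)} + h_1^{(i)}h_0^{(j)}$ (which unwind into the stated Hessian--gradient products) plus $\Sigma_{0,1}+\Sigma_{1,0}$; collecting $\Sigma_{0,0}+\rho(\Sigma_{0,1}+\Sigma_{0,1}^\top)$ into $\Sigma^{\mathrm{n\text{-}SAM}}$ yields the compact form in (ii); and part (iii) gives the $O(\eta^3)$ third moment directly.

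The main obstacle is the regularity of $\hat u(x)$ near critical points: where $\nabla f(x)=0$ the normalized direction is undefined and $h_1$ need not lie in $G^2$, so both the membership claim and the polynomial domination of $D^3 f_\gamma$ must be understood on the region where $\nabla f$ is bounded away from zero (equivalently, under the Lipschitz hypotheses of Theorem~\ref{app:n-SAM sde}, which implicitly enforce this regularity). Away from that pathology the argument is a direct transcription of the n-USAM computation under the substitution $\nabla f(x)\mapsto \hat u(x)$, so I would not re-derive the algebra but simply point to this replacement.
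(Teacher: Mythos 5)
Your proposal is correct and takes essentially the same route as the paper's proof: write $\Delta(x)=\eta\,h(x,\gamma,\rho)$, Taylor-expand with integral remainder along the deterministic direction $\nabla f(x)/\|\nabla f(x)\|$, bound the remainder by $O(\rho^2)$ via polynomial domination of $D^3 f_\gamma$, identify $h_0=-\nabla f_\gamma(x)$ and $h_1=-\nabla^2 f_\gamma(x)\,\nabla f(x)/\|\nabla f(x)\|$, and invoke Lemma~\ref{general discrete lemma}. Your closing caveat about the singularity of the normalized direction at critical points is a genuine issue that the paper's own proof silently skips over (it asserts $h_1\in G^2$ without comment), so your version is, if anything, slightly more careful.
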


\begin{proof}
The n‐SAM update is
\[
x_{k+1}
= x_k 
\;-\;\eta\,\nabla f_{\gamma_k}\Bigl(x_k + \rho\,\tfrac{\nabla f(x_k)}{\|\nabla f(x_k)\|}\Bigr),
\]
so
\[
\Delta(x)
= x_{k+1}-x_k
= \eta\,h\bigl(x,\gamma,\rho\bigr),
\]
with
\[
h(x,\gamma,\rho)
:=-\,\nabla f_{\gamma}\bigl(x + \rho\,u(x)\bigr),
\qquad
u(x):=\frac{\nabla f(x)}{\|\nabla f(x)\|}.
\]
By Taylor’s theorem with integral remainder \citep{folland2005higher}, for each \(\gamma\) and writing \(u(x)=\nabla f(x)/\|\nabla f(x)\|\), we have
\[
\nabla f_{\gamma}\bigl(x + \rho\,u(x)\bigr)
= \nabla f_{\gamma}(x)
+ \rho\,\nabla^2 f_{\gamma}(x)\,u(x)
+ R(x,\gamma,\rho),
\]
where
\[
R(x,\gamma,\rho)
= \int_{0}^{1}
(1 - t)\;
D^3 f_{\gamma}\bigl(x + t\,\rho\,u(x)\bigr)
\bigl[\rho\,u(x),\,\rho\,u(x)\bigr]
\,dt.
\]
Here \(D^3 f_{\gamma}(y)\) denotes the third‐order tensor of partial derivatives of \(f_{\gamma}\) at \(y\), and \(D^3f_{\gamma}(y)[v,w]\) its bilinear action on vectors \(v,w\).

Because \(f_{\gamma}\in G^3\), there exists a polynomially bounded function \(K(x) \in G\) such that
\[
\bigl\|D^3 f_{\gamma}(y)\bigr\|
\;\le\;
K(x),
\quad
\forall\,y\;\text{with}\;\|y - x\|\le \rho\,\|u(x)\|.
\]
Hence
\[
\|R(x,\gamma,\rho)\|
\le
\int_{0}^{1}(1-t)\,K(x)\,\|\rho\,u(x)\|^2\,dt
= O(\rho^2),
\]
uniformly in \(\gamma\).
Hence
\[
h(x,\gamma,\rho)
= h_0(x,\gamma)
+ \rho\,h_1(x,\gamma)
+ O(\rho^2),
\]
with
\[
h_0(x,\gamma):=-\nabla f_\gamma(x),
\qquad
h_1(x,\gamma):=-\,\nabla^2 f_\gamma(x)\,\frac{\nabla f(x)}{\|\nabla f(x)\|}.
\]
By Assumptions \ref{assumption:gaussian} and \ref{assump:fG}, each $h_0,h_1\in G^2$ and
\[
\E[h_0(x,\gamma)] = -\nabla f(x),
\qquad
\E[h_1(x,\gamma)] = -\,\nabla^2 f(x)\,\frac{\nabla f(x)}{\|\nabla f(x)\|}.
\]
The same application of Lemma~\ref{general discrete lemma} then yields the moment expansions (i)–(iii) up to order $\eta^1,\rho^1$ as stated.
\end{proof}

In Lemma~\ref{lemma:n-sam new}, we derived one-step moment estimates for the n-SAM discrete algorithm and, via Lemma~\ref{general continuous lemma}, for its corresponding SDE update~\eqref{app:n-sam update}. These estimates demonstrate that the first- and second-order moments satisfy the matching conditions of Theorem~\ref{theorem general adaption}.  Together with the uniform moment bounds from Lemma~\ref{lem:moment-bound}, we are now ready to establish the main weak-approximation theorem for n-SAM.

\begin{theorem}[n-SAM SDE]\label{app:n-SAM sde}
Under Assumptions~\ref{assumption:gaussian} and~\ref{assump:fG}, let $0<\eta<1$, $T>0$, and $N=\lfloor T/\eta\rfloor$. Denote by $\{x_k\}_{k=0}^N$ the n-SAM iterates in \eqref{n-sam}, and let $\{X_t\}_{t\in[0,T]}$ be the solution of the SDE \eqref{appendix n-SAM sde}. Suppose:
\begin{enumerate}[(i)]
  \item The functions
    \[
      \nabla f^{\mathrm{n\text{-}SAM}}
      \;=\;\nabla\Bigl(f+\rho\|\nabla f\|\Bigr)
      \quad\text{and}\quad
      \sqrt{\Sigma^{\mathrm{n\text{-}SAM}}}
    \]
    are Lipschitz on $\mathbb{R}^d$.
  \item The mapping
    \[
      h_\gamma(x)
      \;=\;
      -\nabla f_\gamma\bigl(x+\rho\,\frac{\nabla f(x)}{\|\nabla f(x)\|}\bigr)
    \]
    satisfies, almost surely, the Lipschitz condition
    \[
      \|\nabla h_\gamma(x)-\nabla h_\gamma(y)\|
      \;\le\;
      L_\gamma\,\|x-y\|,
      \qquad
      \forall\,x,y\in\mathbb{R}^d,
    \]
    where $L_\gamma>0$ a.s.\ and $\mathbb{E}[L_\gamma^m]<\infty$ for every $m\ge1$.
\end{enumerate}
Then $\{X_t:t\in[0,T]\}$ is an order-$(1,1)$ weak approximation of $\{x_k\}$, namely: for each $g\in G^2$, there exists a constant $C>0$, independent of~$\eta$, such that
\[
  \max_{0\le k\le N}
  \Bigl|\,
    \mathbb{E}\bigl[g(x_k)\bigr]
    -\mathbb{E}\bigl[g(X_{k\eta})\bigr]
  \Bigr|
  \;\le\;
  C\,\bigl(\eta+\rho^2\bigr).
\]
\end{theorem}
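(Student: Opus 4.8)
The plan is to reduce the statement to the general two-parameter weak-approximation result, Theorem~\ref{theorem general adaption}, instantiated at $\alpha=\beta=1$, following verbatim the template already used for n-USAM (Theorem~\ref{app:n-usam sde}) and m-USAM (Theorem~\ref{app:m-usam sde}). Two inputs must be supplied to that theorem: (a) existence and uniqueness of a strong solution of \eqref{appendix n-SAM sde} with uniformly bounded moments, and (b) agreement of the first two one-step moments of the discrete and continuous increments up to the required order, together with the third-moment bound.

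First I would dispatch the regularity input. Hypothesis~(i) states that $\nabla f^{n\text{-}SAM}$ and $\sqrt{\Sigma^{n\text{-}SAM}}$ are globally Lipschitz; global Lipschitzness forces the usual linear-growth bound, so Theorem~\ref{thm:exist-unique-sde} furnishes a unique strong solution on $[0,T]$, and Lemma~\ref{lem:moment-bound} yields the uniform moment bound $\E\|x_k\|^{2m}\le K_2(x)$ required in condition~(ii) of Theorem~\ref{theorem general adaption}. The almost-sure Lipschitz hypothesis~(ii) on $h_\gamma$, with all moments of $L_\gamma$ finite, is exactly what propagates this bound to the discrete iterates.

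The substance is the moment matching. I would feed the drift data $b_0=-\nabla f$, $b_1=-\nabla^2 f\,\tfrac{\nabla f}{\|\nabla f\|}$ and the diffusion $\sigma_0\sigma_0^\top=\Sigma^{n\text{-}SAM}$ into the continuous one-step Lemma~\ref{general continuous lemma}, and compare term by term against the discrete expansions of Lemma~\ref{lemma:n-sam new}. The first moments agree to $O(\eta^2)+O(\eta\rho^2)$ because $\E[h_0]=-\nabla f=b_0$ and $\E[h_1]=-\nabla^2 f\,\tfrac{\nabla f}{\|\nabla f\|}=b_1$. For the second moments the leading $\eta^2$ term matches since $b_0^{(i)}b_0^{(j)}=\partial_i f\,\partial_j f$, while the $O(\eta^2\rho)$ terms match precisely because $\Sigma^{n\text{-}SAM}=\Sigma_{0,0}+\rho(\Sigma_{0,1}+\Sigma_{0,1}^\top)$ was defined so that $\sigma_0\sigma_0^\top$ reproduces both the $\Sigma_{0,0}$ piece at order $\eta^2$ and the $\Sigma_{0,1}+\Sigma_{1,0}$ piece at order $\eta^2\rho$ of the discrete expansion, and the drift cross-terms $b_0^{(i)}b_1^{(j)}+b_1^{(i)}b_0^{(j)}$ coincide on both sides. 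Both third-moment products are $O(\eta^3)$. This verifies condition~(i), and Theorem~\ref{theorem general adaption} then returns the claimed bound $C(\eta+\rho^2)$.

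The hard part is not this theorem itself — given the two one-step lemmas it is pure bookkeeping — but the derivation of Lemma~\ref{lemma:n-sam new} on which it rests, and there the only genuinely delicate point special to the normalized algorithm is controlling the Taylor remainder of $\nabla f_\gamma\bigl(x+\rho\,\tfrac{\nabla f(x)}{\|\nabla f(x)\|}\bigr)$ uniformly in $\gamma$. Here I would exploit that, unlike mini-batch or m-SAM, the perturbation direction $u(x)=\nabla f(x)/\|\nabla f(x)\|$ is \emph{deterministic}, so no awkward expectation of a random norm appears; the remainder $R(x,\gamma,\rho)$ is then bounded by $\tfrac{\rho^2}{2}\,K(x)\,\|u(x)\|^2=O(\rho^2)$ using the polynomial growth of $D^3 f_\gamma$ from Assumption~\ref{assump:fG}, with the unit-vector normalization making $\|u(x)\|$ harmless. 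The one structural caveat is that $u(x)$ and the normalized coefficients are well-defined only away from $\nabla f=0$, which is precisely what the global Lipschitz hypothesis~(i) quietly enforces. I would therefore omit the repeated algebra, exactly as in the n-USAM and m-USAM proofs.
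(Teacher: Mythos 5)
Your proposal is correct and follows essentially the same route as the paper's own proof: existence and uniqueness of the strong solution via Theorem~\ref{thm:exist-unique-sde} under the global Lipschitz hypothesis, then verification of the conditions of Theorem~\ref{theorem general adaption} by combining Lemma~\ref{general continuous lemma}, Lemma~\ref{lemma:n-sam new}, and Lemma~\ref{lem:moment-bound}. The only difference is one of exposition — you spell out the term-by-term moment matching (with $b_0=-\nabla f$, $b_1=-\nabla^2 f\,\nabla f/\|\nabla f\|$, $\sigma_0\sigma_0^\top=\Sigma^{n\text{-}SAM}$) and flag the $\nabla f\neq 0$ caveat, both of which the paper leaves implicit in its citation of the lemmas.
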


\begin{proof}
First, we verify that SDE~\eqref{appendix n-SAM sde} admits a unique strong solution.  
By assumption, both the drift and diffusion coefficients are globally Lipschitz, which in turn implies a linear‐growth condition.  
Therefore, Theorem~\ref{thm:exist-unique-sde} applies and yields the existence and uniqueness of a strong solution on $[0,T]$.

Then, by Lemmas~\ref{general continuous lemma}, \ref{lemma:n-sam new}, and~\ref{lem:moment-bound}, all the conditions of Theorem~\ref{theorem general adaption} are satisfied, and the proof is complete.
\end{proof}
\begin{remark}
    The Lipschitz conditions are to ensure that the SDE has a unique strong solution with uniformly bounded moments. It is possible to appropriately relax them if we allow weak solutions \citep{mil1986weak}.
\end{remark}

\subsection{m-SAM}
For the m-SAM algorithm, we define the continuous‐time approximation \(X_t\) as the solution to the following SDE:
\begin{equation}\label{appendix m-sam sde}
dX_t=-\nabla\bigl( f(X_t)+\frac{\rho}{m}\E\|\sum_{\substack{i \in \I, \\ |\I| = m}}\nabla f_i(X_t)\| \bigr)dt
\quad +\sqrt{\frac{m\eta}{|\gamma|}} \left(\Sigma^{m-SAM}(X_t)\right)^{\frac{1}{2}} dW_t,
\end{equation}
where 
\begin{equation}\label{sigma m-sam}
    \Sigma^{m-SAM}(X_t):=\Sigma_{0,0}(X_t)+\rho (\Sigma_{0,1}(X_t)+\Sigma_{0,1}(X_t)^\top),
\end{equation}
\[
\Sigma_{0,0}(X_t) := \mathbb{E} \left[ 
\bigl( \nabla f_\I(X_t) - \nabla f(X_t) \bigr) 
\bigl( \nabla f_\I(X_t) - \nabla f(X_t) \bigr)^\top 
\right],
\]

\[\Sigma_{0,1}(X_t):=\mathbb{E} \bigl[ (\nabla f_\I(X_t) - \nabla f(X_t)) \cdot 
\bigl( \nabla^2 f_\I(X_t) \frac{ \nabla f_\I(X_t)}{\| \nabla f_\I(X_t)\|} 
- \mathbb{E} [ \nabla^2 f_\I(X_t)\frac{ \nabla f_\I(X_t)}{\| \nabla f_\I(X_t)\|} ] \bigr) ^\top\bigr].\]

We begin by deriving, via the following lemma, a one‐step error estimate for the m-sam discrete algorithm, which will be used to prove the main approximation theorem.

\begin{lemma}[One‐step moment estimates for m‐SAM up to $\eta^1,\rho^1$]\label{lemma:m‐sam new}
Under Assumptions \ref{assumption:gaussian} and \ref{assump:fG}, define
\[
\partial_{i} f^{m\text{-}SAM}(x)
\;:=\;
\partial_{i} f(x)
\;+\;\rho\,
\E\Bigl[
  \sum_{j=1}^d \partial^2_{ij}f_{\I}(x)\,
  \frac{\partial_{j}f_{\I}(x)}{\|\nabla f_{\I}(x)\|}
\Bigr].
\]
Let $\Delta(x)$ be the one‐step increment defined in \eqref{delta}. Then:
\begin{enumerate}[(i)]
  \item 
  $\displaystyle 
    \E\bigl[\Delta_{(i)}(x)\bigr]
    = -\,\partial_{i}f^{m\text{-}SAM}(x)\,\eta
      + O(\eta\,\rho^2).
  $
  \item 
  $\displaystyle 
    \E\bigl[\Delta_{(i)}(x)\,\Delta_{(j)}(x)\bigr]
    = \eta^2\Bigl(
        \partial_i f(x)\,\partial_j f(x)
        + \Sigma^{m\text{-}SAM}_{(ij)}(x)
      \Bigr)
    \;+\;\eta^2\,\rho\,
    \Bigl(
      \partial_i f(x)\;\E\Bigl[\sum_{l=1}^d \partial^2_{j l}f_{\I}(x)\,
        \frac{\partial_l f_{\I}(x)}{\|\nabla f_{\I}(x)\|}\Bigr]
      + \partial_j f(x)\;\E\Bigl[\sum_{l=1}^d \partial^2_{i l}f_{\I}(x)\,
        \frac{\partial_l f_{\I}(x)}{\|\nabla f_{\I}(x)\|}\Bigr]
    \Bigr)
    + O(\eta^2\,\rho^2).
  $
  \item 
  $\displaystyle \E\Bigl[\prod_{j=1}^3\bigl|\Delta_{(i_j)}(x)\bigr|\Bigr]
    = O(\eta^3).
  $
\end{enumerate}
\end{lemma}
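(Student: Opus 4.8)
The plan is to follow verbatim the template established in the proofs of Lemma~\ref{lemma:n-sam new} (which handles the gradient normalization) and Lemma~\ref{lemma:m-usam new} (which handles the micro-batch averaging), since m-SAM is precisely the composition of these two features. First I would write the one-step increment as $\Delta(x) = x_{k+1}-x_k = \eta\,h(x,\gamma,\rho)$, where
\[
h(x,\gamma,\rho) := -\frac{m}{|\gamma|}\sum_{\substack{\I_j\subset\gamma,\,|\I_j|=m}}\nabla f_{\I_j}\bigl(x + \rho\,u_{\I_j}(x)\bigr),\qquad u_{\I_j}(x):=\frac{\nabla f_{\I_j}(x)}{\|\nabla f_{\I_j}(x)\|}.
\]
The goal is to extract the expansion $h = h_0 + \rho\,h_1 + O(\rho^2)$ with coefficients $h_0,h_1\in G^2$, after which Lemma~\ref{general discrete lemma} supplies the three moment identities mechanically.

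Next I would Taylor-expand each micro-batch summand in $\rho$ about $x$ via Taylor's theorem with integral remainder, exactly as in the n-SAM proof. For each $\I_j$,
\[
\nabla f_{\I_j}\bigl(x+\rho\,u_{\I_j}(x)\bigr) = \nabla f_{\I_j}(x) + \rho\,\nabla^2 f_{\I_j}(x)\,u_{\I_j}(x) + R_{\I_j}(x,\rho).
\]
The key simplification is that the perturbation direction is a \emph{unit} vector, $\|u_{\I_j}(x)\|=1$, so under Assumption~\ref{assump:fG} the remainder is bounded by $\|R_{\I_j}(x,\rho)\| \le \tfrac12 K(x)\,\rho^2 = O(\rho^2)$ uniformly over micro-batches, with $K\in G$ controlling the third derivatives. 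Summing over the disjoint $\I_j$ then yields
\[
h_0(x,\gamma) = -\frac{m}{|\gamma|}\sum_{\I_j}\nabla f_{\I_j}(x),\qquad h_1(x,\gamma) = -\frac{m}{|\gamma|}\sum_{\I_j}\nabla^2 f_{\I_j}(x)\,u_{\I_j}(x).
\]

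I would then verify $h_0,h_1\in G^2$ and compute the required expectations. Because each micro-batch gradient is unbiased for $\nabla f$ and the $\I_j$ partition $\gamma$, averaging gives $\E[h_0(x,\gamma)] = -\nabla f(x)$; similarly $\E[h_1(x,\gamma)] = -\E\bigl[\nabla^2 f_{\I}(x)\,\nabla f_{\I}(x)/\|\nabla f_{\I}(x)\|\bigr]$, which matches the definition of $\partial_i f^{m\text{-}SAM}$ in the statement. Feeding these $h_0,h_1$ and their covariance blocks $\Sigma_{0,0},\Sigma_{0,1}$ into Lemma~\ref{general discrete lemma} reproduces the definition of $\Sigma^{m\text{-}SAM}$ and delivers claims (i)–(iii) directly.

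The main obstacle is the normalization. Unlike the USAM variants, $h_1$ carries the factor $u_{\I_j}(x) = \nabla f_{\I_j}(x)/\|\nabla f_{\I_j}(x)\|$, which is smooth only away from the micro-batch critical set $\{\nabla f_{\I_j}(x)=0\}$; establishing $h_1\in G^2$ with polynomially controlled derivatives therefore requires the same non-degeneracy caveat implicitly invoked for n-SAM, namely that each per-micro-batch gradient norm stays bounded away from zero on the region of interest so that the division is $G^2$-smooth. Once this regularity is granted, the remaining steps are the routine moment-matching computations of Lemma~\ref{general discrete lemma}, identical in form to those in Lemmas~\ref{lemma:n-sam new} and~\ref{lemma:m-usam new}.
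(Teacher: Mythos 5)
Your proposal matches the paper's proof essentially step for step: the same decomposition $\Delta(x)=\eta\,h(x,\gamma,\rho)$, the same Taylor expansion with integral remainder exploiting that the perturbation direction $u_{\I_j}(x)$ is a unit vector, the same coefficients $h_0,h_1$, and the same final appeal to Lemma~\ref{general discrete lemma}. Your closing caveat about the normalization --- that $h_1\in G^2$ genuinely requires the micro-batch gradients to stay bounded away from zero --- is a point the paper's own proof passes over silently (it simply asserts $h_0,h_1\in G^2$ from the assumptions), so flagging it is a small improvement rather than a deviation.
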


\begin{proof}
Recall that the m-SAM update is
\[
 x_{k+1} = x_k - \frac{\eta\,m}{|\gamma|}\sum_{\substack{\I_j\subset\gamma_k,\,|\I_j|=m}}
 \nabla f_{\I_j}\Bigl(x_k + \rho\,\frac{\nabla f_{\I_j}(x_k)}{\|\nabla f_{\I_j}(x_k)\|}\Bigr),
\]
so the one‐step increment
\[
 \Delta(x) = x_{k+1}-x_k = \eta\,h(x,\gamma,\rho),
\]
where
\[
 h(x,\gamma,\rho)
 := -\frac{m}{|\gamma|}\sum_{\substack{\I_j\subset\gamma_k,\,|\I_j|=m}}
    \nabla f_{\I_j}\Bigl(x + \rho\,\frac{\nabla f_{\I_j}(x)}{\|\nabla f_{\I_j}(x)\|}\Bigr).
\]
By Taylor’s theorem with integral remainder \citep{folland2005higher}, for each subset index \(\I_j\),
\[
 \nabla f_{\I_j}\Bigl(x + \rho\,\tfrac{\nabla f_{\I_j}(x)}{\|\nabla f_{\I_j}(x)\|}\Bigr)
 = \nabla f_{\I_j}(x)
   + \rho\,\nabla^2 f_{\I_j}(x)\,\frac{\nabla f_{\I_j}(x)}{\|\nabla f_{\I_j}(x)\|}
   + R(x,\gamma,\rho),
\]
where
\[
 R(x,\gamma,\rho)
 = \int_{0}^{1}(1 - t)\;
   D^3 f_{\I_j}\Bigl(x + t\,\rho\,\tfrac{\nabla f_{\I_j}(x)}{\|\nabla f_{\I_j}(x)\|}\Bigr)
   \bigl[\rho\,\tfrac{\nabla f_{\I_j}(x)}{\|\nabla f_{\I_j}(x)\|},\,\rho\,\tfrac{\nabla f_{\I_j}(x)}{\|\nabla f_{\I_j}(x)\|}\bigr]
   \,dt.
\]
Here \(D^3 f_{\I_j}(y)\) is the third‐order derivative tensor of \(f_{\I_j}\) at \(y\), and \(D^3f_{\I_j}(y)[u,v]\) its action on \((u,v)\).

Since \(f_{\I_j}\in G^3\), there is \(K(x)\in G\) polynomially bounded so that
\[
 \|D^3 f_{\I_j}(y)\|\le K(x)
 \quad\text{whenever}\quad
 \|y - x\|\le\rho\,\Bigl\|\tfrac{\nabla f_{\I_j}(x)}{\|\nabla f_{\I_j}(x)\|}\Bigr\|.
\]
Hence
\[
 \|R(x,\gamma,\rho)\|
 \le \int_{0}^{1}(1-t)\,K(x)\,\|\rho\,\tfrac{\nabla f_{\I_j}(x)}{\|\nabla f_{\I_j}(x)\|}\|^2\,dt
 = \tfrac12K(x)\,\rho^2
 = O(\rho^2),
\]
uniformly in \(\gamma\).  Thus a Taylor expansion in \(\rho\) gives
\[
 h(x,\gamma,\rho)
 = h_0(x,\gamma) + \rho\,h_1(x,\gamma) + O(\rho^2),
\]
with
\[
 h_0(x,\gamma)
 := -\frac{m}{|\gamma|}\sum_{\substack{\I_j\subset\gamma_k,\,|\I_j|=m}}
    \nabla f_{\I_j}(x),
 \quad
 h_1(x,\gamma)
 := -\frac{m}{|\gamma|}\sum_{\substack{\I_j\subset\gamma_k,\,|\I_j|=m}}
    \nabla^2 f_{\I_j}(x)\,\frac{\nabla f_{\I_j}(x)}{\|\nabla f_{\I_j}(x)\|}.
\]
By Assumptions~\ref{assumption:gaussian} and~\ref{assump:fG}, each \(h_0,h_1\in G^2\), and
\[
 \E[h_0(x,\gamma)] = -\nabla f(x),
 \quad
 \E[h_1(x,\gamma)] = -\E\Bigl[\nabla^2 f_{\I_j}(x)\,\frac{\nabla f_{\I_j}(x)}{\|\nabla f_{\I_j}(x)\|}\Bigr].
\]
Applying Lemma~\ref{general discrete lemma} to these \(h_0,h_1\) yields exactly the three moment estimates up to \(\eta^1,\rho^1\) as claimed.
\end{proof}

In Lemma~\ref{lemma:m‐sam new}, we derived one-step moment estimates for the m-SAM discrete algorithm and, via Lemma~\ref{general continuous lemma}, for its corresponding SDE update~\eqref{app:m-sam update}. These estimates demonstrate that the first- and second-order moments satisfy the matching conditions of Theorem~\ref{theorem general adaption}.  Together with the uniform moment bounds from Lemma~\ref{lem:moment-bound}, we are now ready to establish the main weak-approximation theorem for m-SAM.

\begin{theorem}[m-SAM SDE]\label{app:m-sam sde}
Under Assumptions~\ref{assumption:gaussian} and~\ref{assump:fG}, let $0<\eta<1$, $T>0$, and $N=\lfloor T/\eta\rfloor$. Denote by $\{x_k\}_{k=0}^N$ the m-SAM iterates in \eqref{m-sam}, and let $\{X_t\}_{t\in[0,T]}$ be the solution of the SDE \eqref{appendix m-sam sde}. Suppose:
\begin{enumerate}[(i)]
  \item The functions
    \[
      \nabla f^{\mathrm{m\text{-}SAM}}
      \;=\nabla\bigl( f+\frac{\rho}{m}\E\|\sum_{\substack{i \in \I, \\ |\I| = m}}\nabla f_i\| \bigr)
      \quad\text{and}\quad
      \sqrt{\Sigma^{\mathrm{m\text{-}SAM}}}
    \]
    are Lipschitz on $\mathbb{R}^d$.
  \item The mapping
    \[
      h_\gamma(x)
      \;=-\frac{m}{|\gamma|}\sum_{\substack{\I_j\subset\gamma,|\I_j|=m}}
\nabla f_{\I_j}\bigl(x + \rho\,\frac{\nabla f_{\I_j}(x)}{\|\nabla f_{\I_j}(x)\|}\bigr)
    \]
    satisfies, almost surely, the Lipschitz condition
    \[
      \|\nabla h_\gamma(x)-\nabla h_\gamma(y)\|
      \;\le\;
      L_\gamma\,\|x-y\|,
      \qquad
      \forall\,x,y\in\mathbb{R}^d,
    \]
    where $L_\gamma>0$ a.s.\ and $\mathbb{E}[L_\gamma^m]<\infty$ for every $m\ge1$.
\end{enumerate}
Then $\{X_t:t\in[0,T]\}$ is an order-$(1,1)$ weak approximation of $\{x_k\}$, namely: for each $g\in G^2$, there exists a constant $C>0$, independent of~$\eta$, such that
\[
  \max_{0\le k\le N}
  \Bigl|\,
    \mathbb{E}\bigl[g(x_k)\bigr]
    -\mathbb{E}\bigl[g(X_{k\eta})\bigr]
  \Bigr|
  \;\le\;
  C\,\bigl(\eta+\rho^2\bigr).
\]
\end{theorem}

\begin{proof}
First, we verify that SDE~\eqref{appendix m-sam sde} admits a unique strong solution.  
By assumption, both the drift and diffusion coefficients are globally Lipschitz, which in turn implies a linear‐growth condition.  
Therefore, Theorem~\ref{thm:exist-unique-sde} applies and yields the existence and uniqueness of a strong solution on $[0,T]$.

Then, by Lemmas~\ref{general continuous lemma}, \ref{lemma:m‐sam new}, and~\ref{lem:moment-bound}, all the conditions of Theorem~\ref{theorem general adaption} are satisfied, and the proof is complete.
\end{proof}
\begin{remark}
    The Lipschitz conditions are to ensure that the SDE has a unique strong solution with uniformly bounded moments. It is possible to appropriately relax them if we allow weak solutions \citep{mil1986weak}.
\end{remark}

\section{Auxiliary Lemmas}
\begin{theorem}[Existence and Uniqueness of Strong Solutions \citep{evans2012introduction}]\label{thm:exist-unique-sde}
Let \(b:\mathbb{R}^d\to\mathbb{R}^d\) and \(\sigma:\mathbb{R}^d\to\mathbb{R}^{d\times m}\) be measurable functions satisfying:
\begin{enumerate}[(i)]
  \item \textbf{Global Lipschitz.} There exists a constant \(L>0\) such that
  \[
    \|b(x)-b(y)\| \;+\;\|\sigma(x)-\sigma(y)\|
    \;\le\; L\,\|x-y\|,\quad \forall\,x,y\in\mathbb{R}^d.
  \]
  \item \textbf{Linear growth.} There exists a constant \(K>0\) such that
  \[
    \|b(x)\|^2 \;+\;\|\sigma(x)\|^2
    \;\le\; K\,\bigl(1 + \|x\|^2\bigr),
    \quad \forall\,x\in\mathbb{R}^d.
  \]
\end{enumerate}
Let \(X_0\) be an \(\mathbb{R}^d\)-valued random variable with \(\mathbb{E}[\|X_0\|^2]<\infty\).  Then the SDE
\[
  dX_t = b(X_t)\,dt + \sigma(X_t)\,dW_t,\quad X_0 \text{ given},
\]
admits a unique strong solution \(\{X_t\}_{t\ge0}\) satisfying
\[
  \mathbb{E}\Bigl[\sup_{0\le s\le T}\|X_s\|^2\Bigr] < \infty,
  \quad \forall\,T>0.
\]
\end{theorem}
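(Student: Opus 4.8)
The plan is to prove this by the classical Picard--Lindelöf (successive approximation) scheme adapted to the stochastic setting, followed by a Grönwall argument for uniqueness. First I would define the Picard iterates by $X^{(0)}_t \equiv X_0$ and
\[
X^{(n+1)}_t = X_0 + \int_0^t b\bigl(X^{(n)}_s\bigr)\,ds + \int_0^t \sigma\bigl(X^{(n)}_s\bigr)\,dW_s,
\]
each of which is a well-defined continuous adapted process. Using the linear-growth bound (ii) together with Itô's isometry to control the stochastic-integral term and Doob's $L^2$ maximal inequality to handle the running supremum, I would establish by induction a uniform a priori bound of the form $\E\bigl[\sup_{0\le s\le T}\norm{X^{(n)}_s}^2\bigr] \le C_T$ with $C_T$ independent of $n$ (here the hypothesis $\E\norm{X_0}^2<\infty$ enters). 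This guarantees that every iterate lies in the space of square-integrable continuous adapted processes.

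Second, I would establish the contraction. Setting $D^{(n)}_t := \E\bigl[\sup_{0\le s\le t}\norm{X^{(n+1)}_s - X^{(n)}_s}^2\bigr]$, I would apply the global-Lipschitz condition (i), combined with Cauchy--Schwarz on the drift integral and again Itô isometry plus Doob's inequality on the diffusion integral, to derive the recursion
\[
D^{(n)}_t \le C \int_0^t D^{(n-1)}_s\,ds,
\]
with $C=C(L,T)$. Iterating this bound yields $D^{(n)}_T \le (CT)^n\,D^{(0)}_T / n!$, whose factorial decay shows that $\{X^{(n)}\}$ is Cauchy in $L^2\bigl(\Omega; C([0,T];\R^d)\bigr)$; denote its limit by $X$. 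Passing to the limit in the integral equation---justified by the $L^2$-convergence and the continuity of $b,\sigma$---shows that $X$ solves the SDE, and the uniform a priori bound from the first step passes to the limit to give $\E\bigl[\sup_{0\le s\le T}\norm{X_s}^2\bigr]<\infty$.

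Third, for uniqueness, given two solutions $X,Y$ with the stated integrability I would form $g(t) := \E\bigl[\sup_{0\le s\le t}\norm{X_s - Y_s}^2\bigr]$ and, by the same Lipschitz / Itô-isometry / Doob estimates, obtain $g(t) \le C\int_0^t g(s)\,ds$ with $g$ finite and $g(0)=0$. Grönwall's inequality then forces $g\equiv 0$, i.e.\ $X=Y$ up to indistinguishability.

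\textbf{Main obstacle.} The delicate part throughout is controlling the stochastic integral: the drift term yields to ordinary Cauchy--Schwarz, but every estimate involving $\int_0^t \sigma(\cdot)\,dW_s$ requires Itô's isometry to convert it into a time integral of a second moment and Doob's maximal inequality to move the supremum inside the expectation without destroying the factorial structure that makes the series summable. Since the two hypotheses are exactly the global-Lipschitz and linear-growth conditions that render these tools applicable, the argument is the textbook one (cf.\ \cite{evans2012introduction}); the only care needed is to keep constants independent of $n$ and to track that the contraction factor $(CT)^n/n!$ stays summable on the fixed horizon $[0,T]$.
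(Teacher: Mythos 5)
Your proposal is correct: it is the classical Picard--iteration/Gr\"onwall argument, which is precisely the textbook proof of this result. The paper itself does not prove this theorem at all---it imports it as an auxiliary result citing \cite{evans2012introduction}---so your sketch simply reconstructs the standard argument that the cited reference contains, and it does so accurately.
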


\begin{lemma}\citep{li2019stochastic}\label{lem:moment-bound}
Let $\{x_k: k\ge0\}$ be the generalized iterations defined in \eqref{eq:general discrete-iteration}. Suppose
\[
\bigl|h(x,\gamma,\eta)\bigr|\;\le\;L_\gamma\bigl(1+\|x\|\bigr),
\]
where $L_\gamma>0$ almost surely and
\[
\mathbb{E}\bigl[L_\gamma^m\bigr]<\infty
\quad\text{for all }m\ge1.
\]
Then for any fixed $T>0$ and any $m\ge1$, the moment $\mathbb{E}\bigl[\|x_k\|^m\bigr]$ exists and is uniformly bounded in both $\eta$ and $k=0,1,\dots,N$, where $N=\lfloor T/\eta\rfloor$.
\end{lemma}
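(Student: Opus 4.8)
The plan is to derive a one‐step recursive inequality for the $m$‐th moment $\mathbb{E}\bigl[\norm{x_k}^m\bigr]$ and then close it with a discrete Grönwall argument, crucially exploiting the fact that the fresh index set $\gamma_k$ is drawn independently of the current iterate $x_k$. First I would fix an integer $m\ge 1$ and combine the recursion \eqref{eq:general discrete-iteration} with the linear‐growth hypothesis $\norm{h(x_k,\gamma_k,\eta,\rho)}\le L_{\gamma_k}\bigl(1+\norm{x_k}\bigr)$ to obtain the pathwise estimate
\[
\norm{x_{k+1}}\;\le\;\norm{x_k}+\eta\,L_{\gamma_k}\bigl(1+\norm{x_k}\bigr)
\;=\;(1+\eta L_{\gamma_k})\,\norm{x_k}+\eta L_{\gamma_k}.
\]
Finiteness of each moment I would establish in parallel by induction on $k$: since $\mathbb{E}\norm{x_0}^{2m}<\infty$ and each $L_\gamma$ has finite moments of all orders, the displayed bound propagates finiteness from step $k$ to step $k+1$.

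Next I would raise both sides to the $m$‐th power and apply the binomial theorem, giving
\[
\norm{x_{k+1}}^m\;\le\;\sum_{j=0}^{m}\binom{m}{j}(1+\eta L_{\gamma_k})^{j}\,(\eta L_{\gamma_k})^{m-j}\,\norm{x_k}^{j}.
\]
Taking expectations and using the independence of $\gamma_k$ from $x_k$, each summand factorizes as $\mathbb{E}\bigl[(1+\eta L_{\gamma_k})^{j}(\eta L_{\gamma_k})^{m-j}\bigr]\,\mathbb{E}\bigl[\norm{x_k}^{j}\bigr]$. The key observation is that the $j=m$ term contributes the coefficient $\mathbb{E}\bigl[(1+\eta L_\gamma)^m\bigr]=1+O(\eta)$, while every term with $j<m$ carries an explicit factor $(\eta L_\gamma)^{m-j}$ and is therefore $O(\eta)$; here I would use $\eta<1$ to absorb the higher powers $\eta^{m-j}$ into a single factor $\eta$, and the moment assumption $\mathbb{E}[L_\gamma^p]<\infty$ for $p\le m$ to bound all coefficients. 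Combining this with the elementary bound $\mathbb{E}\bigl[\norm{x_k}^j\bigr]\le 1+\mathbb{E}\bigl[\norm{x_k}^m\bigr]$ for $j\le m$ (Young's inequality), I obtain a constant $C_m>0$, depending only on $m$ and the moments $\{\mathbb{E}[L_\gamma^p]\}_{p\le m}$ but not on $\eta$ or $k$, such that
\[
\mathbb{E}\bigl[\norm{x_{k+1}}^m\bigr]\;\le\;(1+C_m\eta)\,\mathbb{E}\bigl[\norm{x_k}^m\bigr]+C_m\eta.
\]

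Finally I would close the recursion by setting $u_k:=\mathbb{E}\bigl[\norm{x_k}^m\bigr]+1$, which satisfies $u_{k+1}\le(1+C_m\eta)\,u_k$, and iterating to get $u_k\le(1+C_m\eta)^{k}u_0\le e^{C_m\eta k}u_0\le e^{C_m T}u_0$ for every $k\le N=\lfloor T/\eta\rfloor$, since $\eta k\le\eta N\le T$. As the right‐hand side is independent of $\eta$ and $k$, this yields the claimed uniform moment bound. The main obstacle is the bookkeeping in the binomial expansion: one must verify that, after taking expectations and isolating the $j=m$ term, the entire remainder is genuinely $O(\eta)$ with a constant independent of $\eta$. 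This is exactly where both the constraint $\eta<1$ and the all‐orders moment hypothesis on $L_\gamma$ are essential, and it is the only place where real care—rather than routine estimation—is required.
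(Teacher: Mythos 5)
Your proposal is correct, and it fills in a proof that the paper itself never gives: Lemma~\ref{lem:moment-bound} is simply imported by citation from \cite{li2019stochastic}, whose argument is essentially the same standard one you use (pathwise linear-growth recursion, binomial expansion of the $m$-th power, factorization via independence of the fresh index set $\gamma_k$ from $x_k$, then a discrete Gr\"onwall bound over $N=\lfloor T/\eta\rfloor$ steps). All the delicate points are handled properly in your write-up: the independence needed for the factorization holds because $x_k$ is measurable with respect to $\sigma(\gamma_0,\dots,\gamma_{k-1})$ and the $\gamma_k$ are i.i.d.; the constraint $\eta<1$ correctly absorbs $\eta^{m-j}$ into a single factor of $\eta$; and the substitution $u_k=\mathbb{E}\bigl[\|x_k\|^m\bigr]+1$ turns the affine recursion into a purely multiplicative one, giving a bound $e^{C_mT}u_0$ that is independent of $\eta$ and $k$, exactly as the lemma requires.
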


\section{Proof of Proposition~\ref{proposition:sam}}\label{prop proof}
We will use the following lemma on convex order to prove Proposition~\ref{proposition:sam}, whose proof can be found in classical textbooks on stochastic order, such as \cite{muller2002comparison}.
\begin{lemma}[Convex-order Monotonicity]\label{thm:convex_order_logconcave}
Let \(X_1,\dots,X_n\) be i.i.d.\ random vectors in \(\mathbb{R}^d\) with a log-concave density.  For each integer \(1\le k\le n\), define
\[
  S_k \;=\;\frac{1}{k}\sum_{i=1}^k X_i.
\]
Then for any \(1 \le k < m \le n\) and any convex function \(\phi\colon\mathbb{R}^d\to\mathbb{R}\),
\[
  \mathbb{E}\bigl[\phi(S_m)\bigr]
  \;\le\;
  \mathbb{E}\bigl[\phi(S_k)\bigr].
\]
\end{lemma}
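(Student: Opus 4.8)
The plan is to prove the convex-order inequality $\mathbb{E}[\phi(S_m)] \le \mathbb{E}[\phi(S_k)]$ for $m>k$ directly, for arbitrary $1\le k<m\le n$, via a symmetrization (subsample-averaging) argument rather than chaining through consecutive indices. The engine of the proof is a purely combinatorial identity that writes the larger-sample mean $S_m$ as a uniform average of all the size-$k$ subsample means; once that identity is in hand, convexity of $\phi$ and the i.i.d.\ hypothesis do the rest in two short steps.

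First I would establish, for fixed $1\le k<m$, the representation
\[
S_m \;=\; \frac{1}{\binom{m}{k}}\sum_{\substack{A\subseteq\{1,\dots,m\}\\ |A|=k}} \bar X_A,
\qquad \bar X_A := \frac{1}{k}\sum_{i\in A}X_i .
\]
This is verified by counting incidences: each index $i\in\{1,\dots,m\}$ belongs to exactly $\binom{m-1}{k-1}$ of the size-$k$ subsets, so the right-hand side equals
\[
\frac{1}{k}\cdot\frac{\binom{m-1}{k-1}}{\binom{m}{k}}\sum_{i=1}^m X_i
\;=\;\frac{1}{m}\sum_{i=1}^m X_i \;=\; S_m,
\]
where I use the elementary ratio $\binom{m-1}{k-1}/\binom{m}{k}=k/m$.

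Next, since $S_m$ is a finite convex (uniform) combination of the $\binom{m}{k}$ vectors $\bar X_A$, the finite-form Jensen inequality applied to the convex function $\phi$ gives the pointwise bound $\phi(S_m)\le \frac{1}{\binom{m}{k}}\sum_A \phi(\bar X_A)$. Taking expectations and invoking the i.i.d.\ (indeed merely exchangeable) structure of $X_1,\dots,X_m$ — which makes each subsample mean equal in distribution to $S_k$, so that $\mathbb{E}[\phi(\bar X_A)]=\mathbb{E}[\phi(S_k)]$ for every $A$ — collapses the average and yields
\[
\mathbb{E}[\phi(S_m)]\;\le\;\frac{1}{\binom{m}{k}}\sum_{A}\mathbb{E}[\phi(S_k)]\;=\;\mathbb{E}[\phi(S_k)],
\]
which is exactly the asserted inequality.

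The main obstacle here is not analytic but one of hypotheses and rigor. I must ensure the expectations exist so that both Jensen and the interchange of expectation with the finite sum are legitimate; it suffices that $\phi(X_1)$ be integrable (hence $\mathbb{E}|\phi(S_k)|<\infty$), which is implicit in the statement $\mathbb{E}[\phi(S_k)]$. I would also flag that the argument uses only exchangeability of the $X_i$ and convexity of $\phi$: log-concavity of the density is \emph{not} needed for this particular conclusion, so the stated hypothesis is stronger than required. Conceptually, the same inequality is the Jensen step applied to the reverse-martingale identity $\mathbb{E}\bigl[S_k \mid \sigma(S_m,S_{m+1},\dots)\bigr]=S_m$ underlying the strong law of large numbers; the subsample-averaging computation above is its finite, self-contained avatar.
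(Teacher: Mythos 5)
Your proof is correct, and it takes a different route from the paper in the trivial sense that the paper offers no proof at all: it simply defers to a classical textbook on stochastic orders (M\"uller and Stoyan, 2002). Your subsample-averaging argument therefore supplies a complete, self-contained proof of a cited result. The combinatorics check out: each index $i\in\{1,\dots,m\}$ lies in exactly $\binom{m-1}{k-1}$ of the $k$-subsets and $\binom{m-1}{k-1}/\binom{m}{k}=k/m$, so the uniform average of the subsample means $\bar X_A$ reconstructs $S_m$ exactly; pointwise (finite-form) Jensen plus exchangeability, which makes each $\bar X_A$ equal in distribution to $S_k$, then yields the inequality for arbitrary $k<m$ in one step, with no need to chain through consecutive indices. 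Your flag that log-concavity is superfluous is also right, and is the substantive point of comparison: convex-order monotonicity of sample means holds for any integrable exchangeable family, which is the level of generality at which the textbook states it, so the paper's hypothesis is stronger than needed. One consequence worth noting is that the second claim of Proposition~\ref{proposition:sam} --- that $\E\|\nabla f_{\gamma}(x)\|$ increases as $|\gamma|$ decreases --- then holds without any log-concavity assumption on the gradient distribution, since $\|\cdot\|$ is convex. On the integrability housekeeping: as stated the lemma does not explicitly assume $\phi(S_k)$ is integrable, but any convex $\phi$ on $\mathbb{R}^d$ is minorized by an affine function, so once $X_1$ has a finite first moment (automatic under log-concavity, whose densities have exponential tails) both expectations are well defined in $(-\infty,+\infty]$ and your inequality holds in that extended sense; your remark that plain integrability of $\phi(S_k)$ suffices is the honest minimal hypothesis. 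Your closing observation is also accurate: the identity $\E[S_k \mid S_m]=S_m$ (from $\E[X_i\mid S_m]=S_m$ by symmetry) plus conditional Jensen gives the same inequality, and your finite averaging identity is precisely the de-conditioned form of that reverse-martingale argument.
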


\begin{proof}[Proof of Proposition~\ref{proposition:sam}]

Lower bound:
Applying Jensen’s inequality,
\[
\|\nabla f(x)\|
= \bigl\|\mathbb{E}[\nabla f_{\gamma}(x)]\bigr\|
\leq \mathbb{E}[\|\nabla f_{\gamma}(x)\|].
\]

Upper bound: By Cauchy–Schwarz inequality,
\[
\mathbb{E}[\|\nabla f_{\gamma}(x)\|]
\leq \sqrt{\mathbb{E}[\|\nabla f_{\gamma}(x)\|^2]}
= \sqrt{\|\nabla f(x)\|^2 + \frac{\mathrm{tr}(V(x))}{|\gamma|}}.
\]
Combining both bounds completes the proof of the first statement.

For the second statement, we apply Lemma~\ref{thm:convex_order_logconcave} to the convex function \(\|\cdot\|\), which concludes the proof.
\end{proof}

\section{Additional Related Works}\label{app:rela}
\textbf{Theoretical understanding of SAM.}
Although SAM and its variants have achieved remarkable success in various practical applications \citep{Foret2021,kwon2021asam,kaddour2022flat,li2024friendly,li2024enhancing}, the theoretical understanding behind them remains limited. The pioneering work of \cite{andriushchenko2022towards} provided the first theoretical framework for understanding SAM, covering its convergence properties and implicit bias in simple network structures, while also systematically illustrating several empirical phenomena. Subsequently, \cite{si2024practical} extended the convergence analysis to various deterministic and stochastic settings. \cite{compagnoni2023sde,luo2025explicit} conducted an in-depth analysis of the dynamics of SAM using the SDE framework previously developed by \cite{li2017stochastic}, leading to a deeper understanding of its implicit bias. On the other hand, \cite{Wen2022} investigated the implicit bias of SAM by analyzing its slow ordinary differential equation (ODE) behavior near the minimizer manifold, demonstrating how SAM drifts toward flatter minima. More recently, \cite{zhou2024sharpness} studied the late-stage behavior of SAM using stability analysis, showing its advantage in escaping sharp minima.

\section{Derivation of the Finite Difference Estimator in Equation \eqref{monte-carlo-finite-diff}}
\label{app:sec mc}
We start with the first-order Taylor expansion around $x$:
\[
\frac{f_i\bigl(x + \delta\,z\bigr) - f_i(x)}{\delta}
\;=\;
\nabla f_i(x)^\top z
\;+\;
O(\delta),
\]
where $z \in \{\pm1\}^d$ is a Rademacher random vector, and $f_i$ is assumed twice differentiable so that the remainder is of order $O(\delta)$.

\paragraph{Step 1: Square both sides.}

\[
\Bigl(\tfrac{f_i(x+\delta z) - f_i(x)}{\delta}\Bigr)^2
\;=\;
\bigl(\nabla f_i(x)^\top z\bigr)^2
\;+\;
2\bigl(\nabla f_i(x)^\top z\bigr)\;O(\delta)
\;+\;
O(\delta)^2.
\]
Often we simply write this as
\[
\bigl(\nabla f_i(x)^\top z + O(\delta)\bigr)^2 
\;=\;
\bigl(\nabla f_i(x)^\top z\bigr)^2 
\;+\; O(\delta)\,\bigl(\nabla f_i(x)^\top z\bigr)
\;+\; O(\delta^2).
\]

\paragraph{Step 2: Take expectation over $z$.}

Because $z$ has independent $\{\pm1\}$ components, we have
\[
\mathbb{E}_z\Bigl[\bigl(\nabla f_i(x)^\top z\bigr)^2\Bigr]
\;=\;
\bigl\|\nabla f_i(x)\bigr\|^2
\quad
\text{(standard Rademacher property)}.
\]
Hence,
\[
\mathbb{E}_z\!\Bigl[\Bigl(\tfrac{f_i(x+\delta z)-f_i(x)}{\delta}\Bigr)^2\Bigr]
\;=\;
\mathbb{E}_z\bigl[\bigl(\nabla f_i(x)^\top z\bigr)^2\bigr]
\;+\; O(\delta^2)
\;=\;
\|\nabla f_i(x)\|^2 + O(\delta^2).
\]
Thus the mean-squared estimate of the finite difference quotient differs from $\|\nabla f_i(x)\|^2$ by an $O(\delta^2)$ bias term, implying that the estimator is approximately unbiased as $\delta \to 0$.
We compare two $d$-dimensional random vectors $z\in\mathbb{R}^d$:
\begin{itemize}
  \item \textbf{Rademacher:} each component $z_j$ is independently $\pm 1$ with probability $1/2$,
  \item \textbf{Standard Gaussian:} each component $z_j$ is i.i.d.\ $\mathcal{N}(0,1)$.
\end{itemize}
Both have $\mathbb{E}[z_j]=0$ and $\mathbb{E}[z_j^2]=1$, so $\mathbb{E}[(z^\top v)^2]=\|v\|^2$ for any $v\in\mathbb{R}^d$.  
We look at the \emph{fourth moment} $\mathbb{E}[(z^\top v)^4]$, relevant to the variance in many finite-difference or gradient estimators.

\paragraph{Rademacher case.}
Since $z_j^2 \equiv 1$,
\[
(z^\top v)^2 
= \bigl(\sum_{j=1}^d v_j z_j\bigr)^2 
= \sum_{j,k} v_j v_k \,z_j z_k.
\]
Then
\[
(z^\top v)^4 
= \Bigl(\sum_{j,k} v_j v_k\,z_j z_k\Bigr)^2 
\]
and using $\mathbb{E}[z_j^4]=1$, $\mathbb{E}[z_j^2 z_k^2]=1$ (when $j\neq k$) with zero cross terms of odd product, one obtains a relatively small constant factor.  
Detailed calculation yields
\[
\mathbb{E}\bigl[(z^\top v)^4\bigr] 
= \sum_{j=1}^d v_j^4 + 6\sum_{j<k} v_j^2 v_k^2
\le 3\,\bigl\|v\bigr\|^4 \quad(\text{for }d>1).
\]

\paragraph{Gaussian case.}
If $z \sim \mathcal{N}(0,I_d)$, then $\mathbb{E}[z_j^4]=3$ and $\mathbb{E}[z_j^2 z_k^2]=1$ for $j\neq k$. One can show
\[
\mathbb{E}[(z^\top v)^4] 
= 3\|v\|^4.
\]
Hence the constant factor in front of $\|v\|^4$ is exactly $3$ for Gaussian.

\paragraph{Conclusion.}
For both Rademacher and Gaussian vectors, $\mathbb{E}[(z^\top v)^2]= \|v\|^2$. 
However, when analyzing higher-order moments (e.g.\ $(z^\top v)^4$) that affect the variance of many finite-difference or random-direction estimators, the Rademacher distribution can yield a smaller constant factor. This often leads to reduced variance and tighter theoretical bounds for the same sample size.

\section{Derivation of the Gibbs Distribution \eqref{p gibbs}}\label{app:gibbs}
Consider the following maximization problem:
\[
\max_{P \in \Delta} 
\quad
\sum_{i \in \gamma} p_i \,\norm{\nabla f_i(x)}
\;+\;
\frac{\mathbb{H}(P)}{\lambda},
\]
where $\Delta$ is the probability simplex (i.e., $\sum_i p_i = 1$ and $p_i \ge 0$), 
$\mathbb{H}(P) = - \sum_i p_i \ln p_i$ is the entropy term, 
and $\lambda > 0$ is a given constant.

1. Construct the Lagrangian.
We introduce the constraint $\sum_i p_i = 1$ with a Lagrange multiplier $\alpha$:
\[
\mathcal{L}(P,\alpha)
\;=\;
\sum_{i} p_i \,\norm{\nabla f_i(x)}
\;+\;
\frac{1}{\lambda}\bigl(-\sum_i p_i \ln p_i \bigr)
\;+\;
\alpha\Bigl(1 - \sum_i p_i\Bigr).
\]

2. Differentiate w.r.t.\ $p_i$ and set to zero.
Taking partial derivatives with respect to $p_i$ and setting them to zero,
\[
\frac{\partial \mathcal{L}}{\partial p_i}
=
\norm{\nabla f_i(x)}
-\frac{1}{\lambda}(\ln p_i + 1)
-\alpha
= 0.
\]
Therefore,
\[
\norm{\nabla f_i(x)}
-\frac{\ln p_i + 1}{\lambda}
-\alpha
= 0
\;\;\Longrightarrow\;\;
p_i
= 
\exp\!\bigl(\lambda\,\norm{\nabla f_i(x)} + 1 - \lambda\,\alpha\bigr).
\]

3. Enforce the normalization.
The constraint $\sum_i p_i = 1$ fixes the value of $\alpha$; it amounts to one overall normalization factor in the denominator. Hence the solution takes the well-known Gibbs distribution form:
\[
p_i^*
=
\frac{\exp\bigl(\lambda\,\norm{\nabla f_i(x)}\bigr)}
     {\sum_{j} \exp\bigl(\lambda\,\norm{\nabla f_j(x)}\bigr)}.
\]

\section{Algorithm: Reweighted SAM}\label{app:alg}
\begin{algorithm}[h]
\caption{Reweighted SAM}
\label{alg:reweighted-sam}
\begin{algorithmic}[1]
\WHILE{not converged}
    \STATE Forward pass to obtain \( f_{\gamma_k}(x_k) \)
    \FOR{\( q = 1, \dots, Q \)}
        \STATE Estimate per-sample gradient norm using Eq.~\eqref{monte-carlo-finite-diff}
    \ENDFOR
    \STATE Normalize estimated per-sample gradient norm
    \STATE Compute weight \( p^* \) using Eq.~\eqref{p gibbs}
    \STATE Compute perturbation \( \epsilon_k \) using Eq.~\eqref{epsilon}
    \STATE Compute perturbed gradient \( g_k = \nabla f_{\gamma_k}(x_k + \rho \epsilon_k) \)
    \STATE Update model parameters: \( x_{k+1} = x_k - \eta g_k \)
\ENDWHILE
\end{algorithmic}
\end{algorithm}

\section{Additional Experiment Results}\label{app:section exp}
All experiments were run on NVIDIA RTX 4090 GPUs.
\begin{table}[h!]
  \centering
  \setlength{\abovecaptionskip}{0.1in}
  \setlength{\belowcaptionskip}{0.1in}
  \renewcommand{\arraystretch}{1.2}

  \begin{subtable}[b]{0.48\textwidth}
    \centering
    \resizebox{\textwidth}{!}{%
      \begin{tabular}{|c|c|c|}
        \hline
        \textbf{Algorithm}       & \textbf{Test Accuracy} & \textbf{Time/Epoch (s)} \\ \hline
        Mini-batch SAM           & 78.90 ± 0.27\%         & 13.56                  \\ \hline
                RW-SAM           & 79.31 $\pm$ 0.28\%       & 15.21                  \\ \hline
        m-SAM ($m=8$)            & 80.72 ± 0.12\%         & 175.45                 \\ \hline
        m-SAM ($m=16$)           & 80.47 ± 0.09\%         & 92.22                  \\ \hline
        m-SAM ($m=32$)           & 80.02 ± 0.06\%         & 49.87                  \\ \hline
        m-SAM ($m=64$)           & 79.35 ± 0.11\%         & 26.44                  \\ \hline
        n-SAM                    & 78.15 ± 0.19\%         & ---                    \\ \hline
      \end{tabular}
    }
    \label{tab:observations}
  \end{subtable}
  \hfill
  \begin{subtable}[b]{0.48\textwidth}
    \centering
    \resizebox{\textwidth}{!}{%
      \begin{tabular}{|c|c|c|}
        \hline
        \textbf{Algorithm}       & \textbf{Test Accuracy} & \textbf{Time/Epoch (s)} \\ \hline
        Mini-batch USAM          & 78.94 ± 0.45\%         & 12.98                  \\ \hline
        m-USAM ($m=8$)           & 80.66 ± 0.04\%         & 173.77                 \\ \hline
        m-USAM ($m=16$)          & 80.46 ± 0.07\%         & 90.86                \\ \hline
        m-USAM ($m=32$)          & 80.02 ± 0.09\%         & 47.09                  \\ \hline
        m-USAM ($m=64$)          & 79.16 ± 0.04\%         & 24.15                 \\ \hline
        n-USAM                   & 78.63 ± 0.06\%         & ---                    \\ \hline
      \end{tabular}
    }
    \label{tab:observations-usam}
  \end{subtable}

  \caption{Left: performance and time cost of SAM, RW-SAM, m-SAM (with varying $m$), and n-SAM on CIFAR-100. Right: performance and time cost of USAM, m-USAM (with varying $m$), and n-USAM; with ResNet-18 on CIFAR-100.}
  \label{tab:combined-observations}
\end{table}

\begin{table}[ht]
  \centering
  \caption{wall‐clock time overhead of RW-SAM}
  \label{time}
  \begin{tabular}{lccc}
    \toprule
           & \textbf{ResNet-18} & \textbf{ResNet-50} & \textbf{WideResNet-28-10} \\
    \midrule
    SAM    &  13.6              &  43.0              &  97.7                    \\
    RW-SAM &  15.2              &  50.9              & 112.4                    \\
    \bottomrule
  \end{tabular}
\end{table}

We present the experimental results of applying the proposed reweighting strategy to ASAM \citep{kwon2021asam} in Table~\ref{tab:asam-results}. The results demonstrate consistent improvements, and further investigation into the effectiveness of applying the reweighting strategy to different SAM variants remains an interesting direction for future work.

\begin{table}[h!]
\centering
\setlength{\abovecaptionskip}{0.1in}
\setlength{\belowcaptionskip}{0.1in}
\renewcommand{\arraystretch}{1.2}
\caption{Test accuracy (\%) comparison between ASAM and RW-ASAM on CIFAR-10 and CIFAR-100.}
\label{tab:asam-results}
\resizebox{0.7\textwidth}{!}{
\begin{tabular}{c|cc|cc}
\hline
{\textbf{Method}} 
& \multicolumn{2}{c|}{\textbf{CIFAR-10}} 
& \multicolumn{2}{c}{\textbf{CIFAR-100}} \\
\cline{2-5}
 & \textbf{ResNet-18} & \textbf{ResNet-50} 
 & \textbf{ResNet-18} & \textbf{ResNet-50} \\
\hline
ASAM     & 95.86 $\pm$ 0.14 & 96.12 $\pm$ 0.23 & 79.17 $\pm$ 0.14 & 80.27 $\pm$ 0.33 \\
RW-ASAM  & \textbf{96.02} $\pm$ 0.08 & \textbf{96.43} $\pm$ 0.17 & \textbf{79.46} $\pm$ 0.25 & \textbf{80.65} $\pm$ 0.16 \\
\hline
\end{tabular}
}
\end{table}

\begin{table}[h!]
\centering
\setlength{\abovecaptionskip}{0.1in}
\setlength{\belowcaptionskip}{0.1in}
\renewcommand{\arraystretch}{1.2}
\caption{Hyperparameter settings for fine-tuning DistilBERT on GLUE tasks.}
\label{tab:glue-hyperparams}
\resizebox{\textwidth}{!}{
\begin{tabular}{c|ccccccccc}
\hline
\textbf{Task} & \textbf{CoLA} & \textbf{MNLI} & \textbf{MRPC} & \textbf{QNLI} & \textbf{QQP} & \textbf{RTE} & \textbf{SST-2} & \textbf{STS-B} & \textbf{WNLI} \\
\hline
\textbf{Batch size} & 32 & 64 & 32 & 64 & 64 & 32 & 64 & 32 & 32 \\
\textbf{Learning rate} & 2e-5 & 3e-5 & 2e-5 & 3e-5 & 3e-5 & 2e-5 & 3e-5 & 2e-5 & 2e-5 \\
\textbf{Epochs} & 8 & 3 & 8 & 3 & 3 & 8 & 3 & 8 & 8 \\
\hline
\textbf{LR scheduler} & \multicolumn{9}{c}{Linear} \\
\textbf{Warmup ratio} & \multicolumn{9}{c}{0.1} \\
\textbf{Max sequence length} & \multicolumn{9}{c}{256} \\
\textbf{$\rho$ (for SAM and RW-SAM)} & \multicolumn{9}{c}{0.05} \\
\hline
\end{tabular}
}
\end{table}


\end{document}